\def\eqref#1{equation~\ref{#1}}
\def\ceil#1{\lceil #1 \rceil}
\def\1{\bm{1}}
\def\rvx{{\mathbf{x}}}
\def\rvz{{\mathbf{z}}}
\def\vone{{\bm{1}}}
\def\va{{\bm{a}}}
\def\vb{{\bm{b}}}
\def\vx{{\bm{x}}}
\def\vz{{\bm{z}}}
\def\mW{{\bm{W}}}
\DeclareMathAlphabet{\mathsfit}{\encodingdefault}{\sfdefault}{m}{sl}
\SetMathAlphabet{\mathsfit}{bold}{\encodingdefault}{\sfdefault}{bx}{n}
\def\gF{{\mathcal{F}}}
\def\gG{{\mathcal{G}}}
\def\gH{{\mathcal{H}}}
\def\gS{{\mathcal{S}}}
\def\gT{{\mathcal{T}}}
\def\gV{{\mathcal{V}}}
\def\gX{{\mathcal{X}}}
\def\gY{{\mathcal{Y}}}
\def\gZ{{\mathcal{Z}}}
\def\sF{{\mathbb{F}}}
\def\sR{{\mathbb{R}}}
\newcommand{\E}{\mathbb{E}}
\theoremstyle{plain}
\newtheorem{theorem}{Theorem}[section]
\newtheorem{lemma}[theorem]{Lemma}
\newtheorem{corollary}[theorem]{Corollary}
\newtheorem{example}[theorem]{Example}
\theoremstyle{definition}
\newtheorem{definition}[theorem]{Definition}
\theoremstyle{remark}
\newtheorem{remark}[theorem]{Remark}
\newcommand{\eg}{{e.g.}}
\newcommand{\ie}{{i.e.}}
\renewcommand{\eqref}[1]{(\ref{#1})}
\newcommand{\supp}{\mathrm{supp}}
\renewcommand{\deg}{\mathrm{deg}}
\renewcommand{\inf}{\mathrm{Inf}}
\newcommand{\prob}{\mathbf{Pr}}
\newcommand{\impdeg}{\widehat{\mathrm{deg}}}
\newcommand{\hmin}{\mathcal{H}_\mathrm{min}}
\icmltitlerunning{When Do Neural Networks Learn World Models?}
\begin{document}

\twocolumn[
\icmltitle{When Do Neural Networks Learn World Models?}



\icmlsetsymbol{equal}{*}

\begin{icmlauthorlist}
\icmlauthor{Tianren Zhang}{thu}
\icmlauthor{Guanyu Chen}{thu}
\icmlauthor{Feng Chen}{thu}
\end{icmlauthorlist}

\icmlaffiliation{thu}{Department of Automation, Tsinghua University, Beijing, China}

\icmlcorrespondingauthor{Feng Chen}{chenfeng@mail.tsinghua.edu.cn}

\icmlkeywords{World Models, Representation Learning, Learning Theory, Machine Learning}

\vskip 0.3in
]



\printAffiliationsAndNotice{}  

\begin{abstract}
Humans develop \emph{world models} that capture the underlying generation process of data. Whether neural networks can learn similar world models remains an open problem. In this work, we present the first theoretical results for this problem, showing that in a \emph{multi-task} setting, models with a \emph{low-degree bias} provably recover latent data-generating variables under mild assumptions--even if proxy tasks involve complex, non-linear functions of the latents. However, such recovery is sensitive to model architecture. Our analysis leverages Boolean models of task solutions via the Fourier-Walsh transform and introduces new techniques for analyzing invertible Boolean transforms, which may be of independent interest. We illustrate the algorithmic implications of our results and connect them to related research areas, including self-supervised learning, out-of-distribution generalization, and the linear representation hypothesis in large language models.
\end{abstract}

\section{Introduction}
\label{sec:intro}

Humans develop internal models of the world, extracting core concepts that generate perceptual data~\citep{ha_world_2018}. Can neural networks do the same? With recent advances in large language models (LLMs), this question has garnered increasing attention~\citep{bender_dangers_2021,mitchell_ai_2023}. Understanding if and how neural networks learn human-like world models is crucial for building AI systems that are robust, fair, and aligned with human values~\citep{hendrycks_overview_2023}.

Empirical findings on world model learning have been mixed. Some studies suggest that medium-sized neural networks~\citep{mikolov_linguistic_2013} and LLMs~\citep{li_emergent_2023,bricken2023towards,gurnee_language_2024} learn abstract and interpretable features, indicating a non-trivial representation of data generation. Others, however, report a marked decline in LLM performance on novel tasks~\citep{wu_reasoning_2023,berglund_reversal_2024,mirzadeh_gsm-symbolic_2024}, implying a lack of genuine world representations that enable human-level generalization in out-of-distribution settings.

Despite ongoing research, the theoretical foundations of learning world models remain unclear. Notably, even the term ``world model'' lacks a precise definition. This gives rise to several fundamental questions: what does it mean for neural networks to learn world models? When and why can they do so? More fundamentally, what constitutes a bona fide world model?

The \textbf{goal} of this work is to address these problems by introducing a formal framework for world model learning and presenting the first theoretical results in this area. Following the spirit of prior work~\citep{ha_world_2018,li_emergent_2023,gurnee_language_2024}, we first show that latent variable models~\citep{everett2013introduction} provide a natural scaffold for formulating world model learning. Specifically, learning world models can be framed as achieving a non-trivial recovery of latent data-generating variables. However, a core challenge in this formulation arises from a well-known negative result showing that recovering true latents is generally impossible due to a fundamental issue of \emph{non-identifiability}~\citep{hyvarinen_nonlinear_1999}. That is, multiple solutions can fit the observed variables equally well, making true latent variables non-identifiable from observed data alone.

At first glance, the non-identifiability of latent variables may suggest a pessimistic outlook on learning world models. However, existing results overlook an important point: solutions that equally fit the data are not necessarily equivalent as \emph{functions}. Thus, algorithms with implicit bias in function space, such as those employed in deep learning~\citep{kalimeris_sgd_2019,goyal_inductive_2020}, would favor certain solutions over others. In particular, we focus on a bias towards \emph{low-complexity} functions, a phenomenon widely observed in neural networks and believed to be a key factor in the success of deep learning~\citep{perez_deep_2019,huh_platonic_2024,goldblum_position_2024}. Yet, due to the lack of a well-established complexity measure for continuous functions, formalizing such complexity bias and analyzing its impact remains a challenging problem on its own (see Section~\ref{appsec:related_work} for related work).

In this work, we circumvent this challenge by leveraging a simple yet important fact: while real-world data and latent variables may be continuous, all variables processed by neural networks are ultimately encoded as bit strings due to finite precision of computers.
\footnote{Note that this differs from neural network quantization~\citep{ nagel_white_2021,gholami_survey_2021}.} 
This allows us to model all variables as \emph{Boolean} without loss of generality. While this may seem a subtle difference (since we only lose the information that goes beyond machine precision), as we demonstrate in later sections, it turns out to provide surprisingly powerful machinery for defining and analyzing the \emph{complexity} of solutions via the Fourier-Walsh transform of Boolean functions~\citep{odonnell_analysis_2021}. Building on this foundation, we present, for the first time, a nuanced perspective on learning world models that reveals an interplay between the low-complexity bias, proxy tasks, and model architecture.
Our \textbf{main contributions} are:
\begin{enumerate}[leftmargin=1.25em]
\item In Section~\ref{sec:formulation}, we lay down general definitions of learning world models and discuss its core challenge posed by the non-identifiability of latent data-generating variables. This provides a foundation for future work to formally reason about learning world models and offers theoretical rigor to the recent scientific debate on this topic~\citep{bender_dangers_2021,mitchell_ai_2023}.

\item In Section~\ref{sec:preliminaries}, we introduce our Boolean function formulation and the corresponding complexity measures based on a notion of \emph{realization degree}, offering a tractable approach for analyzing the impact of low-complexity bias on world model learning.

\item In Section~\ref{sec:main}, we present the first theoretial results on learning world models in the context of training on proxy tasks using observed data. We identify two critical factors for world model learning: (\romannumeral 1) a \emph{multi-task} learning setting; (\romannumeral 2) the low-complexity bias, instantiated by a \emph{low-degree bias} of the model and a \emph{low-degree task distribution}. Together, these factors ensure the identifiability of latent data-generating variables.
Moreover, we show the provable benefits of learning world models in an out-of-distribution generalization setting~\citep{abbe_generalization_2023} and study the impact of model architecture under a notion of \emph{basis compatibility} (see Figure~\ref{fig:summary} for a graphical summary of our results). Technically, our analysis relies on analyzing the degree properties of Boolean functions composed with invertible transforms, which may be of independent interest.

\item In Section~\ref{sec:implications}, we illustrate the algorithmic implications of our results on two representative tasks: polynomial extrapolation~\citep{xu_how_2021} and learning physical laws~\citep{kang_how_2024}. We show that architectures inspired by our analysis outperform conventional architectures such as ReLU MLPs and transformers~\citep{vaswani_attention_2017} in these tasks.
\end{enumerate}

\begin{figure}[t]
\centering
\includegraphics[width=0.95\linewidth]{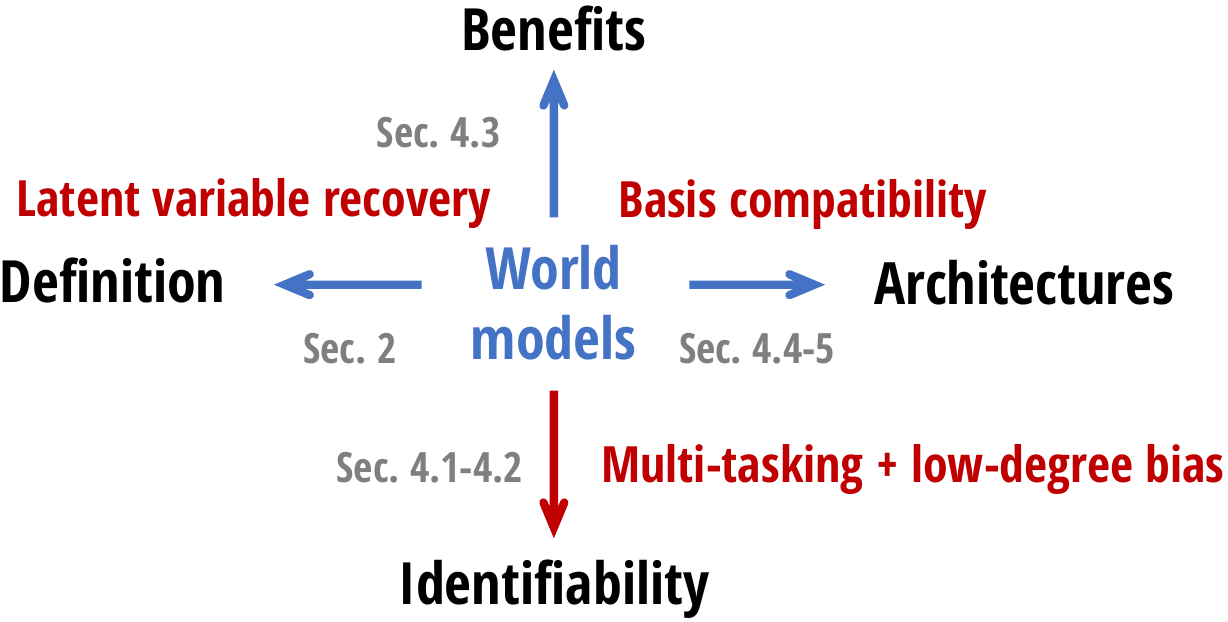}
\caption{A graphical summary of our framework and main results.}
\label{fig:summary}
\end{figure}

\section{Formulation of Learning World Models}
\label{sec:formulation}

How to define the world model and the problem of learning world models remains debatable to date. Yet, the term ``world models'' has been widely referred to in the literature as models that uncover the underlying \emph{generation process} of data and maintain a representation of it~\citep{ha_world_2018,gurnee_language_2024,richensrobust}. For example, pioneering works by~\citet{li_emergent_2023} and~\citet{nanda_emergent_2023} define ``world models'' in board games as the board state that generates move sequences. This motivates a formulation of world model learning under the framework of latent variable models~\citep{everett2013introduction}. To this end, we first define a general data generation process.

\begin{definition}[Data generation process]
\label{def:dgm}
Let $\rvx\in\gX$ be the observed data variables and let $\rvz\in\gZ$ be the latent variables for some data space $\gX$ and latent space $\gZ$. The observed data are sampled as follows: (\romannumeral 1) sample $\rvz\sim p(\rvz)$ for some probability distribution $p$ over $\gZ$; (\romannumeral 2) generate $\rvx$ through an invertible and non-linear function $\rvx = \psi(\rvz)$.
\end{definition}

This definition resembles the data generation process used in many latent variable models such as non-linear ICA~\citep{hyvarinen_nonlinear_1999}, invariant feature learning~\citep{arjovsky_invariant_2019}, and causal representation learning~\citep{scholkopf_toward_2021}. The main difference is that unlike these models, here we do not assume $p(\rvz)$ to be any structured distribution.

Given Definition~\ref{def:dgm}, a natural way to formalize ``understanding'' the generation process of data is to \emph{invert} it, \ie, recover the latent variables $\rvz$ from the observed data $\rvx$.
This leads to our basic formulation of learning world models.

\begin{definition}[Learning world models]
\label{def:world_model}
Let $\gT$ be a set of transforms $T:\gZ\to\gZ$. We say a representation $\Phi:\gX\to\gZ$ learns the world model up to $\gT$ if there exists $T\in\gT$ such that $\Phi(\vx) = T(\vz)$ for every $\vz\in\supp(p)$ and $\vx = \psi(\vz)$.
\end{definition}

For instance, if $\gT$ only contains the identity transform, then $\Phi(\rvx)$ recovers $\rvz$ exactly. In general, we require $\gT$ to contain only simple transform classes (\eg, linear transforms) for a meaningful recovery.

The main difficulty of this latent variable recovery problem is that $\rvz$ is, by definition, unobservable. To address this, we leverage the fact that models in practice are trained on some proxy tasks (\eg, next-token prediction) to learn representations of $\rvx$ implicitly. We formulate this as follows.

\begin{definition}[Task and realization]
\label{def:task}
A \emph{task} is defined as a function $h:\gX\to\sR$. For a task $h$, let
\begin{equation}
\begin{aligned}
\gH(h) \vcentcolon= \{f:\gX\to\sR\mid\ &f(\vx) = h(\vx),\\
&\forall \vz\in\supp(p),\vx=\psi(\vz)\}.
\end{aligned}
\end{equation}
If a function composition $f_{(1)}\circ\cdots\circ f_{(q)}$ is in $\gH(h)$, then we say it is a \emph{realization} of $h$.
\end{definition}

\textbf{Flat and hierarchical realizations.} With definitions above, our main hope is that by training on some proper tasks $h$, the model can learn a \emph{hierarchical realization} $g\circ \Phi\in\gH(h)$ with a function $g:\gZ\to\sR$ and a representation $\Phi$ that learns the world model in the sense of Definition~\ref{def:world_model}. However, a key challenge arises: by definition, all realizations have zero training error on $\supp(p)$ and are thus indistinguishable by their task performance. For example, every function $h^*\in\gH(h)$ is itself a \emph{flat realization} of $h$, \ie, without any explicit representation learning. Thus, by looking at the observable data alone, we have no reason to expect that $g\circ\Phi$ will be favored over $h^*$. Likewise, it is also unreasonable to expect that $g\circ\Phi$ should be favored over another hierarchical realization $g'\circ\Phi'\in\gH(h)$ with $\Phi'$ not learning world models.
Indeed, a well-known impossibility result in latent variable modeling shows that the true latent variables are \emph{non-identifiable} when $\psi$ is a sufficiently flexible non-linear function of the latents~\citep{hyvarinen_nonlinear_1999,khemakhem_variational_2020}.

\begin{lemma}[Non-identifiability~\citep{khemakhem_variational_2020}]
\label{lemma:non_identifiability}
Let $\gZ = \sR^d$ and $\rvz\in\gZ$ be a random vector of any distribution. Then, there exists an invertible transform $T:\gZ\to\gZ$ such that the components of $\rvz'=T(\rvz)$ are independent, standard Gaussian variables.
\end{lemma}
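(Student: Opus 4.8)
The plan is to construct $T$ explicitly via the triangular Rosenblatt--Knothe transform, the multivariate generalization of the probability integral transform (PIT), and to do so in two stages: first map $\rvz$ to a vector with independent $\mathrm{Uniform}(0,1)$ coordinates, then push that vector through the inverse standard-Gaussian CDF coordinatewise. Concretely, let $F_1$ be the marginal CDF of $\rvz_1$, and for $k=2,\dots,d$ let $F_k(\,\cdot\mid z_1,\dots,z_{k-1})$ be a regular conditional CDF of $\rvz_k$ given $(\rvz_1,\dots,\rvz_{k-1})=(z_1,\dots,z_{k-1})$. Define the lower-triangular map $R:\gZ\to[0,1]^d$ by
\[
R(z)_k \defeq F_k(z_k\mid z_1,\dots,z_{k-1}),\qquad k=1,\dots,d,
\]
with $F_1(z_1\mid\cdot)\defeq F_1(z_1)$. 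Let $G$ be the standard normal CDF and set
\[
T(z) \defeq \bigl(G^{-1}(R(z)_1),\,\dots,\,G^{-1}(R(z)_d)\bigr).
\]
The key distribution-free fact is the Rosenblatt identity: the coordinates of $\rvu=R(\rvz)$ are i.i.d.\ $\mathrm{Uniform}(0,1)$. I would prove this by induction on the coordinate index: conditionally on $(\rvz_1,\dots,\rvz_{k-1})$ the variable $\rvz_k$ has CDF $F_k(\,\cdot\mid\rvz_{<k})$, so by the one-dimensional PIT (using continuity of the conditional CDF) $\rvu_k=F_k(\rvz_k\mid\rvz_{<k})$ is $\mathrm{Uniform}(0,1)$ \emph{regardless of the conditioning value}; hence $\rvu_k$ is independent of $\sigma(\rvz_1,\dots,\rvz_{k-1})$, and in particular independent of $(\rvu_1,\dots,\rvu_{k-1})$, which are i.i.d.\ uniform by the inductive hypothesis. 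Since applying $G^{-1}$ coordinatewise is a measurable bijection of $(0,1)^d$ onto $\sR^d$ that sends the uniform law to the standard Gaussian, the components of $\rvz'=T(\rvz)$ are independent standard Gaussians.

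For invertibility, observe that $R$ is triangular: the $k$-th coordinate depends only on $z_1,\dots,z_k$ and is nondecreasing in $z_k$. Under the (implicit) regularity assumption that the law of $\rvz$ admits a strictly positive density on $\gZ$ — equivalently, that each conditional CDF $z_k\mapsto F_k(z_k\mid z_{<k})$ is continuous and strictly increasing — every such coordinate map is a strictly increasing continuous bijection. Then $R^{-1}$ is computed triangularly: recover $z_1$ from $\rvu_1$, then $z_2$ from $(\rvu_1,\rvu_2)$ using the now-known $z_1$, and so on. Composing with the coordinatewise bijection $G^{-1}$ (whose inverse is $G$) preserves invertibility, so $T$ is an invertible — and manifestly non-linear — transform of $\gZ$ onto $\gZ$, as claimed.

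The main obstacle is not the computation but pinning down the sense in which the claim holds for $\rvz$ ``of any distribution''. Genuine bijectivity on all of $\sR^d$ fails if the law of $\rvz$ has atoms or if its support has gaps, since then the conditional CDFs are flat on intervals and $R$ (hence $T$) is not injective there. The honest route is therefore to (i) isolate the Rosenblatt i.i.d.-uniform identity, which requires no structural assumption on $p(\rvz)$ beyond continuity of the conditional CDFs, and (ii) state the lemma either under absolute continuity with full-support density (the setting used in the works this result is cited from), or with invertibility read as ``bijective from $\supp(p)$ onto its image up to a $p$-null set'' via right-continuous generalized inverses. Either reading suffices downstream, since Definition~\ref{def:world_model} only constrains the representation on $\supp(p)$; I would adopt the absolute-continuity phrasing and note this point explicitly.
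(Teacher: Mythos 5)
The paper does not prove this lemma itself---it imports it from Khemakhem et al.\ (2020)---and your Rosenblatt--Knothe (Darmois) construction of conditional CDFs followed by the coordinatewise inverse Gaussian CDF is exactly the argument underlying that cited result, so your proof is correct and essentially the same as the source's. Your closing caveat is also apt: as literally stated for ``any distribution'' the map need not be injective (atoms or support gaps flatten the conditional CDFs), so invertibility should be read under absolute continuity or as a bijection on $\supp(p)$ up to null sets, which is all that is needed downstream since Definition~\ref{def:world_model} only constrains representations on $\supp(p)$.
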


Lemma~\ref{lemma:non_identifiability} indicates that we can construct new random variables $\Phi(\rvx) = \rvz'$ that have the same distribution $p$ as true latent variables $\psi^{-1}(\rvx) = \rvz$ (thus fitting the observed variables $\rvx$ equally well) by first transforming $\rvz$ to standard Gaussian variables, applying any orthogonal transform, and then inverting the transform (note that standard Gaussian distributions are invariant to orthogonal transforms). Applying this result to our context, we conclude that without additional assumptions, learning world models in the sense of Definition~\ref{def:world_model} is not possible with a simple $\gT$ by looking at the observed data $\rvx$ alone.

Given this result, one may naturally be pessimistic about the outlook on learning world models. However, recent studies on LLMs seem to suggest otherwise: instead of learning arbitrary non-linear representations of the observed data as Lemma~\ref{lemma:non_identifiability} implies, LLMs turn out to often learn semantically meaningful, human-interpretable representations~\citep{li2021implicit,bricken2023towards,marks_geometry_2023,gurnee_language_2024}. This suggests that despite having a vast number of parameters and being capable of learning different task realizations~\citep{reizinger_position_2024}, LLMs can still learn representations that are reasonably aligned with humans.

To address this puzzle, in this work we propose to incorporate the implicit bias of neural networks into analysis instead of using task performance as the only identifiability criterion. Specifically, we explore if the bias towards \emph{low-complexity} realizations, a trait of both human reasoning and deep learning models, could be used to steer the learned realization towards non-trivial recovery of true latents. Yet, the main challenge is that for continuous functions with inputs in $\sR^d$, we lack a well-established complexity measure that is amenable to analysis. For example, Kolmogorov complexity~\citep{li2008introduction} in algorithmic learning theory offers a unified framework for defining the complexity of any object, yet being uncomputable.
Fortunately, in the following sections we will show that this problem could be circumvented by using Boolean functions.

\section{Complexity Measures}
\label{sec:preliminaries}

Computers encode every object, including the observed data and variables learned by neural networks, in bit strings. Motivated by this fact, we could assume without loss of generality that both $\rvx$ and $\rvz$ in Definition~\ref{def:dgm} are Boolean (after some proper encoding). Formally, in the rest of the paper we will let $\gX \subseteq \{-1,1\}^m$ and $\gZ = \{-1,1\}^d$. Since $\psi$ is invertible, we have $m\ge d$.
Unless otherwise mentioned, we will also assume $\supp(p) = \gZ$ to ensure that all elements in $\gZ$ can be sampled with non-zero probabilities.

A direct consequence of the Boolean modeling of variables is that all functions involved are \emph{Boolean functions} (see Section~\ref{appsec:preliminaries} for a brief introduction). Notably, this itself does not resolve the non-identifiability issue stated in Section~\ref{sec:formulation}, as functions with Boolean inputs and outputs can still exhibit arbitrary nonlinearity. Instead, the primary advantage of this treatment is the useful machinery it provides for defining functional complexity, offered by the Fourier-Walsh transform of Boolean functions:


\begin{definition}[Fourier-Walsh transform~\citep{odonnell_analysis_2021}]
\label{def:fourier-walsh}
Every function $f:\{\pm 1\}^n\to\sR$ can be uniquely expressed as a multilinear polynomial
\begin{equation}
f(\vx) = \sum\nolimits_{S\subseteq [n]}\hat{f}(S)\chi_S(\vx),
\label{eq:fourier-walsh}
\end{equation}
where $\vx = (x_1,\ldots,x_n)$, $\chi_S(\vx) = \prod_{i\in S}x_i$ are \emph{parity functions}, and $\hat{f}(S) \in\sR$ are the coefficients. 
\end{definition}

The Fourier-Walsh transform shows that every Boolean function can be represented as a \emph{linear} combination of parity functions $\chi_S$ that capture all non-linear relationship between inputs and outputs. It can also be shown that parity functions are a \emph{basis} of the vector space $\gF^n\vcentcolon=\{f:\{\pm 1\}^n\to\sR\}$ of $n$-dimensional Boolean functions (see Section~\ref{appsec:preliminaries} for more details). By contrast, the space of arbitrary continuous functions does not have a similar, theory-friendly basis. See Section~\ref{appsec:discussion} for more discussion on the use of Boolean functions and complexity measures.

Given the Fourier-Walsh transform of a function, a natural measure of its complexity is its \emph{degree}:
\begin{definition}[Degree]
\label{def:degree}
For every function $f:\{\pm1\}^n\to\sR$, its \emph{degree} is defined by
\begin{equation}
\deg(f) = \max\{|S|:\hat{f}(S)\ne 0\}.
\end{equation}
For a Boolean function with multiple output dimensions, we define its degree by the sum of degrees of the Boolean functions mapping the input to each output coordinate.
\end{definition}

\begin{remark}
\label{remark:degree}
Note that the degree of $f$ equals to the maximum degree of the basis functions it uses: $\deg(f) = \max\{\deg(\chi_S):\hat{f}(S)\ne 0\}$. This interpretation will be useful when we consider different basis functions as in Section~\ref{sec:main_arch}.
\end{remark}

Intuitively, the degree of a Boolean function measures how non-linear it is. It can also be viewed as an approximation of Kolmogorov complexity if we treat parity functions $\chi_S$ as ``function codes'' with length $|S|$. Prior work has shown that many classes of neural networks indeed have a bias towards low-degree solutions for Boolean inputs~\citep{abbe_generalization_2023,bhattamishra_simplicity_2023}; in comparison, here we study whether a relevant low-complexity bias could implicitly lead to world model learning.

\textbf{Complexity measures for realizations.} Based on the notion of degree, we next introduce complexity measures to quantify the complexity of realizations. First, note that while degree can characterize function complexity, it fails to distinguish between different realizations, as they all behave identically when considered as a whole function. To overcome this limitation, we introduce \emph{realization degree}.

\begin{definition}[Realization degree]
\label{def:impl_degree}
Let $h$ be a task. For a realization $f_{(1)}\circ\cdots\circ f_{(q)}$ of $h$, its \emph{realization degree} is defined as
\begin{equation}
\impdeg(f_{(1)}\circ\cdots\circ f_{(q)}) = \sum_{i\in [q]} \deg(f_{(i)}).
\end{equation}
\end{definition}
For example, the realization degree of a flat realization $h^*\in\gH(h)$ coincides with its degree; the realization degree of a hierarchical realization $h=g\circ \Phi$ is $\impdeg(g\circ\Phi) = \deg(g) + \deg(\Phi)$. Compared to degree, realization degree takes into account the cost of implementing each function in hierarchical realizations, which better captures representation learning in practice. Throughout this work, we say a model exhibits a \textbf{low-degree bias} if it minimizes the realization degree.

Finally, we introduce two definitions that are useful in characterizing the impact of the low-degree bias on flat and hierarchical realizations.

\begin{definition}[Min-degree solutions]
\label{def:min_degree_solutions}
For a task $h$, we define its \emph{min-degree solutions} $\hmin(h)$ as the set of functions in $\gH(h)$ and that minimize the degree. We denote their degree by $\deg(\hmin(h))$.
\end{definition}

\begin{definition}[Conditional degree]
\label{def:conditional_degree}
For a task $h$ and a representation $\Phi:\gX\to\gZ$, let $h^*\in\hmin(h)$ and assume that the set $\{g:\gZ\to\sR\mid g\circ \Phi \in\gH(h)\}$ is not empty. The \emph{conditional degree} of the task $h$ on the representation $\Phi$ is then defined as
\begin{equation}
\deg(h\mid\Phi) = \deg(h^*) - \max\{\deg(g):g\circ\Phi\in\gH(h)\}.
\end{equation}
\end{definition}

By Definition~\ref{def:min_degree_solutions}, flat realizations with low-degree bias satisfy $h^*\in\hmin(h)$. Analyzing the impact of representations is more involved; the main intuition behind Definition~\ref{def:conditional_degree} is that a representation $\Phi$ only makes a task $h$ ``simpler'' if the conditional degree $\deg(h\mid\Phi) > 0$, \ie, solving the task on top of $\Phi$ yields a smaller degree compared to low-degree flat realizations. This notion will be explored further in the next section.

\section{Theoretical Analysis}
\label{sec:main}

This section presents our main theoretical results. We first study how the low-degree bias introduced in Section~\ref{sec:preliminaries} drives representation learning (Section~\ref{sec:main_repr_learning}). We then present our main results on learning world models, showing sufficient conditions for identifying latent data-generating variables (Section~\ref{sec:main_world_model}). Next, we show provable benefits of learning world models in an out-of-distribution generalization setting (Section~\ref{sec:main_benefits}). We conclude this section by a preliminary analysis on how the model architecture impacts world model learning (Section~\ref{sec:main_arch}). All proofs are deferred to Section~\ref{appsec:proof}.
Finally, Section~\ref{appsec:numerical} presents numerical experiments that substantiate our theoretical results.

\subsection{Low-Degree Bias Drives Representation Learning}
\label{sec:main_repr_learning}

As a warm-up, we first consider a basic question: why should neural networks, such as LLMs, learn \emph{any} representation of data-generating variables when trained on proxy tasks? Indeed, modern neural networks can often memorize the entire dataset~\citep{zhang_understanding_2017} or rely on superficial statistical patterns to solve tasks~\citep{geirhos_imagenet-trained_2019}. This raises concerns about whether they truly understand the data despite generating plausible outputs~\citep{bender_dangers_2021}.

Formally, this question can be modeled as a competition between flat realizations $h^*\in\hmin(h)$ and hierarchical realizations $g\circ\Phi\in\gH(h)$ in our formulation. Specifically, we seek to determine which realization minimizes the realization degree and is thus favored by the low-degree bias. Our first result shows that for any \emph{single task} $h$, the flat realization is preferred.

\begin{theorem}[Single-task learning]
\label{thm:single-task}
Let $h$ be a task. Then, for every $h^*\in\hmin(h)$, representation $\Phi:\gX\to\gZ$, and $g:\gZ\to\sR$ such that $g\circ\Phi\in\gH(h)$, the following holds:
\begin{equation}
\impdeg(h^*) \le \impdeg(g\circ \Phi).
\end{equation}
\end{theorem}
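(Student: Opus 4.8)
The plan is to show that any hierarchical realization $g\circ\Phi$ can be ``flattened'' into a single function $h^\flat = g\circ\Phi \in \gH(h)$ whose degree is no larger than $\impdeg(g\circ\Phi) = \deg(g)+\deg(\Phi)$, and then invoke the definition of $\hmin(h)$ to conclude $\impdeg(h^*) = \deg(h^*) \le \deg(h^\flat) \le \impdeg(g\circ\Phi)$. The first and last inequalities are immediate: $h^*$ minimizes degree over $\gH(h)$ by Definition~\ref{def:min_degree_solutions}, and $h^*$ is flat so $\impdeg(h^*) = \deg(h^*)$; also $h^\flat$ and $g\circ\Phi$ are literally the same function, so $\deg(h^\flat) = \deg(g\circ\Phi)$. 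The whole content of the theorem therefore reduces to the claim that composing Boolean functions multiplies (or rather, sub-multiplies/adds) degrees in the sense that $\deg(g\circ\Phi) \le \deg(g)\cdot\deg(\Phi)$ is too weak; what we actually want is the additive-looking bound via the coordinatewise structure.

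The key step I would carry out is the degree-composition lemma. Write $\Phi = (\Phi_1,\dots,\Phi_d)$ with each $\Phi_j:\gX\to\{-1,1\}$, and recall from Definition~\ref{def:degree} that $\deg(\Phi) = \sum_{j=1}^d \deg(\Phi_j)$. Expand $g$ in its Fourier-Walsh representation (Definition~\ref{def:fourier-walsh}): $g(\vz) = \sum_{S\subseteq[d]} \hat g(S)\prod_{j\in S} z_j$. Substituting $z_j = \Phi_j(\vx)$ gives $g(\Phi(\vx)) = \sum_{S} \hat g(S)\prod_{j\in S}\Phi_j(\vx)$. Each product $\prod_{j\in S}\Phi_j$ is a product of $|S|$ Boolean functions, and since degree is subadditive under products of functions (the product of the multilinear expansions has degree at most the sum of the degrees, after reducing $x_i^2=1$), we get $\deg\bigl(\prod_{j\in S}\Phi_j\bigr) \le \sum_{j\in S}\deg(\Phi_j) \le \sum_{j=1}^d \deg(\Phi_j) = \deg(\Phi)$ whenever $S\ne\emptyset$. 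Hence every term in the sum has degree at most $\deg(\Phi)$, and a linear combination of functions of degree $\le \deg(\Phi)$ still has degree $\le \deg(\Phi)$. This already gives $\deg(g\circ\Phi)\le\deg(\Phi) \le \deg(g)+\deg(\Phi)$ when $\deg(g)\ge 0$ — but wait, this is suspicious because it doesn't use $\deg(g)$ at all; I should double-check whether the intended bound is $\deg(g\circ\Phi)\le\deg(g)\cdot\deg(\Phi)$. Re-examining: $\prod_{j\in S}\Phi_j$ has degree at most $\sum_{j\in S}\deg(\Phi_j)$, and $|S|\le\deg(g)$, so the sharper bound is $\deg(g\circ\Phi)\le \deg(g)\cdot\max_j\deg(\Phi_j)\le\deg(g)\cdot\deg(\Phi)$; in either case $\deg(g\circ\Phi) \le \deg(g)\cdot\deg(\Phi) \le \deg(g)+\deg(\Phi)$ fails in general, so the route must be the additive one: use that $\prod_{j\in S}\Phi_j$ has degree $\le\sum_{j\in S}\deg(\Phi_j)\le\sum_j\deg(\Phi_j)=\deg(\Phi)$, giving $\deg(g\circ\Phi)\le\deg(\Phi)\le\deg(g)+\deg(\Phi)=\impdeg(g\circ\Phi)$ (using $\deg(g)\ge 0$, which holds unless $g\equiv 0$, a degenerate case handled separately).

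The main obstacle — and the step deserving the most care — is the subadditivity of degree under products: proving that for Boolean functions $f_1,\dots,f_k:\{-1,1\}^n\to\sR$ one has $\deg(f_1\cdots f_k)\le\sum_i \deg(f_i)$. This follows by expanding each $f_i=\sum_{T}\hat f_i(T)\chi_T$, multiplying out, and noting $\chi_{T_1}\cdots\chi_{T_k}=\chi_{T_1\triangle\cdots\triangle T_k}$ (symmetric difference, since $x_\ell^2=1$), with $|T_1\triangle\cdots\triangle T_k|\le\sum_i|T_i|\le\sum_i\deg(f_i)$; the multilinearity and uniqueness from Definition~\ref{def:fourier-walsh} then guarantee no cancellation can increase the degree. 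A secondary subtlety is handling the edge case $g\equiv 0$ (then $g\circ\Phi\equiv 0$, $\deg(g)$ is conventionally $-\infty$ or undefined, but $0\in\gH(h)$ forces $h\equiv 0$ and the inequality is trivial), and confirming that $g\circ\Phi$ restricted to $\supp(p)=\gZ$ — which here is all of $\{-1,1\}^d$ — genuinely lies in $\gH(h)$ as a function on $\gX$, i.e. that the flattened function agrees with $h$ on all of $\{\psi(\vz):\vz\in\gZ\}$, which is immediate from $g\circ\Phi\in\gH(h)$ by hypothesis. I would state the product-degree subadditivity as a standalone lemma (it is folklore in analysis of Boolean functions, cf.~\citet{odonnell_analysis_2021}) and then the theorem is two lines.
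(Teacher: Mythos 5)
Your proposal is correct and follows essentially the same route as the paper's proof: reduce to showing $\deg(g\circ\Phi)\le\impdeg(g\circ\Phi)$, expand $g$ in its Fourier--Walsh form, substitute the coordinates $\Phi_j$, and bound the degree of each product $\prod_{j\in S}\Phi_j$ by $\sum_{j}\deg(\Phi_j)=\deg(\Phi)$, so that $\deg(g\circ\Phi)\le\deg(\Phi)\le\deg(g)+\deg(\Phi)$. The only differences are cosmetic: you isolate the product-subadditivity of degree as an explicit lemma (via symmetric differences of parity sets) and treat the degenerate $g\equiv 0$ case, both of which the paper leaves implicit.
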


\begin{remark}
Intuitively, this result is a consequence of learning ``redundant'' representations: for every $h$, it suffices to learn every parity function $\chi_S$ with $\hat{h}(S)\ne 0$ in its Fourier-Walsh transform. Even if there is a good universal data representation, explicitly learning it is often not the best choice since it may involve irrelevant parity functions with $\hat{h}(S) = 0$, resulting in a larger realization degree.
\end{remark}

However, the situation changes in the \emph{multi-task} setting. Suppose there are $n$ distinct tasks $h_1,\ldots,h_n$. A flat realization solves each task independently by learning solutions $h_i^*\in\hmin(h_i)$ for each $i\in [n]$, whereas a hierarchical realization can leverage the shared representation $\Phi$ across tasks, requiring only task-specific functions $g_i$ for $i\in [n]$. Our next theorem shows that in this setting, the hierarchical realization is favored if a sufficient number of tasks have a positive conditional degree.

\begin{theorem}[Multi-task learning]
\label{thm:multi-task}
Let $h_1,\ldots,h_n$ be $n$ distinct tasks and let $h^*_i\in\hmin(h_i),\,\forall i\in [n]$. Let $\Phi:\gX\to\gZ$ and $g_1,\ldots,g_n$ satisfy that $g_i\circ\Phi$ is an realization of $h_i,\,\forall i\in[n]$.
Then, the following holds for every $h^* = (h_1^*,\ldots,h_n^*)$, $g = (g_1,\ldots,g_n)$, and $\Phi^*\in\hmin(\Phi)$:
\begin{equation}
\impdeg(h^*) - \impdeg(g\circ\Phi^*) \ge \sum_{i\in[n]} \deg(h_i\mid \Phi^*) - d^2.
\label{eq:thm2}
\end{equation}
\end{theorem}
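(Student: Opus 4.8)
The plan is to expand all three quantities in \eqref{eq:thm2} through their definitions, thereby reducing the bound to two elementary facts. Set $M_i \defeq \max\{\deg(g) : g\circ\Phi^* \in \gH(h_i)\}$; since all elements of $\hmin(h_i)$ share the same degree, Definition~\ref{def:conditional_degree} gives $\deg(h_i\mid\Phi^*) = \deg(h_i^*) - M_i$. As each $h_i^*$ is a flat realization, $\impdeg(h^*) = \sum_{i\in[n]}\impdeg(h_i^*) = \sum_{i\in[n]}\deg(h_i^*)$, while $g\circ\Phi^*$ is a two-stage realization with first stage $\Phi^*$ and second stage $g=(g_1,\dots,g_n)$, so by Definitions~\ref{def:degree} and~\ref{def:impl_degree}, $\impdeg(g\circ\Phi^*) = \deg(\Phi^*) + \sum_{i\in[n]}\deg(g_i)$. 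Substituting these into \eqref{eq:thm2} and cancelling $\sum_i \deg(h_i^*)$, the claimed inequality becomes equivalent to
\begin{equation}
\deg(\Phi^*) + \sum_{i\in[n]}\deg(g_i) \;\le\; d^2 + \sum_{i\in[n]} M_i .
\end{equation}

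So it suffices to prove (i) $\deg(g_i)\le M_i$ for each $i$, and (ii) $\deg(\Phi^*)\le d^2$. Fact (i) is immediate: since $\Phi^*\in\hmin(\Phi)$ agrees with $\Phi$ on $\supp(p)$, we have $g_i\circ\Phi^* = g_i\circ\Phi$ on $\{\psi(\vz):\vz\in\supp(p)\}$, hence $g_i\circ\Phi^*\in\gH(h_i)$; thus $g_i$ lies in the feasible set defining $M_i$, and $\deg(g_i)\le M_i$.

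The substance, and what I expect to be the main obstacle, is fact (ii). Since $\Phi^*$ minimizes degree among all functions agreeing with $\Phi$ on $\supp(p)$, it is enough to exhibit one such function of degree at most $d^2$. The point set $A\defeq\{\psi(\vz):\vz\in\supp(p)\}$ has exactly $2^d$ elements, because $\psi$ is injective and $\supp(p)=\gZ=\{-1,1\}^d$; hence each of the $d$ coordinate functions of $\Phi$ restricts to a real-valued function on a set $A\subseteq\{-1,1\}^m$ with $|A|=2^d$. I will invoke the extension lemma: \emph{every} $f:A\to\sR$ with $|A|\le 2^k$ admits a multilinear extension to $\{-1,1\}^m$ of degree at most $k$. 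Applying it with $k=d$ to each of the $d$ coordinates of $\Phi$ and summing the $d$ degrees (Definition~\ref{def:degree}) produces an extension of $\Phi|_{\supp(p)}$ of degree at most $d\cdot d = d^2$, which gives (ii) and hence the theorem.

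It remains to justify the extension lemma, which I would prove by duality. If no degree-$\le k$ extension of $f$ existed, the restriction map from degree-$\le k$ multilinear polynomials on $\{-1,1\}^m$ to $\sR^A$ would fail to be surjective, so some nonzero $\mu\in\sR^A$ satisfies $\sum_{\vx\in A}\mu(\vx)\chi_S(\vx)=0$ for all $|S|\le k$; extending $\mu$ by zero off $A$ yields a nonzero function on $\{-1,1\}^m$ whose Fourier-Walsh spectrum is supported entirely on levels $\ge k+1$, and the standard fact that such a function has support of size at least $2^{k+1}$ contradicts $|A|\le 2^k$. Once the extension lemma is in hand the remainder is the bookkeeping above; the only care needed is to match $\impdeg$, $\hmin$, and conditional degree exactly with the hypotheses—in particular, using that $\Phi^*\in\hmin(\Phi)$ rather than an arbitrary extension is precisely what delivers $\deg(\Phi^*)\le d^2$.
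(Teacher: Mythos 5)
Your proof is correct, and its skeleton matches the paper's: expand $\impdeg(h^*)$ and $\impdeg(g\circ\Phi^*)$, control the $g_i$ terms through Definition~\ref{def:conditional_degree}, and reduce everything to the bound $\deg(\Phi^*)\le d^2$ via a per-coordinate bound $\deg(\Phi^*_j)\le d$. (A small point in your favor: you only claim $\deg(g_i)\le\max\{\deg(g):g\circ\Phi^*\in\gH(h_i)\}$, which is exactly the inequality needed; the paper writes this as an equality, which is an abuse but harmless since the inequality goes the right way.) Where you genuinely diverge is in how the degree-$d$ bound per coordinate is obtained. The paper proves it constructively (Lemma~\ref{lemma:degree} via Lemma~\ref{lemma:existence}): since $|\gX|=2^d$, every set of $d+1$ coordinates admits signs $b_j$ with $\prod_j(x_{i_j}+b_j)=0$ on $\gX$, so degree-$(d+1)$ monomials can be iteratively eliminated. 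You instead prove an interpolation/extension lemma by duality: if the restriction of degree-$\le k$ polynomials to $A$ were not surjective onto $\sR^A$, a nonzero dual witness $\mu$ supported on $A$ would have Fourier spectrum concentrated on levels $\ge k+1$, forcing $|\supp(\mu)|\ge 2^{k+1}$ (multiply by $\chi_{[m]}$ to turn this into the standard fact that a nonzero degree-$\le m-k-1$ multilinear polynomial is nonzero on at least $2^{k+1}$ points), contradicting $|A|\le 2^k$. Both arguments are valid and yield the same conclusion with $k=d$, $|A|=|\psi(\supp(p))|=2^d$; yours is a clean linear-algebra/uncertainty-principle route, while the paper's constructive replacement argument has the side benefit of being reused almost verbatim for the Hamming-ball variant (Lemma~\ref{lemma:min_deg_hamming}) needed in Theorem~\ref{thm:benefits}.
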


Thus, if a sufficient number of tasks satisfy $\deg(h_i\,|\,\Phi) > 0$ and $\sum_{i\in[n]} \deg(h_i\,|\, \Phi) > d^2$, then $\impdeg(h^*) > \impdeg(g\circ\Phi)$, contrary to Theorem~\ref{thm:single-task}.
We discuss two implications of these results:
\begin{itemize}[leftmargin=1.25em]
\item The contrast between the single-task and multi-task settings justifies the importance of multi-tasking in learning general-purpose representations, which has been conjectured by prior work~\citep{radford2019language,huh_platonic_2024}. Indeed, modern pre-training objectives such as next-token prediction and contrastive learning can be interpreted as solving a large number of prediction tasks simultaneously~\citep{radford2019language,arora_theoretical_2019,brown_language_2020}.
\item Theorem~\ref{thm:multi-task} suggests that to facilitate the learning of a representation $\Phi$, proxy tasks should be chosen such that conditioning on $\Phi$ makes them less ``complex''. This provides a framework to reason about whether certain objectives in self-supervised learning~\citep{liu2021self} induce better representations than others. For instance, input reconstruction is often suboptimal as it permits a low-degree solution $h^*(\vx)=\vx$ without requiring any representation learning; masked image modeling~\citep{he_masked_2022} is likely more effective, as a representation that captures image semantics could significantly reduce solution complexity by filtering out pixel-to-pixel details.
\end{itemize}

\subsection{Conditions for Learning World Models}
\label{sec:main_world_model}

We now move on to investigate whether the low-degree bias facilitates world model learning. While Theorem~\ref{thm:multi-task} shows that training on properly defined proxy tasks drives representation learning in the presence of the low-degree bias, it does not specify \emph{which} representation is ultimately learned. In this section, we further explore the multi-task setting to address the key question: can we construct proxy tasks $h_1,\ldots,h_n$ to induce a representation $\Phi$ that learns the world models in the sense of Definition~\ref{def:world_model}?

We begin by characterizing the space of all possible tasks. Given that the observed data is generated by $\rvx= \psi(\rvz)$ with an invertible $\psi$, every task $h:\gX\to\sR$ can be equivalently defined via a latent-space function $h':\gZ\to\sR$ as $h = h'\circ \psi^{-1}$. The space of all possible tasks can thus be represented as
\begin{equation}
\gF^d\circ \psi^{-1} = \left\{h'\circ \psi^{-1}\mid h'\in\gF^d\right\},
\end{equation}
where $\gF^d\vcentcolon=\{h:\{\pm 1\}^d\to\sR\}$ denotes the set of Boolean functions on $\gZ = \{\pm 1\}^d$.
Our next theorem shows that if proxy tasks are constructed by uniformly sampling from $\gF^d\circ\psi^{-1}$, then as $n\to\infty$, all viable representations yield the same  task-averaged realization complexity.

\begin{theorem}[Representational no free lunch]
\label{thm:no_free_lunch}
Let $h_1,\ldots,h_n$ be $n$ tasks that are independently and uniformly sampled from $\gF^d\circ \psi^{-1}$. Then as $n\to \infty$, for any two representations $\Phi, \Phi'$ satisfying that there exists a bijective transform $T:\gZ\to\gZ$ such that $\Phi(\vx) = T(\vz)$ for every $\vz\in\supp(p)$ and $\vx = \psi(\vz)$, the following holds:
\begin{equation}
\lim_{n\to\infty} \frac{1}{n}\left(\impdeg(g\circ\Phi) - \impdeg(g'\circ\Phi')\right) = 0,
\end{equation}
where $g, g'\in(\gF^d)^n$ satisfy that $g_i\circ\Phi$ and $g'_i\circ\Phi'$ are both realizations of $h_i$ for every $i\in[n]$.
\end{theorem}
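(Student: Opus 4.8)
The plan is to exploit the fact that both $\Phi$ and $\Phi'$ are \emph{bijective} transforms of the latents: over such a representation the top-level function realizing any task is forced to be a fixed relabeling of the task itself, after which a symmetry argument (permutation-invariance of the task distribution) makes the per-task complexities equal in expectation, and a law-of-large-numbers estimate closes the gap. The asserted limit should be read as holding almost surely (equivalently in probability).

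First I would pin down the structure of realizations over a bijective-transform representation. Write $h_i = h'_i\circ\psi^{-1}$ with $h'_i\in\gF^d$, and let $T,T':\gZ\to\gZ$ be the bijections with $\Phi\circ\psi = T$ and $\Phi'\circ\psi = T'$ on $\supp(p)=\gZ$. Since $T$ is a bijection of $\gZ$, the requirement that $g_i\circ\Phi$ be a realization of $h_i$ reads $g_i(T(\vz)) = h'_i(\vz)$ for \emph{all} $\vz\in\gZ$, whose unique solution in $\gF^d$ is $g_i = h'_i\circ T^{-1}$; likewise $g'_i = h'_i\circ (T')^{-1}$. Hence, using $\impdeg(g\circ\Phi) = \deg(g) + \deg(\Phi) = \sum_{i\in[n]}\deg(g_i) + \deg(\Phi)$ (Definitions~\ref{def:degree} and~\ref{def:impl_degree}, with $\Phi$ shared across tasks),
\begin{equation}
\impdeg(g\circ\Phi) - \impdeg(g'\circ\Phi') = \sum_{i\in[n]}\bigl(\deg(h'_i\circ T^{-1}) - \deg(h'_i\circ (T')^{-1})\bigr) + \bigl(\deg(\Phi) - \deg(\Phi')\bigr),
\end{equation}
where the last term is a finite constant not depending on $n$ and thus contributes $0$ to the normalized limit.

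Next I would show $Y_i \defeq \deg(h'_i\circ T^{-1}) - \deg(h'_i\circ (T')^{-1})$ has mean zero. Because the tasks are sampled uniformly from $\gF^d\circ\psi^{-1}$, the common law of the $h'_i$ is invariant under precomposition with any bijection of $\gZ$: for a fixed permutation $\sigma$ of $\{\pm1\}^d$, the map $f\mapsto f\circ\sigma$ is a bijection of the sample space and hence measure-preserving. Taking $\sigma=T^{-1}$ and $\sigma=(T')^{-1}$ shows $h'_i\circ T^{-1}$ and $h'_i\circ (T')^{-1}$ are identically distributed, so $\E[Y_i]=0$; moreover $|Y_i|\le d$ since every function on $\{\pm1\}^d$ has degree at most $d$, and the $Y_i$ are i.i.d.\ as deterministic functions of the i.i.d.\ $h'_i$.

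Finally, concentration: $\Var\!\left(\tfrac1n\sum_{i\in[n]}Y_i\right) = \tfrac1{n^2}\sum_{i\in[n]}\Var(Y_i)\le d^2/n\to 0$, so $\tfrac1n\sum_{i\in[n]}Y_i\to 0$ in $L^2$ and almost surely by the strong law of large numbers; combined with the vanishing representation-degree term this yields the claim. The step I expect to require the most care is the characterization of realizations over a bijective-transform representation — it is exactly here that $\supp(p)=\gZ$ and the bijectivity of $T$ are indispensable (without them $g_i$ would be underdetermined off the image of $\psi$ and the degree bookkeeping would fail) — together with making the phrase ``uniformly sampled from $\gF^d\circ\psi^{-1}$'' precise enough (e.g.\ uniform over Boolean-valued functions $\{\pm1\}^d\to\{\pm1\}$, or any relabeling-symmetric law on $\gF^d$) that the permutation-invariance step is valid; everything after that is a routine second-moment estimate.
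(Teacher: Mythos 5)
Your proposal is correct and follows essentially the same route as the paper: you reduce each realization to $g_i = h'_i\circ T^{-1}$ using bijectivity of $T$ and $\supp(p)=\gZ$, discard the representation-degree term as an $O(1)$ constant, and use invariance of the uniform law on $\gF^d$ under precomposition with a bijection (the paper's Lemma~\ref{lemma:bijection}) to equate the per-task expected degrees. The only difference is cosmetic: the paper computes each normalized limit as the same constant $\frac{1}{|\gF^d|}\sum_{h'\in\gF^d}\deg(h')$, while you show the per-task differences are mean-zero, bounded, and i.i.d.\ and invoke the law of large numbers, which makes the mode of convergence explicit.
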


\begin{remark}
Note that the condition of the existence of a bijective transform $T$ is a minimal requirement for $\Phi(\rvx)$ containing enough information for solving all tasks. The fact that all such representations have the same realization complexity suggests that the representation $\Phi$ induced by uniformly sampling from $\gF^d\circ \psi^{-1}$ only learns the world model up to arbitrary bijective transforms.
\end{remark}

Theorem~\ref{thm:no_free_lunch} can be viewed as a ``no free lunch''-like theorem for representation learning. The original no free lunch theorem~\citep{wolpert1996lack} states that every learner's performance is equally good when averaged over a uniform distribution on learning problems; here we show that every viable representation is equally complex when averaged over a uniform distribution on tasks. As we will show in Section~\ref{appsec:proof_no_free_lunch}, the technical intuition of this result is that every representation renders some tasks in the task space ``simple'' and others ``complex'', with the overall task-averaged complexity independent of the particular choice of the representation.

To overcome this result, we then move on to the \emph{non-uniform} case where proxy tasks are still drawn from $\gF^d$, but with different weights assigned to different functions. This setting is of more practical interest: prior work has reported various evidence suggesting that realistic tasks are often much more structured than being purely random~\citep{whitley2005complexity,zhang_understanding_2017}. In particular, real-world data tend to be highly compressible, implying that low-complexity input-output maps occur more frequently than high-complexity ones~\citep{dingle_inputoutput_2018,zhou_non-vacuous_2019,goldblum_position_2024}. To formalize this, we define \emph{$k$-degree tasks}.

\begin{definition}[$k$-degree tasks]
\label{def:k_degree}
Let $\gF^d_k\vcentcolon=\{h:\{\pm 1\}^d\to\sR\mid \deg(h)\le k\}$. We say a task $h$ is a $k$-degree task if $h\in\gF^d_k\circ \psi^{-1} = \{h'\circ \psi^{-1}\mid h'\in\gF^d_k\}$.
\end{definition}

In other words, $k$-degree tasks can be solved by a function with degree not greater than $k$ on top of true latents. One can easily verify that $\gF^d_k\subseteq \gF^d_{k+1}$ for every $k\in [d-1]$ and $\gF^d_d = \gF^d$. Thus, uniform sampling from all possible tasks amounts to uniform sampling from $\gF^d_d\circ \psi^{-1}$. $k$-degree tasks are also related to tasks with positive conditional degree, as shown by the following corollary.

\begin{corollary}
\label{corollary:degree_k}
For every task $h$ satisfying $\deg(h\mid \psi^{-1}) > 0$, we have $h\in\gF^d_{d-1}\circ \psi^{-1}$.
\end{corollary}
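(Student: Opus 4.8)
The plan is to argue by contraposition: I will show that if $h\notin\gF^d_{d-1}\circ\psi^{-1}$, then $\deg(h\mid\psi^{-1})\le 0$. The hypothesis $h\notin\gF^d_{d-1}\circ\psi^{-1}$ means the unique $h'\in\gF^d$ with $h = h'\circ\psi^{-1}$ has $\deg(h') = d$, i.e.\ $h'$ uses the top parity $\chi_{[d]}$ on $\gZ$. The claim is then that conditioning on the true latents $\psi^{-1}$ cannot make $h$ any ``simpler'' than an arbitrary min-degree flat realization, because $\deg(h')=d$ is already the maximum possible degree on a $d$-bit input, so a min-degree flat realization $h^*\in\hmin(h)$ can have degree no larger than $d$ either, while the best decomposition through $\psi^{-1}$ achieves exactly $\max\{\deg(g):g\circ\psi^{-1}\in\gH(h)\} = \deg(h') = d$.

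Concretely, the steps are: (i) Unpack Definition \ref{def:conditional_degree} for the representation $\Phi=\psi^{-1}$: $\deg(h\mid\psi^{-1}) = \deg(h^*) - \max\{\deg(g):g\circ\psi^{-1}\in\gH(h)\}$ for $h^*\in\hmin(h)$. (ii) Observe that among all $g$ with $g\circ\psi^{-1}\in\gH(h)$, the choice $g=h'$ is admissible (since $h'\circ\psi^{-1}=h$ exactly on $\supp(p)=\gZ$), hence the $\max$ is at least $\deg(h') = d$; in fact since any $g:\gZ\to\sR$ has $\deg(g)\le d$, the $\max$ equals $d$. (iii) Bound $\deg(h^*)$: since $h^*\in\gH(h)\subseteq\gF^m$ maps $m$ bits to $\sR$... here I need to be a bit careful, because a flat realization has domain $\gX\subseteq\{\pm1\}^m$ with $m$ possibly much larger than $d$, so a priori $\deg(h^*)$ could exceed $d$. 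The resolution is that $h^*$ minimizes degree over $\gH(h)$, and $\gH(h)$ contains $h'\circ\psi^{-1}$; so $\deg(h^*) = \deg(\hmin(h)) \le \deg(h'\circ\psi^{-1})$. One then needs $\deg(h'\circ\psi^{-1}) \le d$ — but this is false in general, since composing a degree-$d$ function with the invertible Boolean map $\psi^{-1}$ can raise the degree. So the bound I actually want is $\deg(h^*)\le d$, which requires exhibiting \emph{some} element of $\gH(h)$ of degree $\le d$; the natural candidate must again route through the latents.

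Let me re-plan step (iii): the cleanest route is to not bound $\deg(h^*)$ directly but to compare. We have $\deg(h\mid\psi^{-1}) = \deg(h^*) - d$. If $\deg(h^*) \le d$ we are done. Suppose for contradiction $\deg(h^*) > d$; but then... actually this can genuinely happen, and $\deg(h\mid\psi^{-1})$ would be positive. So the corollary as I'm reading it must rely on the convention in Definition \ref{def:degree} that for multi-output maps degree is the \emph{sum} over output coordinates, combined with the fact that $\psi^{-1}:\gX\to\gZ$ has $d$ output bits — so even $\deg(\psi^{-1})$ contributes, and the relevant inequality is about realization degree, not bare degree. I would therefore recast the argument entirely in terms of $\impdeg$: use Theorem \ref{thm:single-task} (which gives $\impdeg(h^*)\le\impdeg(g\circ\psi^{-1})$ for every admissible $g$) together with the observation that when $\deg(h')=d$, choosing $g=h'$ gives $\impdeg(g\circ\psi^{-1}) = \deg(h') + \deg(\psi^{-1})$, and then quantitatively comparing this to what $\deg(h\mid\psi^{-1})>0$ would force about $\deg(h^*)$.

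The main obstacle I anticipate is precisely this interplay between bare degree (used in Definition \ref{def:conditional_degree}) and the fact that Boolean-invertible composition does not preserve degree — so the ``obvious'' one-line argument ``$\deg(h^*)\le d$ hence $\deg(h\mid\psi^{-1})\le 0$'' needs a genuine reason why a degree-$\le d$ flat realization exists whenever $\deg(h')<d$ (and conversely why $\deg(h')=d$ forces any representation-based decomposition to bottom out at degree $d$). I expect the actual proof to show the contrapositive cleanly: if $\deg(h\mid\psi^{-1})>0$ then $\max\{\deg(g):g\circ\psi^{-1}\in\gH(h)\} < \deg(h^*) \le d$, and since $g=h'$ is always admissible, $\deg(h') < d$, i.e.\ $h\in\gF^d_{d-1}\circ\psi^{-1}$ — the only nontrivial input being the uniform bound $\deg(h^*)\le d$, which should follow from $h^*\in\hmin(h)$ plus the existence of \emph{a} realization of $h$ through $\psi^{-1}$ of controlled degree.
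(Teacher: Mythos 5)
Your skeleton is the right one (it is just the contrapositive of the paper's argument): unpack Definition~\ref{def:conditional_degree} with $\Phi=\psi^{-1}$, note that the only admissible $g$ is $h\circ\psi$ (so the max term equals $\deg(h\circ\psi)$), and conclude once you know $\deg(h^*)\le d$ for $h^*\in\hmin(h)$. But that last bound is precisely the nontrivial content, and you leave it unproven. Your two attempted routes to it do not work: (a) bounding $\deg(h^*)$ by $\deg(h'\circ\psi^{-1})$ fails for the reason you yourself flag --- composing a degree-$d$ function on $\gZ$ with the Boolean bijection $\psi^{-1}$ gives an $m$-variable function whose multilinear degree can far exceed $d$; and (b) the detour through $\impdeg$, the multi-output degree convention, and Theorem~\ref{thm:single-task} is a red herring, since $\deg(h\mid\psi^{-1})$ is defined via bare degrees and the corollary does not involve $\deg(\psi^{-1})$ at all. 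So ``the existence of a realization through $\psi^{-1}$ of controlled degree'' is not where the bound comes from.

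The missing ingredient is the paper's Lemma~\ref{lemma:degree}: $\deg(\hmin(h))\le d$ for \emph{every} task $h$, proved by a support-counting interpolation argument rather than by routing through the latents. Since $\psi$ is invertible and $\supp(p)=\gZ$, the data support $\gX$ has exactly $2^d$ points; hence for any $d+1$ coordinates $i_1,\ldots,i_{d+1}$ there exist signs $b_1,\ldots,b_{d+1}\in\{\pm1\}$ with $\prod_{j}(x_{i_j}+b_j)=0$ identically on $\gX$ (otherwise one could exhibit $2^{d+1}$ distinct support points). Expanding this identity lets you rewrite any monomial of degree $d+1$ as a polynomial of degree $\le d$ that agrees with it on the support, and iterating reduces any flat solution to one of degree $\le d$. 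With that lemma in hand, your final paragraph's chain closes: $\deg(h\mid\psi^{-1})>0$ gives $\deg(h\circ\psi)<\deg(h^*)\le d$, so $\deg(h\circ\psi)\le d-1$, i.e.\ $h\in\gF^d_{d-1}\circ\psi^{-1}$. Without it, the step ``$\deg(h^*)\le d$'' is a genuine gap, and your proposal as written does not establish the corollary.
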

To capture the low-complexity bias in task sampling, we assign more weights to $\gF^d_k\circ \psi^{-1} (k<d)$. Perhaps surprisingly, our next theorem shows that even a slight such preference on low-complexity tasks in the task distribution can induce world model learning up to simple transforms.

\begin{theorem}[World model learning]
\label{thm:world_model}
Let $p_1,\ldots,p_d\in(0,1)$ such that $\sum_{i\in [d]}p_i = 1$.
Let $h_1, \ldots,h_n$ be $n$ tasks that are independently sampled as follows:
(\romannumeral 1) sample a degree $k\in [d]$ according to probabilities $\prob[k=i] = p_i$; (\romannumeral 2) uniformly sample a $k$-degree task. Let $(\Phi^*, g^*)$ be the minimizer of the following optimization problem:
\begin{equation}
\begin{aligned}
&\min_{\Phi:\gX\to\gZ, g\in\gF^d} \frac{1}{n}\,\impdeg(g\circ\Phi) \\
&\hspace{2em} \mathrm{s.t.}\hspace{1.75em} g_i\circ\Phi\in\gH(h_i),\,\forall i\in[n].
\end{aligned}
\label{eq:world_model}
\end{equation}
Then as $n\to\infty$, $\Phi^*$ learns the world model up to negations and permutations, \ie, there exists a permutation $i_1,\ldots,i_d$ of $1,\ldots,d$ such that $\Phi^*_j(\vx) \in \{\pm z_{i_j}\}$ for every $j\in[d]$, with $\vz = \psi^{-1}(\vx)$.
\end{theorem}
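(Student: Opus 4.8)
The plan is to analyze the optimization problem~\eqref{eq:world_model} asymptotically and show that any representation $\Phi$ other than one that is a signed permutation of $\psi^{-1}$ incurs strictly larger expected realization degree per task. First I would fix a representation $\Phi$ of the required form (that is, $\Phi = T\circ\psi^{-1}$ for a bijection $T:\gZ\to\gZ$, since otherwise no feasible $g$ exists for all tasks as $n\to\infty$) and write the per-task cost as $\frac1n\impdeg(g\circ\Phi) = \deg(\Phi)/n + \frac1n\sum_{i\in[n]}\deg(g_i)$. As $n\to\infty$ the term $\deg(\Phi)/n\to 0$, so the objective is governed by the expected value $\E_{h\sim \mathcal{T}}[\min\{\deg(g): g\circ\Phi\in\gH(h)\}]$, where $\mathcal{T}$ is the mixture task distribution of step~(i)--(ii). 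Writing $h = h'\circ\psi^{-1}$ with $h'\in\gF^d$, feasibility of $g$ on top of $\Phi = T\circ\psi^{-1}$ means $g\circ T = h'$, i.e.\ $g = h'\circ T^{-1}$. So the quantity to control is $\E_{h'}[\deg(h'\circ T^{-1})]$, and the theorem reduces to showing that this expectation, over the low-degree-biased distribution on $h'$, is minimized exactly when $T^{-1}$ (equivalently $T$) is a signed coordinate permutation.

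The key technical step is a statement about how composition with an invertible Boolean transform $T^{-1}:\gZ\to\gZ$ affects degree: signed permutations preserve the degree of every $h'$ (since $\chi_S\circ(\text{signed perm})$ is again $\pm\chi_{S'}$ with $|S'|=|S|$), whereas any other bijection $T$ strictly increases the degree of at least some low-degree functions — in particular, there exists a degree-$1$ function (a single coordinate $z_j$) whose image $z_j\circ T^{-1}$ has degree $\ge 2$ unless $T$ acts as a signed permutation on that coordinate. I would prove this via the Fourier-Walsh expansion of the coordinate functions of $T$: if $T$ is a bijection of $\{\pm1\}^d$ all of whose $d$ coordinate functions have degree $1$, then each is $\pm z_{\sigma(i)}$ for some map $\sigma$, and bijectivity forces $\sigma$ to be a permutation. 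This is precisely the ``analyzing invertible Boolean transforms'' machinery the introduction advertises, and I expect it to be stated as a standalone lemma earlier in Section~\ref{appsec:proof}. Given this, for $T$ not a signed permutation, a positive fraction of the probability mass on $1$-degree tasks (those whose $h'$ is, say, a single literal aligned with a ``bad'' coordinate of $T$) gets its degree strictly increased under composition with $T^{-1}$; since $p_1 > 0$, this produces a strictly positive gap in the expected per-task degree relative to the signed-permutation choice.

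The remaining work is to assemble these pieces carefully: (a) argue the $\frac1n$ averaging kills the $\deg(\Phi)$ term and, by a law-of-large-numbers argument, the empirical average of $\deg(g_i)$ converges to its expectation under $\mathcal{T}$ (the tasks are i.i.d.); (b) show the minimizer must have $\Phi$ with $T$ a signed permutation by the strict-gap argument; (c) conclude that $\Phi^*_j(\vx)\in\{\pm z_{i_j}\}$ for a permutation $i_1,\dots,i_d$, which is exactly the claimed form. I expect the \textbf{main obstacle} to be step (b)'s quantitative core: one must show not merely that signed permutations are degree-optimal for \emph{some} tasks but that over the mixture they are \emph{uniquely} optimal in expectation — i.e.\ that for every non-signed-permutation $T$ the expected degree strictly exceeds that of a signed permutation. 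This requires controlling, uniformly over all such $T$, a guaranteed set of low-degree tasks on which composition with $T^{-1}$ is degree-increasing, and lower-bounding the induced expected-degree increment by a positive constant depending only on $d$ and $p_1$ (which then survives the $n\to\infty$ limit). Handling the case where $T$ is ``close'' to a signed permutation but not equal to one — e.g.\ agrees with a signed permutation on $d-1$ coordinates — is the delicate sub-case, and I would treat it by noting that a single mismatched, still-bijective coordinate function necessarily has degree $\ge 2$ and contaminates the images of the corresponding literal tasks.
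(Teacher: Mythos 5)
Your overall route is the same as the paper's: restrict to $\Phi = T\circ\psi^{-1}$ for a bijection $T$, note that $\deg(\Phi)/n\to 0$, reduce the objective to the expected value of $\deg(h'\circ T^{-1})$ under the mixture distribution, observe that signed permutations preserve degree, and use $p_1>0$ together with the fact that a bijection of $\{\pm1\}^d$ all of whose coordinate functions have degree $1$ must be a signed permutation. However, there is a genuine gap at exactly the point you flag as the ``main obstacle,'' and your proposed fix does not close it. Showing that some degree-$1$ tasks have their degree strictly increased by a non-signed-permutation $T^{-1}$ only controls the $k=1$ component of the mixture. Composition with a bijection increases the degree of some functions and necessarily decreases the degree of others (if $\deg(h'\circ T^{-1})>\deg(h')$, then $h''=h'\circ T^{-1}$ satisfies $\deg(h''\circ T)<\deg(h'')$), so you must also rule out that, for $k\ge 2$, the average degree over $\gF^d_k$ drops below its value under a signed permutation; such a drop could offset the strictly positive gap coming from the possibly tiny mass $p_1$. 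Your plan of ``controlling a guaranteed set of low-degree tasks on which composition is degree-increasing'' never addresses these potential decreases, so the strict-optimality conclusion does not follow as argued.

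The paper closes this with a short exchange argument (Lemma~\ref{lemma:degree_composition}): since $h'\mapsto h'\circ T$ is a bijection of $\gF^d$ (Lemma~\ref{lemma:bijection}), the image $\gF^d_k\circ T$ has the same cardinality as $\gF^d_k$, and every function that leaves $\gF^d_k$ (degree $\le k$) is exchanged one-for-one with a function of degree $>k$; hence $\sum_{h'\in\gF^d_k}\deg(h'\circ T)\ge\sum_{h'\in\gF^d_k}\deg(h')$ for \emph{every} $k\in[d]$, with equality at $k=1$ if and only if every coordinate of $T$ has degree $1$. This componentwise monotonicity, combined with strictness at $k=1$ where $p_1>0$, is what makes signed permutations uniquely optimal in expectation; the remaining pieces of your outline (averaging over i.i.d.\ tasks, the characterization of degree-$1$ bijections, and the vanishing of the $\deg(\Phi)/n$ term) match the paper's proof.
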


We make several remarks on this result:
\begin{itemize}[leftmargin=1.25em]
	\item The reason why $k$-degree tasks facilitate world model learning is that, in effect, they induce a task distribution in which lower-degree tasks on true latents are drawn with larger probabilities than in the uniform setting. This overcomes the result in Theorem~\ref{thm:no_free_lunch} by breaking the degree balance between different representations when averaged over the task distribution: representations capable of solving these lower-degree tasks in ``cheaper'' ways would now be favored.
	\item A limitation of Theorem~\ref{thm:world_model} is that we requires a non-zero probability of explicitly sampling degree-$1$ tasks, in which latent variables are also task outputs. However, we emphasize that this probability is exponentially small as $d$ becomes large, and we conjecture that it can be completely removed in many settings. See Section~\ref{appsec:proof_world_model} for more discussion.
	\item Technically, in our proof we introduce a degree analysis of $k$-degree Boolean functions composed with invertible transforms (Lemma~\ref{lemma:degree_composition}), which may be of independent interest in analyzing similar problems.
\end{itemize}


\textbf{Connection to the linear representation hypothesis.} A number of recent mechanistic interpretation studies show that LLMs often represent abstract, interpretable features as \emph{directions} in their intermediate representation space~\citep{nanda_emergent_2023,marks_geometry_2023,gurnee_language_2024}. Theorem~\ref{thm:world_model} could be viewed as a provable, Boolean version of the emergence of such linear representations: permutations and negations are precisely all degree-$1$ Boolean functions, a natural counterpart of degree-$1$ real polynomials (\ie, linear functions) in the real domain. The main significance of this result is that even when proxy tasks involve complex, non-linear functions over true latent variables $\vz$, we can still recover $\vz$ up to very simple transforms despite the presence of such nonlinearity.

\subsection{Benefits of Learning World Models}
\label{sec:main_benefits}

Up to now, we have presented sufficient conditions for learning world models. As a complement to these results, this section demonstrates provable \emph{benefits} of learning world models in the context of an out-of-distribution generalization setting introduced by~\citet{abbe_generalization_2023}.

\begin{theorem}[Benefits of learning world models]
\label{thm:benefits}
Let the latent variables during training be uniformly sampled from the Hamming ball $B_r\vcentcolon= \{\vz\in\{\pm 1\}^d\mid \#_{-1}(\vz)\le r\}$ with $r < d$, and let the latent variables in test be uniformly sampled from $\gZ$. Let $h:\{\pm 1\}^m\to\{\pm 1\}$ be a downstream task such that $h\circ\psi$ is a parity function with $\deg(h) = q > k=\ceil{\log_2\sum_{i=0}^r {d\choose i}}$. Then, if $\deg(h\mid \psi^{-1}) \ge q-r$ and $\deg(h\circ\psi) > k-q+r$, the following holds: (\romannumeral 1) the test mean square error (MSE) of any $h^*\in\hmin(h)$ is larger than $1$; (\romannumeral 2) let $\Phi^*$ be a representation that learns the world model up to negations and permutations as in Theorem~\ref{thm:world_model} and let $g^*$ be a function such that $g^*\circ\Phi^*\in\gH(h)$, then the test MSE of $g^*\circ\Phi^*$ is $0$.
\end{theorem}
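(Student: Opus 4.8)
The plan is to treat the two claims separately, both hinging on the gap between what the training distribution (supported on the Hamming ball $B_r$) can reveal and what the test distribution (all of $\gZ$) demands. For part (\romannumeral 2) I would start from the assumption that $\Phi^*$ recovers the latents up to negations and permutations, so $\Phi^*(\vx) = \pi(\epsilon\odot\vz)$ for some permutation $\pi$ and sign pattern $\epsilon$. Since $g^*\circ\Phi^*\in\gH(h)$, the composed function agrees with $h$ on all $\vx=\psi(\vz)$ with $\vz\in B_r$; but $g^*$ is a genuine function of the recovered latents, and $h\circ\psi$ is a parity (a degree-$q$ monomial $\chi_Q$ on $\vz$). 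A parity function restricted to $B_r$ still determines its Fourier expansion uniquely — here I would invoke the fact that a low-degree polynomial is determined by its values on a Hamming ball of radius at least its degree, or more directly that $\chi_Q$ composed with the sign-permutation map is again (up to a global sign) a parity in the $\Phi^*$-coordinates, so the unique degree-consistent extension is the parity itself. Hence $g^*\circ\Phi^*$ equals $h$ on \emph{all} of $\gZ$, giving test MSE $0$. The only subtlety is ensuring that the low-degree bias / uniqueness forces $g^*$ to be exactly the parity rather than some other interpolant; this is where the condition $\deg(h\mid\psi^{-1})\ge q-r$ and the fact that $\Phi^*$ only mixes coordinates linearly (degree $1$) should be used — a non-parity extension would need degree $>k$ on the recovered latents, but $g^*$ realizing the parity needs only degree $\le q$, and the problem setup restricts $g^*\in\gF^d$.

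For part (\romannumeral 1), the idea is that any flat realization $h^*\in\hmin(h)$ is, by Definition \ref{def:min_degree_solutions}, a minimum-degree function in $\gH(h)$ — it agrees with the parity $\chi_Q$ only on $\psi(B_r)$, and among all such agreeing functions it has the smallest degree, which is at most $\deg(h\mid\psi^{-1})$-related but crucially \emph{strictly less} than $q$ when $r$ is small relative to the information content $k$. I would make this precise by a counting/dimension argument: the number of $\vz\in B_r$ is $\sum_{i=0}^r\binom{d}{i} \le 2^k$, while a degree-$q$ parity is only pinned down uniquely on a set of size $\ge$ something exponential in $q$; since $q\ge k$, the ball is too small to force degree $q$, so $\hmin(h)$ contains a function $h^*$ of degree $< q$. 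Then on the test distribution (uniform over $\gZ$), I compute $\E_{\vz\sim U(\gZ)}[(h^*(\psi(\vz)) - \chi_Q(\vz))^2]$. Writing both in the Fourier basis over $\vz$, orthonormality of parities gives this MSE $= \sum_S (\widehat{h^*\circ\psi}(S) - \1[S=Q])^2 = (1-\widehat{h^*\circ\psi}(Q))^2 + \sum_{S\ne Q}\widehat{h^*\circ\psi}(S)^2$. Since $h^*\circ\psi$ has degree $<q=|Q|$, its coefficient on $Q$ is zero, so the MSE is $\ge 1$, and I would argue it is strictly larger by noting $h^*\circ\psi$ is not identically zero (it matches $\pm1$ values on $B_r$), so some other coefficient is nonzero.

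The main obstacle I anticipate is part (\romannumeral 1)'s claim that $\hmin(h)$ genuinely contains a function of degree strictly below $q$ — i.e. that the Hamming ball $B_r$ is small enough that the minimum-degree interpolant of the parity's restricted values is not the parity itself. This requires a clean lemma relating $r$, $k=\ceil{\log_2\sum_{i=0}^r\binom{d}{i}}$, and the degree needed to interpolate on $B_r$; the natural tool is that any function on $B_r$ extends to a polynomial of degree $\le r$ (since $B_r$ is ``covered'' by monomials of degree $\le r$ in an appropriate sense — e.g. via the characters of the symmetric group or a direct inclusion-exclusion on $\{-1,1\}^d$), so $\deg(\hmin(h))\le r < q$, using $q\ge k\ge r+1$ for $r<d$. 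Verifying that last inequality chain ($k\ge r+1$, hence $q\ge r+1$, hence the degree-$q$ coefficient of any degree-$\le r$ function vanishes) and confirming the degree-$\le r$ interpolation fact on the Hamming ball is the crux; everything else is Parseval bookkeeping. The condition $\deg(h\mid\psi^{-1})\ge q-r$ presumably enters to guarantee that the world-model representation in part (\romannumeral 2) actually helps, i.e. that $g^*$ on the recovered latents has degree $\le q$ small enough to be the unique extension, tying back to Theorem \ref{thm:world_model}'s guarantees.
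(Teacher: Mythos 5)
Your Parseval bookkeeping and the inequality $k\ge r+1$ agree with the paper, but the central step of your part (\romannumeral 1) has a genuine gap. The functions in $\gH(h)$ are functions of $\vx\in\{\pm1\}^m$, and the training support in $x$-space is $\psi(B_r)$, which is \emph{not} a Hamming ball: the ``any function on $B_r$ extends to a polynomial of degree $\le r$'' fact you invoke lives in $z$-space, so it does not give $\deg(\hmin(h))\le r$. What the support-size argument actually yields (the paper's Lemma~\ref{lemma:min_deg_hamming}, proved by counting $|\psi(B_r)|\le 2^k$) is only $\deg(h^*)\le k$ in the $x$-variables. More importantly, even a correct bound on $\deg(h^*)$ in $x$ does not bound $\deg(h^*\circ\psi)$ in $z$, because $\psi$ is an arbitrary nonlinear bijection; the coefficient $\widehat{h^*\circ\psi}(Q)$ can be nonzero --- for instance, if $\psi$ exposes $\chi_Q(\vz)$ as one observed coordinate, a degree-$1$ flat solution reproduces the parity exactly and has zero test MSE. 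The ingredient that rules this out is exactly the hypothesis $\deg(h\mid\psi^{-1})\ge q-r$, which you explicitly set aside as only relevant to part (\romannumeral 2): the paper combines it (via Definition~\ref{def:conditional_degree}) with $\deg(h^*)\le k\le q$ to conclude $\deg(h^*\circ\psi)\le r<q$, and only then does the Fourier coefficient on $Q$ vanish and the MSE exceed $1$.

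Your part (\romannumeral 2) is also not substantiated as written. The claim that the ``unique degree-consistent extension'' of the parity's restriction to $B_r$ is the parity itself cannot be used in the form you state it: the uniqueness guaranteed on a Hamming ball of radius $r$ concerns interpolants of degree at most $r$, and since $q\ge k\ge r+1$ such an interpolant is never the parity; likewise ``a non-parity extension would need degree $>k$'' is false, since a degree-$\le r$ non-parity interpolant of the restricted parity always exists. The paper's argument runs differently: it transfers the conditional-degree hypothesis to the learned representation, $\deg(h\mid\Phi^*)=\deg(h\mid\psi^{-1})\ge q-r$ (using that negations and permutations are degree-$1$), deduces $\deg(g^*)\le r$, and then invokes the uniqueness of degree-$\le r$ interpolants on $B_r$ (Lemma~\ref{lemma:abbe}) to pin down $g^*$ and conclude zero test error. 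Without bringing the condition $\deg(h\mid\psi^{-1})\ge q-r$ into both parts and keeping $x$-space and $z$-space degrees carefully separated, your proposal does not go through.
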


\begin{remark}
As also noted by~\citet{abbe_generalization_2023}, a practical scenario reflected by sampling from $B_r$ is \emph{length generalization}~\citep{anil2022exploring,press_train_2022}. Here we show that in this setting, a hierarchical realization with the world model is provably more generalizable than any flat realization, despite that both of them achieve zero i.i.d. test error.
\end{remark}

It has been widely believed that learning world models leads to better generalization~\citep{li_emergent_2023,richensrobust,yildirim2024task}. In comparison, Theorem~\ref{thm:benefits} indicates that such benefits typically manifest when the conditional degree $\deg(h\mid\psi^{-1})$ of the downstream task $h$ is large enough. Technically, this is because for tasks with small $\deg(h\mid\psi^{-1})$, solutions using world models still involve high-degree parity functions, whose learning is hampered by the restricted sampling from $B_r$. As a practical example, semantical representations of images can make it easier to answer questions about high-level concepts (tasks with large $\deg(h\mid\psi^{-1})$), yet may make it harder to predict the intensity of a certain pixel (tasks with small or negative $\deg(h\mid\psi^{-1})$). Together with Theorems~\ref{thm:no_free_lunch} and~\ref{thm:world_model}, this result suggests that a \emph{low-degree task distribution} is essential for both learning world models and exploiting its advantage.

\subsection{Impact of Model Architecture}
\label{sec:main_arch}

In the above analysis, we study the role of the low-degree bias with the assumption that the task solutions perfectly adhere to it in the function space. Yet, practical training of neural networks is often more involved than this abstraction. Although neural networks are known as universal function approximators~\citep{hornik1989multilayer,funahashi_approximate_1989}, prior work has shown that models with different architectures may represent the same function differently~\citep{raghu2021vision}. In particular, embedded nonlinearities such as activation functions can steer how functions are represented by neural networks~\citep{xu_what_2020,ziyin_neural_2020}.

Motivated by this, in the following we analyze how different choices of \emph{basis} in the Boolean function space can impact world model learning.
Informally, one may also view neural networks as implementing functional bases through layer-by-layer function composition, and different neural network architectures may induce different bases in the function space~\citep{teney_neural_2024}.

For a given input dimension $m$, the Fourier-Walsh transform uses parity functions $\chi_S$ as a basis of the $2^m$-dimensional vector space $\gF^m=\{f:\{\pm 1\}^m\to\sR\}$ (cf. Definition~\ref{def:fourier-walsh}). To obtain a different basis, we define a \emph{basis transform} $U$, \ie, an invertible linear transform on $\gF^m$ such that $\{U(\chi_S)\mid S\subseteq [m]\}$ is a basis of $\gF^m$. The degree of a function $f:\{\pm 1\}^m\to\sR$ under this new basis is then given by
\begin{equation}
\deg_U(f)\vcentcolon= \max\{\deg(U^{-1}(\chi_S)):\hat{f}(S)\ne 0\},
\label{eq:degree_u}
\end{equation}
where $U^{-1}$ reflects the cost of using the new basis to represent the original one.
As a result, the low-degree bias under $\deg_U(\cdot)$ may deviate from that under $\deg(\cdot)$. To capture the effect of such a basis transform, we introduce the notion of \emph{basis compatibility}.

\begin{figure*}[t]
\centering
\subcaptionbox{\label{subfig:extrapolation_example}}{\includegraphics[width=0.28\linewidth]{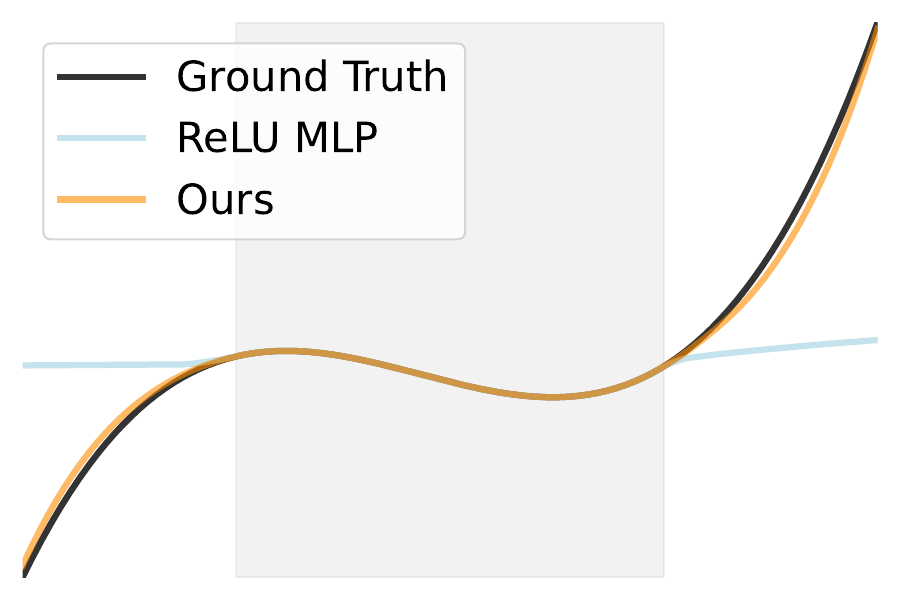}}\hspace{0.3em}
\subcaptionbox{\label{subfig:extrapolation_res}}{\includegraphics[width=0.20\linewidth]{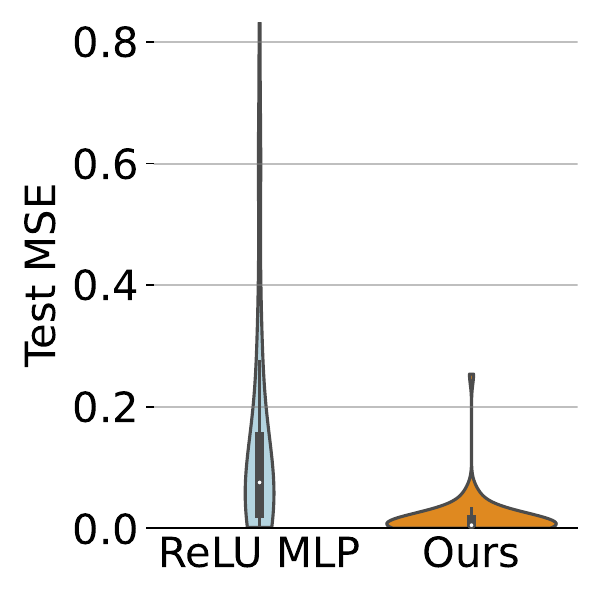}\includegraphics[width=0.205\linewidth]{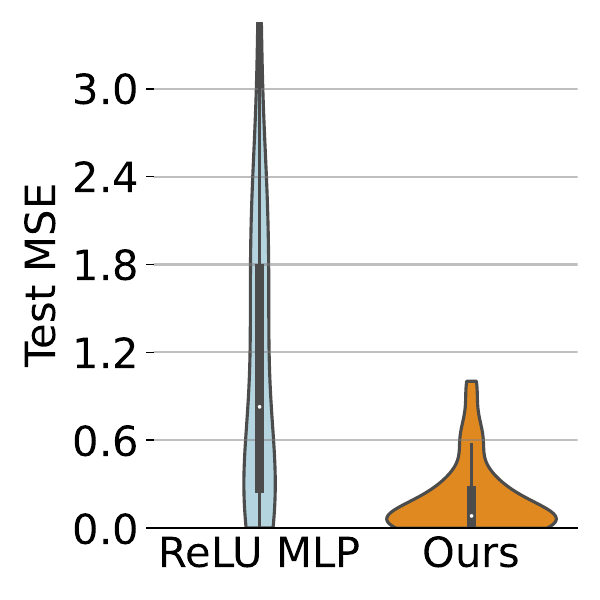}}\hspace{0.05em}
\subcaptionbox{\label{subfig:physics}}{\includegraphics[width=0.28\linewidth]{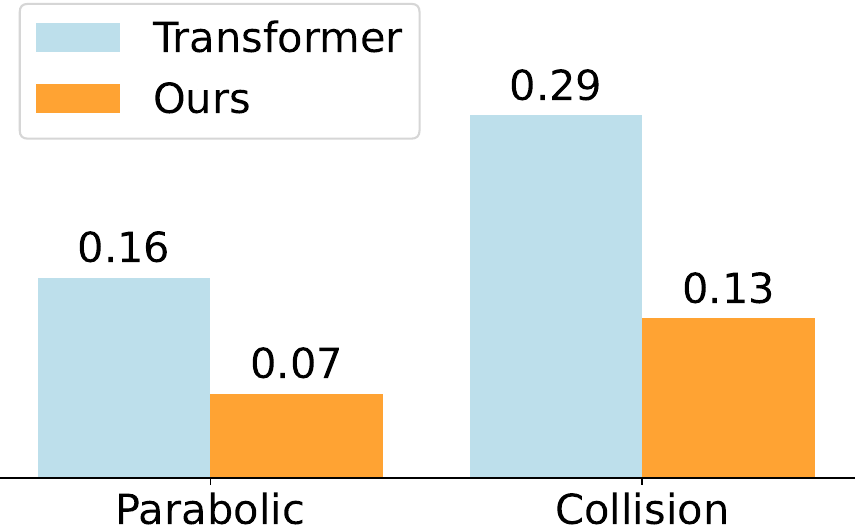}}
\caption{\textbf{Empirical results.} (a) An example of extrapolating a degree-$3$ polynomial. Shaded region indicates the training region. (b) Violin plots of the test mean square error (MSE) of ReLU MLPs and our models in extrapolating degree-$2$ (left) and degree-$3$ (right) polynomials. (c) Results for learning physical laws. Each column indicates the task and \emph{out-of-distribution} test MSE averaged over 5 runs.}
\end{figure*}

\begin{definition}[Compatibility]
\label{def:compatibility}
We say a basis transform $U$ is \emph{compatible} if $\deg(U(\chi_S)) = \deg(\chi_S),\,\forall S\subseteq [m]$.
\end{definition}

In other words, a basis transform is compatible if it preserves the degrees of all basis functions. Our next result shows the impact of basis compatibility on world model learning.

\begin{theorem}[Impact of basis compatibility]
\label{thm:basis}
Consider the same setting as in Theorem~\ref{thm:world_model} with $n\to\infty$. Let $U$ be a basis transform and let the degrees of $\Phi$ and $g$ be measured under the new basis $\{U(\chi_S)\}$.  Then, (\romannumeral 1) if $U$ is compatible, $\Phi^*$ learns the world model up to negations and permutations; (\romannumeral 2) if $U$ is not compatible, then $\Phi^*$ does not learn the world model up to degree-$1$ transforms and for every $k\in[d]$, there exists an incompatible $U$ such that $\Phi^* = T\circ \psi^{-1}$, where $T$ is an invertible transform on $\gZ$ satisfying $\max_{i\in [d]} \deg(T^{-1}_i) \ge k$.
\end{theorem}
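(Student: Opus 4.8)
The plan is to reduce both parts to Theorem~\ref{thm:world_model} by tracking how a basis transform $U$ interacts with the realization degree. First, I would establish the key bookkeeping identity: for any realization $g\circ\Phi$, the realization degree measured under the new basis, $\impdeg_U(g\circ\Phi) = \deg_U(g) + \deg_U(\Phi)$, where by \eqref{eq:degree_u} each $\deg_U(\cdot)$ replaces the weight $|S|$ of a frequency $\chi_S$ by $\deg(U^{-1}(\chi_S))$. Part~(i) is then almost immediate: if $U$ is compatible, then $\deg(U(\chi_S)) = \deg(\chi_S) = |S|$ for all $S$, and since $U$ is an invertible linear map permuting... more precisely, since $U$ preserves the degree filtration $\gF^n_0\subseteq\gF^n_1\subseteq\cdots$, its inverse does too, so $\deg(U^{-1}(\chi_S)) = |S|$ as well. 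Hence $\deg_U(\cdot) = \deg(\cdot)$ identically, the optimization problem \eqref{eq:world_model} is literally unchanged, and Theorem~\ref{thm:world_model} applies verbatim to give that $\Phi^*$ learns the world model up to negations and permutations.

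For part~(ii), the goal is to exhibit a single incompatible $U$ that makes the min-degree-under-$U$ solution a representation $\Phi^* = T\circ\psi^{-1}$ with $T$ far from a permutation-negation map --- indeed with some coordinate $T^{-1}_i$ of degree $\ge k$. The construction I have in mind is to pick a ``bad'' target transform $T$ first --- e.g. one whose component functions are high-degree parities of the latent bits, chosen so that $T$ is invertible on $\{\pm1\}^d$ (for instance an upper-triangular-in-the-parity-sense map, $T_j(\vz) = z_j\prod_{i<j}z_i$ or similar, which is a bijection with inverse also built from parities) --- and then design $U$ on each $\gF^n$ so that, under $\deg_U$, the frequencies that $T$ "uses" become cheap while the identity-like frequencies become expensive. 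Concretely, I would let $U$ act so that $\deg_U$ assigns degree $1$ (or low degree) to exactly those parity functions appearing in the Fourier expansions of the components of $T$ and $T^{-1}$, and high degree to the singletons $\chi_{\{i\}}$. Then for the tasks sampled as in Theorem~\ref{thm:world_model}, a realization routed through $\Phi = T\circ\psi^{-1}$ achieves smaller $U$-realization-degree than one routed through $\psi^{-1}$ itself, because the low-degree ($k$-degree) tasks on the true latents translate into low-$\deg_U$ functions on top of $T\circ\psi^{-1}$ while being high-$\deg_U$ on top of $\psi^{-1}$. Re-running the averaging argument of Theorem~\ref{thm:world_model} (now with $\deg$ replaced by $\deg_U$ throughout, which is legitimate since $\deg_U$ is again the max-degree with respect to a genuine basis) identifies $T\circ\psi^{-1}$ as the $n\to\infty$ minimizer, and by construction $\max_i\deg(T^{-1}_i)\ge k$.

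The main obstacle is part~(ii): one must verify that such a $U$ actually exists as an \emph{invertible linear map on $\gF^n$ for every $n$} that is simultaneously (a) incompatible, (b) consistent across dimensions in whatever sense the realization-degree accounting requires (since $\Phi$ maps $\{\pm1\}^m\to\{\pm1\}^d$ and $g$ maps $\{\pm1\}^d\to\sR$, the basis must be specified on both $\gF^m$ and $\gF^d$), and (c) such that the re-weighted degree $\deg_U$ still makes the Theorem~\ref{thm:world_model} machinery go through --- in particular that $\deg_U$ of the true-latent singleton is genuinely $\ge k$, forcing the stated lower bound on $\deg(T^{-1}_i)$. I would handle this by giving an explicit small "gadget" $U$ that swaps the roles of a chosen low-degree frequency set with the singletons and acts as the identity elsewhere, checking invertibility by exhibiting its inverse directly, and then confirming incompatibility is exactly the statement that the swap changes some $\deg(\chi_S)$. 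The rest --- re-deriving the minimizer under $\deg_U$ --- reuses Lemma~\ref{lemma:degree_composition} and the counting argument from the proof of Theorem~\ref{thm:world_model} with cosmetic changes, so I do not expect new difficulty there.
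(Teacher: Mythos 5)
Your proposal takes essentially the same route as the paper: part (i) is exactly the paper's argument (compatibility forces $\deg_U(f)=\deg(f)$ for all $f$, so the optimization problem and hence Theorem~\ref{thm:world_model} are unchanged), and your part (ii) gadget---a permutation of the parity basis swapping the singletons $\chi_{\{i\}}$ with chosen degree-$k$ parities, identity elsewhere---is precisely the incompatible $U$ the paper constructs. The only difference is presentational: rather than fixing a target $T$ and re-deriving that $T\circ\psi^{-1}$ is the minimizer (the step you defer as ``cosmetic''), the paper argues directly that under this $U$ any minimizer must have some $T^{-1}_j$ equal to a degree-$k$ parity in order to keep the $p_1$-weighted degree-$1$ component of the objective at its minimum, which immediately gives $\max_{i\in[d]}\deg(T^{-1}_i)\ge k$.
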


Theorem~\ref{thm:basis} implies that to facilitate world model learning, the model architecture should induce a basis that preserves the degree of the ``natural'' basis under which low-degree proxy tasks are drawn. Thus, we can also interpret basis compatibility as the compatibility between the model and the tasks. This explains why neural networks with different activation functions can exhibit different complexity biases~\citep{abbe_generalization_2023,teney_neural_2024}.
Basis compatibility is also related to \emph{algorithmic alignment}~\citep{xu_what_2020}, which suggests that tasks with algorithmic structures aligned with the computational structures of neural networks could be learned more sample-efficiently. In comparison, we focus on the identifiability of world models instead of sample efficiency, and we provide a unified framework for analyzing similar concepts via the lens of the low-degree bias.

\section{Algorithmic Implications}
\label{sec:implications}

In this section, we illustrate the algorithmic implications of \emph{basis compatibility} through two representative tasks: polynomial extrapolation~\citep{xu_how_2021} and learning physical laws~\citep{kang_how_2024,motamed_generative_2025}. The first task provides an illustrative example of our results in Section~\ref{sec:main_arch}, while the second is of greater practical interest for learning real-world models and is related to recent efforts in developing video prediction models as world simulators~\citep{brooks2024video}. While our experiments are currently limited in scale and mainly serve as proof-of-concept demonstrations, we view extending the ideas discussed in this section to broader contexts as a promising future direction.

\subsection{Polynomial Extrapolation}
\label{sec:extrapolation}


We begin by a synthetic task in which we train multilayer perceptrons (MLPs) to fit real polynomials $P_n(x) = \sum_{i=0}^n a_i x^n$ (see Section~\ref{appsec:extrapolation} for details). The space of polynomials has a standard basis $\{1,x,x^2,\ldots\}$, making it a natural counterpart of the Boolean function space with the parity basis. Recovering all basis functions used by $P_n$ could generate any polynomial by linearly combining these basis functions, which enables extrapolation.

As shown by prior work~\citep{xu_how_2021} (see also Figure~\ref{subfig:extrapolation_example}), common ReLU MLPs cannot extrapolate degree-$k$ polynomials with $k>1$ beyond the training region. Our framework explains why this happens via the incompatibility between ReLU and the polynomial basis: ReLU MLPs can learn a basis function $\hat{f}$ that approximates $f(x)=x^k$ arbitrarily well in any finite training region, but the \emph{simplest} $\hat{f}$ that fits the data is not $f$ itself, which is expensive to represent using the composition of ReLU. As a result, the actually learned $\hat{f}$ differs from $f$ and hence does not extrapolate well.

Motivated by this explanation, here we provide a simple fix: replacing a portion of the ReLU activation functions with functions that are more compatible with the polynomial basis. In practice, we replace half of ReLU functions in each MLP layer by one of the two functions including the identity function $\sigma(x) = x$ and the quadratic function $\sigma(x) = x^2$. Representing $f(x) = x^k$ would be much easiler using these functions, and we thus expect that a neural network with the low-complexity bias would then use them instead of the remaining ReLU despite the same expressive ability they have, resulting in $\hat{f}\approx f$. As shown in Figure~\ref{subfig:extrapolation_res}, this simple method indeed leads to significant improvement in extrapolation. Please see Section~\ref{appsec:extrapolation_res} for more results and discussion.

One may wonder that in this task, we still need to know the basis of the task a priori to achieve basis compatibility. However, we emphasize that even without prior knowledge, we can still use different activation functions simultaneously and rely on the low-complexity bias of neural networks to \emph{self-adaptively} select functions that are the most compatible with the task, as empirically shown in our experiments. We thus conjecture that this approach could be quite universal. Indeed, in the next section we will show that exactly the same approach can also bring benefits in a distinct scenario.

\subsection{Learning Physical Laws}

We now show that the approach in Section~\ref{sec:extrapolation} also benefits a sequence prediction task aiming at learning physical laws. Correctly abstracting fundamental physical laws is essential for any model to be a real ``world model'' for the physical world~\citep{motamed_generative_2025}. Notably, a model that ``understands'' the physical laws is expected to generalize these laws to \emph{unseen distributions} rather than fit them only in the training domain. Following~\citet{kang_how_2024}, we generate two types of object motion sequences reflecting basic physical laws: (\romannumeral 1) single-object parabolic motion, and (\romannumeral 2) two-object elastic collision motion (see Section~\ref{appsec:physics} for more details). We train a transformer~\citep{vaswani_attention_2017} to predict the motion of objects conditional on the first few frames in the motion sequence. The model is then evaluated in an \emph{out-of-distribution generalization} setup with objects having different initial velocities and sizes. To apply our method, we simply replace every MLP in the transformer with our modified MLP in Section~\ref{sec:extrapolation} (we also replace the remaining ReLUs with GELUs).

As shown in Figure~\ref{subfig:physics}, our model achieves lower prediction error than transformer in both settings. Note that both models achieve near zero training error. Thus, the fact that our model generalizes better out-of-distribution is not because it fits the data better, but due to it better capturing the underlying laws in object movements. See Section~\ref{appsec:physics_res} for the predicted motions from both models.

\section{Limitations and Future Work}
\label{sec:conclusion}

This work is an initial step towards formally understanding world model learning and the internal representations learned by neural networks, and we can see many exciting future directions. (\romannumeral 1) Neural networks in practice learn \emph{hierarchical representations}~\citep{bengio_representation_2013}, which may require a more structured modeling of the data generation model and the learned representations. (\romannumeral 2) We do not assume the latent variables to have any specific structure (\eg, causal relations between latent variables); extending our analysis to causal representation learning by combining our results and existing results in causal inference~\citep{scholkopf_toward_2021,richensrobust} is an interesting direction. (\romannumeral 3) We primarily focus on the low-complexity bias of neural networks; considering other implicit biases~\citep{allen-zhu_towards_2023,zhang_feature_2024} of neural networks and more fine-grained complexity measures are also promising directions for future work.




\section*{Acknowledgements}

This work was supported in part by the National Key Research and Development Program of China under STI 2030--Major Projects 2021ZD0200300, and in part by the National Key Research and Development Program of China No. 2024YDLN0006, and in part by the National Natural Science Foundation of China under Grant 62176133.



\section*{Impact Statement}


This paper presents work whose goal is to advance our understanding of the inner workings of deep neural networks.
There are many potential societal consequences of our work, none which we feel must be specifically highlighted here.



\bibliography{ref}
\bibliographystyle{icml2025}

\newpage
\appendix
\onecolumn




\section{Related Work}
\label{appsec:related_work}

\paragraph{World models.} 
The term “world model” in the machine learning context originates from~\citet{ha_world_2018}, who describe it as a human-like “mental model of the world” that learns an abstract representation of information flow and can be used to predict future events. This definition closely aligns with the concept of a “mental model” in cognitive science, \ie, an internal representation of external reality~\citep{craik1967nature}, making it naturally connected to the field of \emph{representation learning} in machine learning~\citep{bengio_representation_2013}.

Recently, the remarkable capabilities of large language models (LLMs) have sparked a scientific debate on whether these models merely exploit superficial statistical patterns to generate predictions without genuine “understanding” of natural language~\citep{bender_dangers_2021,mitchell_ai_2023}, or whether they develop models that serve as compact and interpretable representations of the underlying data generation process. A series of studies have demonstrated the presence of internal representations in language models trained on synthetic tasks~\citep{li_emergent_2023,nanda_emergent_2023,jin_emergent_2024}. For real-world LLMs, research in mechanistic interpretability suggests that these models learn compact, interpretable, and causal features within their intermediate layers~\citep{li2021implicit,bricken2023towards,marks_geometry_2023,gurnee_language_2024}. At the same time, many studies report a significant decline in LLM performance on tasks that are assumed to be underrepresented in their pre-training distribution~\citep{wu_reasoning_2023,berglund_reversal_2024,mirzadeh_gsm-symbolic_2024}.

Beyond sequence models, world models have also gained attention in reinforcement learning~\citep{ha_world_2018,lecun2022path,xie_making_2024}, probabilistic learning~\citep{friston2021world,wong_word_2023}, and causal discovery~\citep{richensrobust}. However, despite the growing number of empirical studies, the theoretical foundations of world model learning remain largely unexplored.

\paragraph{Latent variable recovery.} Our definition of world models falls within a broad class of latent variable recovery problems~\citep{everett2013introduction}, where observable data is generated by latent variables through an unknown generation function. It is well established that, without additional assumptions, recovering the true latent variables from observed data is generally impossible if the generation function is nonlinear~\citep{hyvarinen_nonlinear_1999,khemakhem_variational_2020}, a fundamental result in non-linear independent component analysis (non-linear ICA).

To overcome this impossibility, subsequent research has explored various structural assumptions on latent variables, such as conditional independence between the latent variables and an observable auxiliary variable~\citep{hyvarinen_nonlinear_2019,khemakhem_variational_2020,lee_predicting_2021}, distributional constraints~\citep{zimmermann_contrastive_2021,wei_why_2021}, and causal interventions~\citep{von_kugelgen_self-supervised_2021,ahuja_interventional_2023,von_kugelgen_nonparametric_2023}. Some studies have also linked these assumptions to contrastive learning~\citep{hyvarinen_nonlinear_2019,tosh_contrastive_2021,zimmermann_contrastive_2021}. However, incorporating such structural assumptions often leads to complex and less scalable training paradigms compared to the pre-training framework of modern LLMs (\ie, next-token prediction). As a result, these studies do not directly address the central question of our work: \emph{Can the ongoing paradigm of LLMs learn world models?} Meanwhile, the fact that LLMs already acquire non-trivial representations~\citep{li2021implicit,bricken2023towards,marks_geometry_2023,gurnee_language_2024} suggests that they must leverage some form of \emph{implicit bias} rather than explicit structural assumptions on input data, which motivates our study.

\paragraph{Implicit bias of neural networks.}
Overparameterized neural networks have been shown to possess the capacity to memorize entire training datasets~\citep{zhang_understanding_2017}. However, their ability to generalize well in many settings suggests that they exhibit implicit preferences for certain solutions—commonly referred to as \emph{implicit bias}. In simple models, such as linear models, random feature models, and two-layer neural networks, a body of theoretical work demonstrates that (stochastic) gradient descent imposes specific forms of implicit regularization on the learned solutions~\citep{soudry_implicit_2018,gunasekar_characterizing_2018,gunasekar_implicit_2018,bartlett_benign_2020,chizat_implicit_2020,lyu_gradient_2021,allen-zhu_towards_2023,andriushchenko_sgd_2023,abbe_generalization_2023,zhang_feature_2024}.

Empirical studies further suggest that practical neural networks extend many of these implicit regularization effects through a form of \emph{simplicity bias}, favoring "simpler" solutions over more complex ones~\citep{perez_deep_2019,kalimeris_sgd_2019,xu2019frequency,bhattamishra_simplicity_2023,huh_low-rank_2023,zhao_m3pl_2024}. However, the notion of "simplicity" is often defined empirically and varies across studies. In this work, we formalize the concept of simplicity within our theoretical framework and analyze its relationship to learning world models.

\paragraph{Complexity measures.} Kolmogorov complexity~\citep{li2008introduction} in algorithmic learning theory provides a unified framework for quantifying the complexity of any object, including functions. However, it is not computable in general. In machine learning, conventional statistical learning theory typically employs the VC dimension as a complexity measure to bound the generalization error of models~\citep{vapnik_nature_1999}. More recent complexity measures include Rademacher and Gaussian complexities~\citep{bartlett_rademacher_2002}. However, these measures primarily assess the complexity of \emph{function classes} rather than individual \emph{functions}.

A growing body of research introduces complexity measures tailored for neural networks trained via (stochastic) gradient descent~\citep{bartlett_spectrally-normalized_2017,jacot_neural_2018,zhou_non-vacuous_2019,lotfi_pac-bayes_2022,chatterjee_neural_2024}, often leveraging them for generalization analysis. However, these measures inherently depend on the neural network parameterization, rather than capturing function complexity independently of specific parameterizations. Some studies propose alternative complexity metrics inspired by Kolmogorov complexity in the machine learning context~\citep{xu_what_2020,liu_grokking_2023}, but these metrics are typically problem-specific and do not enable a direct complexity analysis in the function space, unlike the approach we take in this work. See Section~\ref{appsec:discussion} for more discussion on some commonly-used complexity measures.



\section{Preliminaries on Boolean Function Analysis}
\label{appsec:preliminaries}

This section introduces basic definitions and properties of Boolean functions for readers who are not familiar with Boolean function analysis. For a more detailed introduction, we recommend the first few chapters of the book by~\citet{odonnell_analysis_2021}.

Following the convention in Boolean function analysis, throughout this work we use the term \emph{Boolean functions} to refer to functions with the form
\begin{equation}
f:\{-1,1\}^n\to\gY,
\end{equation}
where $\gY$ could be any subspace of $\sR^d$ for an arbitrary integer $d$, such as $\sR$, $\{-1,1\}^d$, etc.

There are different ways to represent the input \emph{bits} in the above definiton. A natural way is to use $0$ and $1$ as elements of the field $\sF_2$. In this way, a (single-output) Boolean function $f$ with $n$ input coordinates (bits) is represented by $f:\{0,1\}^n\to\{0,1\}$. It is also convenient to use $-1$ and $1$, thought as real numbers, and define $f$ as a function from $\{-1,1\}^n$ to $\{-1,1\}$. The latter representation can be easily transformed from the former one by the mapping $b\mapsto (-1)^b$ over $\{0,1\}$. In our analysis, we will mostly use the latter representation as it is more compatible with the Fourier-Walsh transform of Boolean functions.

\paragraph{Fourier-Walsh transform.} As in Definition~\ref{def:fourier-walsh}, every function $\{-1,1\}^n\to\sR$ can be expressed as a multilinear polynomial, \ie, we have
\begin{align}
f(\vx) = \sum_{S\subseteq [n]}\hat{f}(S)\chi_S(\vx),\label{appeq:fourier}
\end{align}
where $\hat{f}(S)$ are Fourier-Walsh coefficients and $\chi_S(\vx) = \prod_{i\in S}x_i$ are monomials, also called \emph{parity functions}. The name parity function is due to the fact that it computes the logical parity or XOR of the bits. As an example, consider $f = \max_2$, \ie, the maximum function on 2 bits, and its Fourier-Walsh transform is $f(x_1,x_2) = \frac{1}{2} + \frac{1}{2}x_1 + \frac{1}{2}x_2 - \frac{1}{2}x_1 x_2$.

The existence of Fourier-Walsh transform can be proved by construction, using an interpolation method similar to constructing Lagrange polynomials. For each point $\va = (a_1,\ldots,a_n)\in\{-1,1\}^n$, define
\begin{equation}
\vone_\va(\vx) = \prod_{i\in [n]} \frac{1 + a_i x_i}{2}.
\end{equation}
It is easy to verify that $\vone_\va(\vx)$ takes value $1$ when $\vx = \va$ and $0$ otherwise. 
We then have
\begin{equation}
f(\vx) = \sum_{\va\in\{-1,1\}^n}f(\va)\vone_\va(\vx)
\label{appeq:fourier_proof}
\end{equation}
and arrive at a polynomial representation of $f$.
Since any factors of $x_i^2,\forall i\in [n]$ can be replaced by 1, we further know that this polynomial must be multilinear.

For functions $f:\sF_2^n\to\sR$, we can also define their Fourier-Walsh transforms by extending the $\chi_S$ notation using the mapping $b\mapsto (-1)^b$:
\begin{equation}
\chi_S(\vx) = (-1)^{\sum_{i\in S}x_i},
\end{equation}
where $\vx\in\sF_2^n$. We can thus write the Fourier-Walsh transform of $f:\sF_2^n\to\sR$ in the same form as equation~\eqref{appeq:fourier}.

\paragraph{Parity functions, orthogonality, and Parseval's Theorem.} We define the \emph{inner product} $\langle\cdot,\cdot\rangle$ of functions $f:\{-1,1\}^n\to\sR$ and $g:\{-1,1\}^n\to\sR$ by
\begin{equation}
\langle f, g\rangle = \E_{\vx\sim U(\{-1,1\}^n)}[f(\vx)g(\vx)].
\end{equation}
A key fact about parity functions is that they are orthogonal under the above definition: for every $S,S'\subseteq[n]$, we have
\begin{equation}
\langle \chi_S,\chi_{S'} \rangle = \left\{\begin{aligned} &1,\ S = S'\\ &0,\ \mathrm{otherwise} \end{aligned} \right. .
\end{equation}
Consequently, if we consider the vector space $\gV$ containing all functions $f:\{-1,1\}^n\to\sR$, then parity functions form an \emph{orthonormal basis} of $\gV$. It can be verified that the Fourier-Walsh coefficients in equation~\eqref{appeq:fourier} satisfy
\begin{equation}
\hat{f}(S) = \langle f,\chi_S \rangle
\label{appeq:parity_fourier}
\end{equation}
for every $S\subseteq [n]$.

\emph{Parseval's Theorem} shows that for every $f:\{-1,1\}^n\to\sR$,
\begin{equation}
\langle f,f \rangle = \sum_{S\subseteq [n]}\hat{f}(S)^2.
\end{equation}
In particular, if $f$ is Boolean-valued, then $\langle f,f\rangle = 1$. The uniqueness of the Fourier-Walsh transform of Boolean functions can also be proved using the Parseval's Theorem~\citep{odonnell_analysis_2021}.


\section{Discussion on Complexity Measures}
\label{appsec:discussion}

Besides the Boolean complexity measures considered by this work, there are also \emph{continuous} approximations of Kolmogorov complexity, e.g., continuous complexity measures based on standard Fourier transform, the order of approximation polynomials, or compression~\citep{xu2019frequency,perez_deep_2019,jiang_low-resource_2023,teney_neural_2024}. However, an important downside of these continuous complexity measures is that they do not apply to \emph{arbitrary} continuous functions, neither constituting a \emph{basis} of the continuous function space, which makes them less amenable to theoretical analysis.

For example, the linear function $f(x) = ax + b, x\in \sR$ does not have a standard Fourier transform since it is not absolutely integrable. Even if we constrain it to some finite interval $[-L, L]$ and periodically extend it across the real axis, the resulting Fourier transform would involve frequencies up to infinity, suggesting infinite complexity if we use the highest frequency as the complexity measure. This is in contrast to the empirically observed simplicity bias of gradient descent, where linear models are often learned first (see e.g.,~\citet{kalimeris_sgd_2019}). The measure of polynomial order has a similar problem since not all continuous functions can be linearly represented by a set of polynomials. In fact, the degree of Boolean functions could be viewed as a ``discrete'' version of polynomial order, but it is more principled since the Boolean function space indeed has a polynomial basis given by the Fourier-Walsh transform. Finally, we found other compression-based approximations of Kolmogorov complexity hard to analyze theoretically since they rely on specific compression algorithms.

\section{Numerical Experiments}
\label{appsec:numerical}

This section presents numerical experiments that corroborate our theoretical results. In all experiments, we train MLPs using stochastic gradient descent (SGD) and binary data from $\{\pm 1\}^m$, generated by the data generation model in Table~\ref{table:dgm_rebuttal} with $d = m = 10$. A summarization of results is as follows:
\begin{itemize}
\item \textbf{Low-degree bias of neural networks:} Figure~\ref{fig:numerical_low_degree_bias} shows that MLPs indeed have a low-degree bias when solutions with different degrees exist for the same training distribution.
\item \textbf{Impact of multi-task training on identifying data-generating variables:} Figure~\ref{fig:numerical_multi_task} shows that the identification error of data-generating variables significantly decreases as more training tasks are drawn, while single-task learning does not lead to similar identification.
\item \textbf{Identifying data-generating variables benefits out-of-distribution (OOD) generalization:} Figure~\ref{fig:numerical_ood} shows that in a Hamming sampling setting as in Theorem~\ref{thm:benefits}, MLPs with a representation $\Phi$ that identifies true data-generating variables achieve superior OOD generalization compared to MLPs without such a representation.
\end{itemize}

Please refer to the figures for more details and discussion.

It is worth noting that in all experiments, we use MLPs with quadratic activation $\sigma(x) = x^2$ to ensure the compatibility between the model and the Fourier-Walsh transform (i.e., the \emph{polynomial} expression) of Boolean functions. Our preliminary experiments showed that the low-degree bias and the identification error are very sensitive to the activation function used, which is consistent with our basis compatibility analysis in Section~\ref{sec:main_arch}. This also explains the phenomenon reported by~\citet{abbe_generalization_2023} that MLPs with ReLU activation do not exhibit a ``pure'' low-degree bias for Boolean inputs.

\begin{table}[htbp]
  \centering
  \caption{The data generation model $\vx = \psi(\vz)$ used in numerical experiments.}
  \label{table:dgm_rebuttal}
  \begin{tabular}{cccccccccc}
    \toprule
    $x_1$ & $x_2$ & $x_3$ & $x_4$ & $x_5$ & $x_6$ & $x_7$ & $x_8$ & $x_9$ & $x_{10}$ \\
    \midrule
    $z_1$ & $z_1 z_2$ & $z_1 z_2 z_3$ & $z_1 z_2 z_3 z_4$ & $z_1 z_2 z_3 z_4 z_5$ & $z_2 z_3 z_4 z_5 z_6$ & $z_3 z_4 z_5 z_6 z_7$ & $z_4 z_5 z_6 z_7 z_8$ & $z_5 z_6 z_7 z_8 z_9$ & $z_6 z_7 z_8 z_9 z_{10}$ \\ 
    \bottomrule
  \end{tabular}
\end{table}

\begin{figure}[htbp]
\centering
\subcaptionbox{Task 1: $f(\vx) = x_1 x_4 x_5$}{
\includegraphics[width=0.37\linewidth]{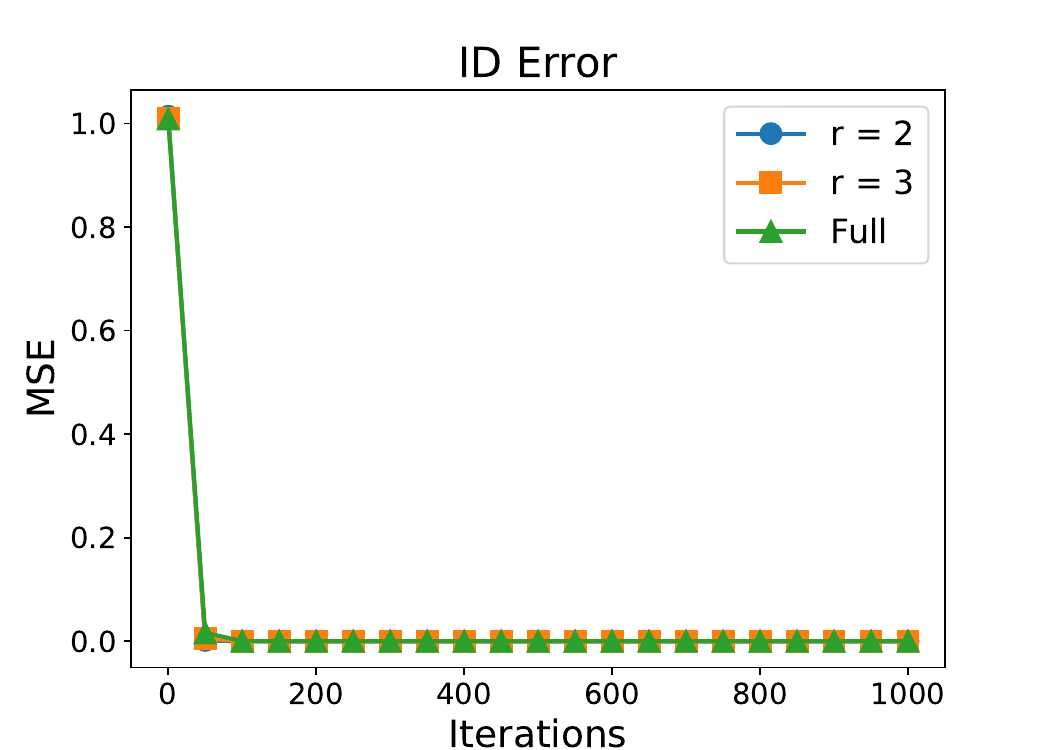}
\includegraphics[width=0.37\linewidth]{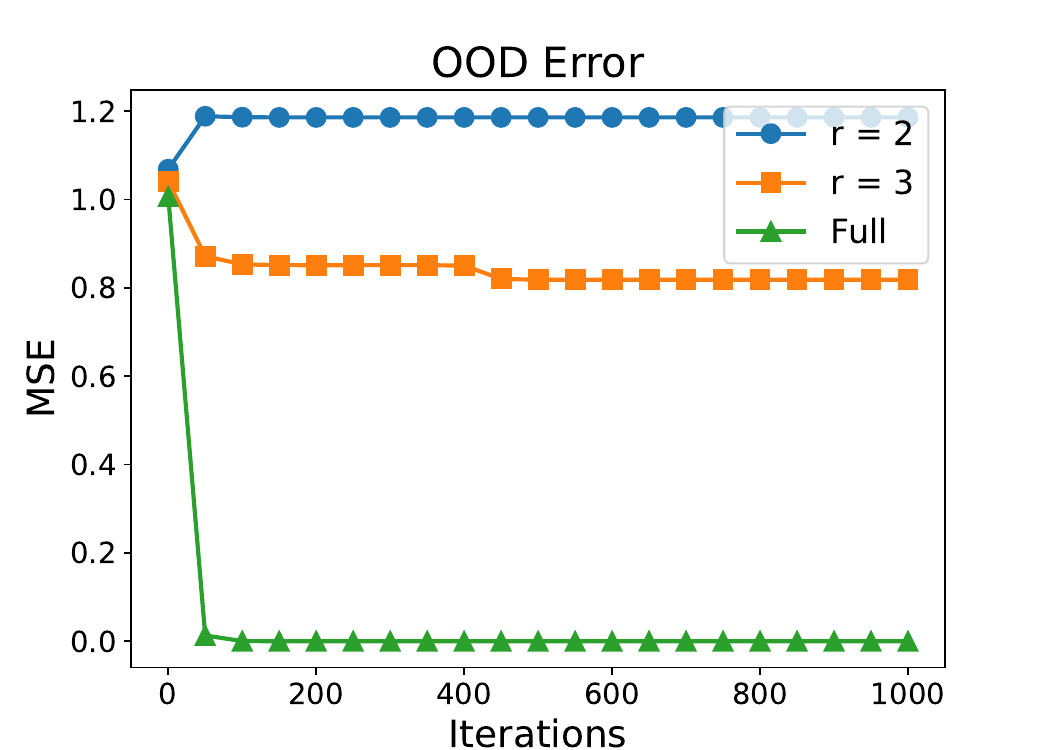}
}\\ 
\subcaptionbox{Task 2: $f(\vx) = x_1 x_3 + 0.5x_1 x_5 x_6$}{
\includegraphics[width=0.37\linewidth]{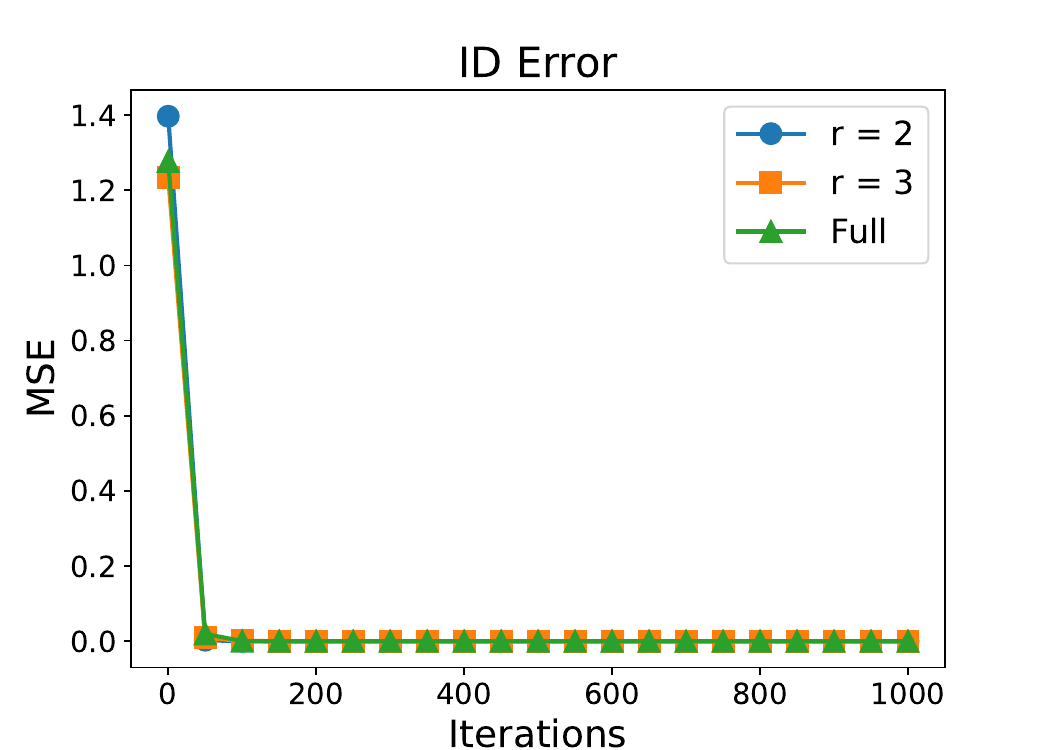}
\includegraphics[width=0.37\linewidth]{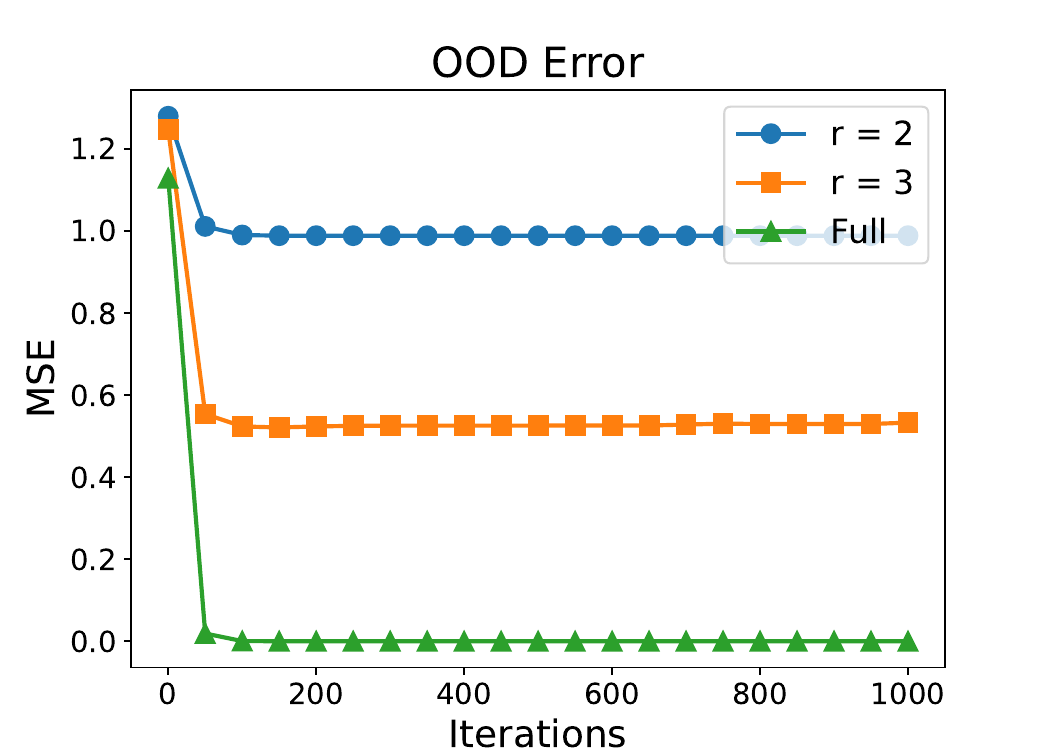}
}
\caption{\textbf{Low-degree bias of neural networks:} Training latents are uniformly sampled from the Hamming ball $B_r=\{\vz\in\{\pm 1\}^d\mid \#_{-1}(\vz) \le r\}$; the out-of-distribution (OOD) test latents are uniformly sampled from $\{\pm 1\}^d\,(d=10)$. For small $r$, \emph{low-degree realizations} of tasks (target functions) exist in the training distribution. A model that learns these low-degree realizations would have low in-distribution (ID) error yet high OOD error due to the loss of high-degree components in target functions. We observe that while SGD-trained MLPs learn ground-truth target functions when $r=10$ (denoted by ``Full'' in the legends), they indeed fit low-degree realizations for training data with $r=2$ and $r=3$, resulting in high OOD error. We separately train a three-layer MLP for each task and for each $r$.}
\label{fig:numerical_low_degree_bias}
\end{figure}

\begin{figure}[htbp]
\centering
\subcaptionbox{$2$-degree tasks}{
\includegraphics[width=0.315\linewidth]{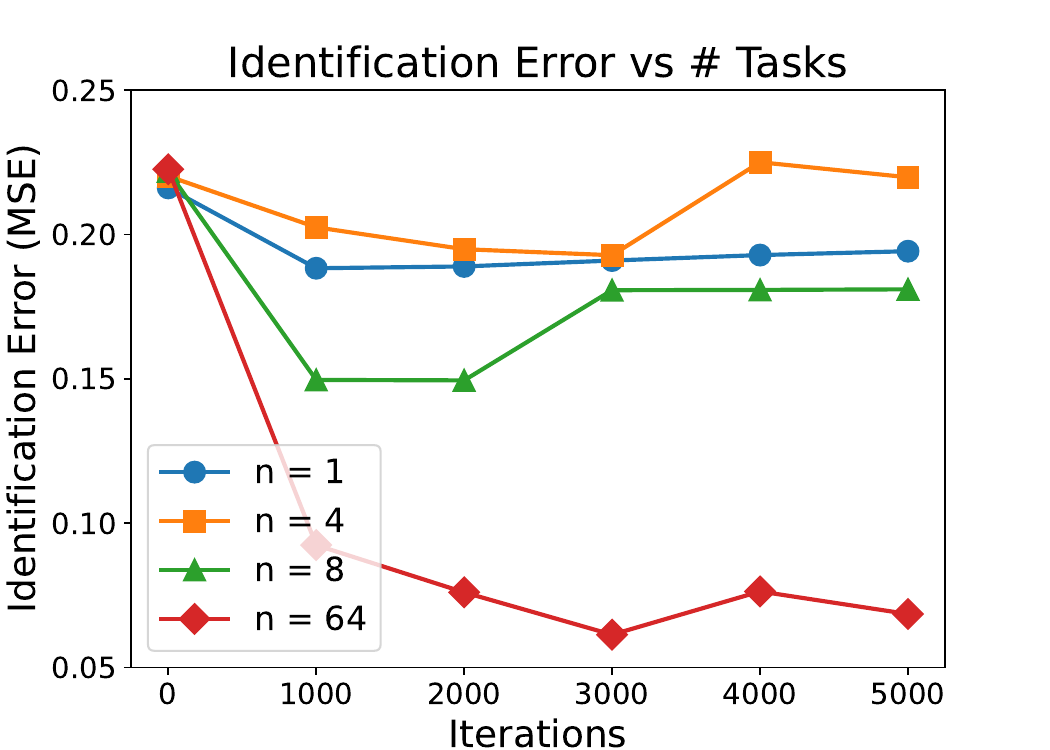}
}
\subcaptionbox{$3$-degree tasks}{
\includegraphics[width=0.315\linewidth]{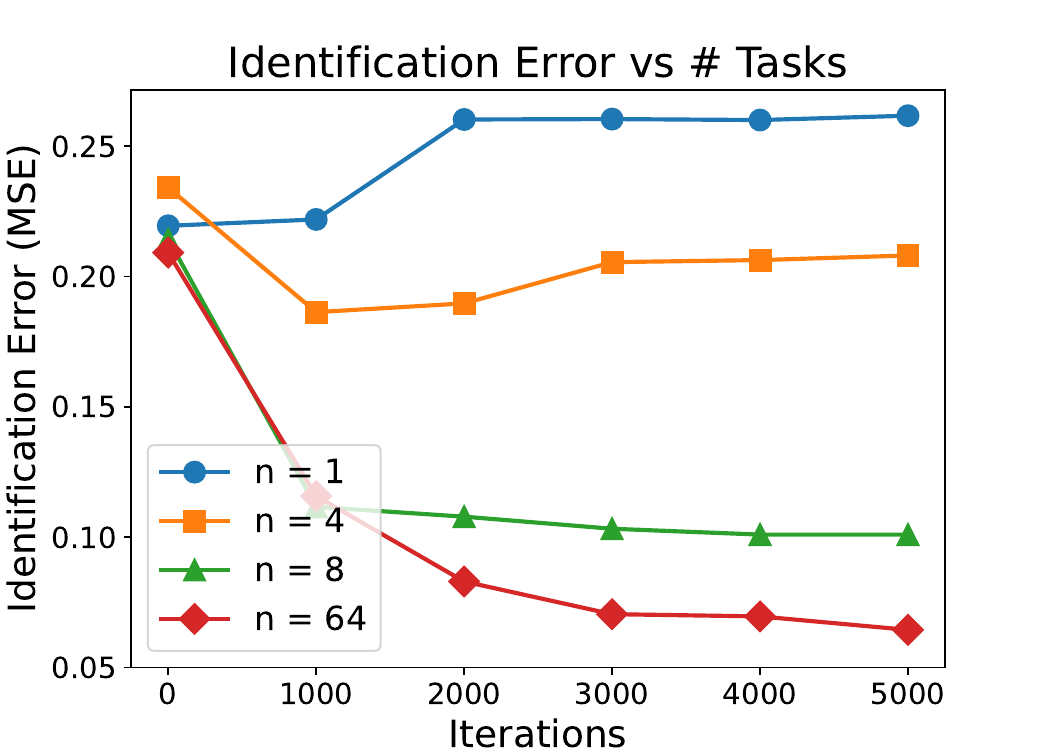}
}
\subcaptionbox{$4$-degree tasks}{
\includegraphics[width=0.315\linewidth]{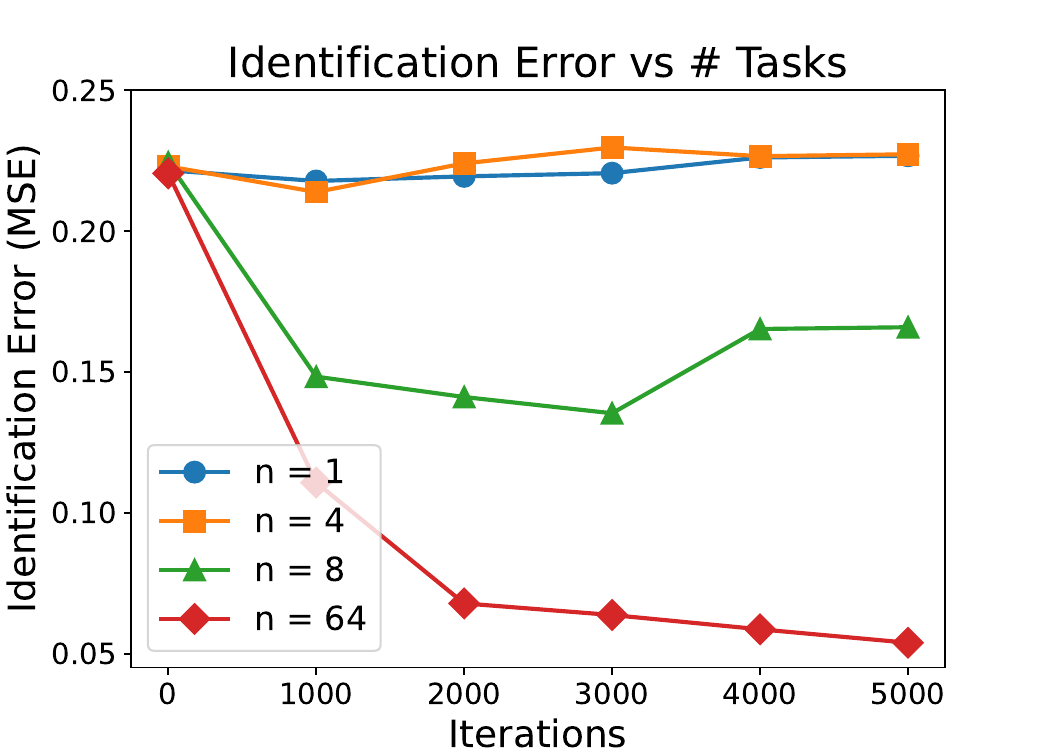}
}
\caption{\textbf{Impact of multi-task training on identifying data-generating variables:} The identification error of data-generating variables \emph{decreases} as the task number $n$ \emph{increases}, measured by the MSE of the linear probe trained to fit true latent variables $\rvz$ given the learned representation $\Phi(\rvx)$. Each task is a randomly sampled parity function of $\rvz$ with degree not larger than $k$ for $k$-degree tasks. Compared to single-task training $(n=1)$, multi-task training with $n=64$ significantly reduces the identification error for all $k$-degree training task distributions with $k=2,3,4$, corroborating \underline{\emph{Theorem~\ref{thm:multi-task}}} and \underline{\emph{Theorem~\ref{thm:world_model}}} in the main text. Both the representation $\Phi$ and each task-specific function $g_i,\,i\in[n]$ are instantiated as a three-layer MLP.}
\label{fig:numerical_multi_task}
\end{figure}


\begin{figure}[htbp]
\centering
\subcaptionbox{Task 1: $f(\vx) = x_1 x_4 x_5$}{
\includegraphics[width=0.37\linewidth]{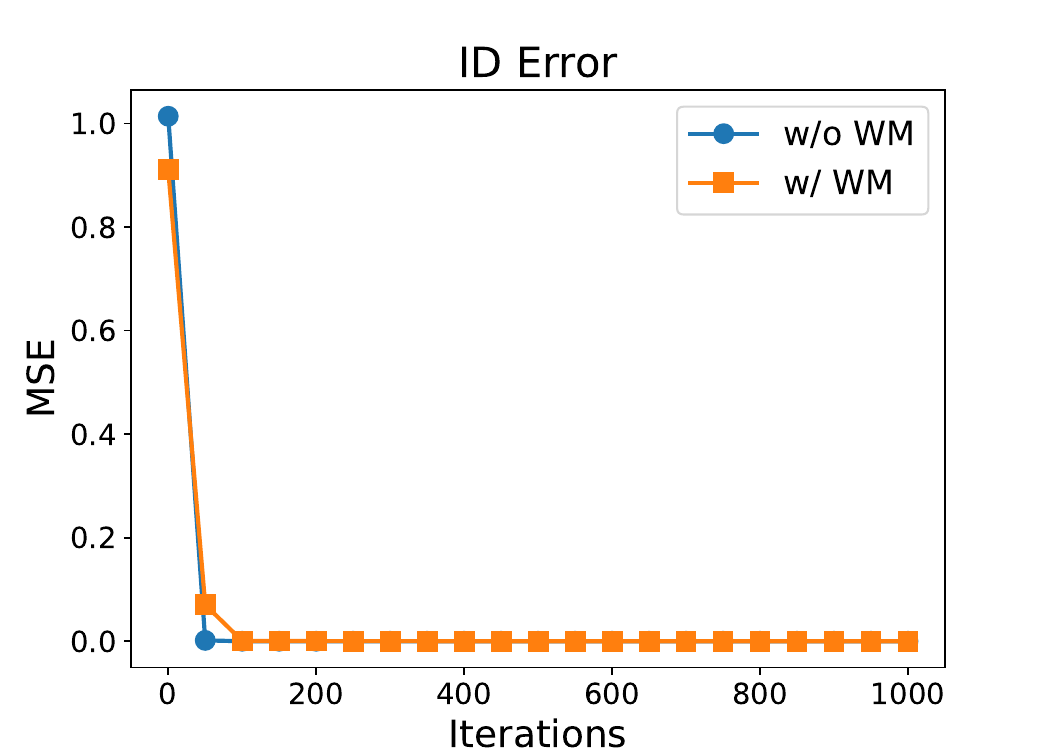}
\includegraphics[width=0.37\linewidth]{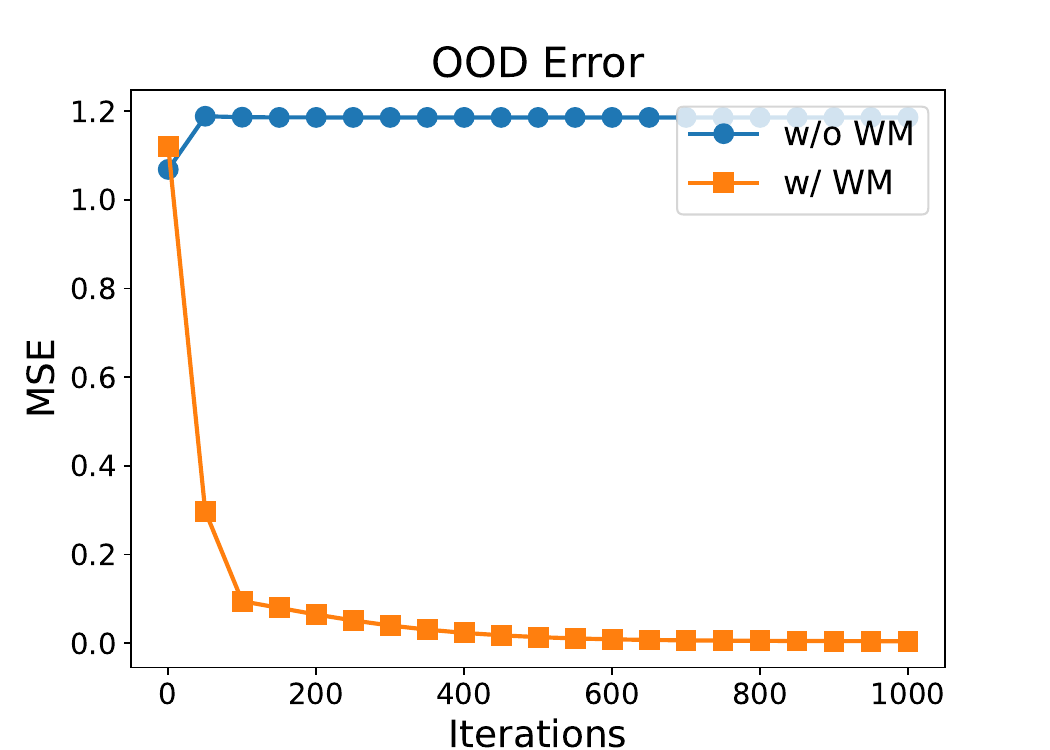}
}\\
\subcaptionbox{Task 2: $f(\vx) = x_1 x_3 + 0.5x_1 x_5 x_6$}{
\includegraphics[width=0.37\linewidth]{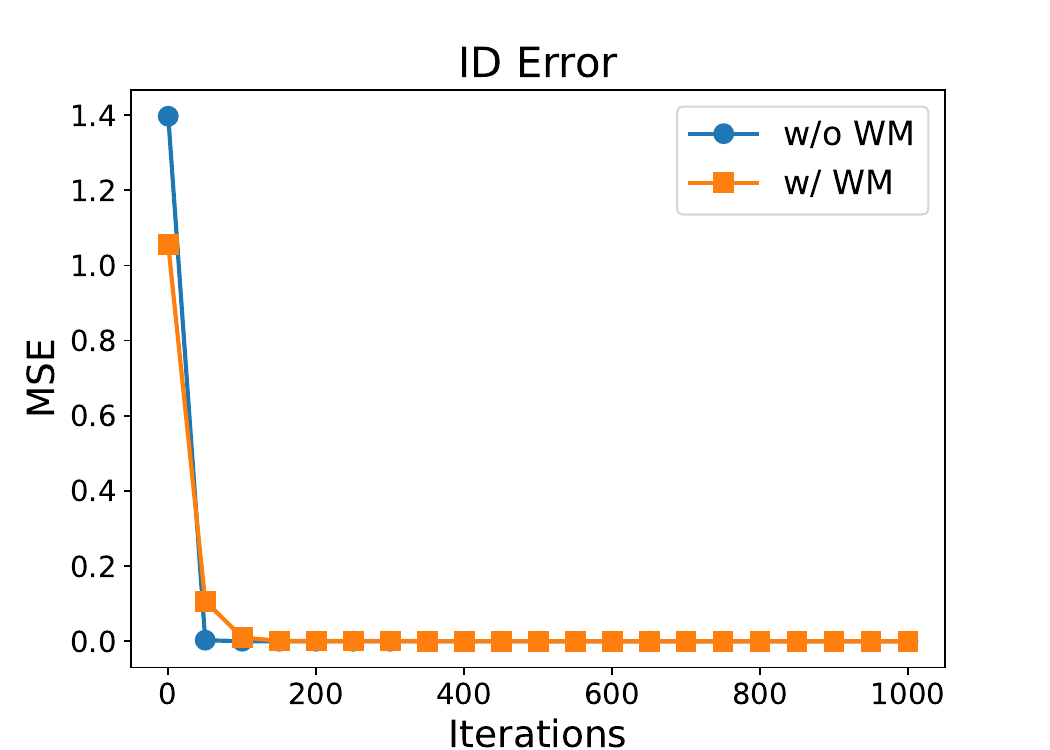}
\includegraphics[width=0.37\linewidth]{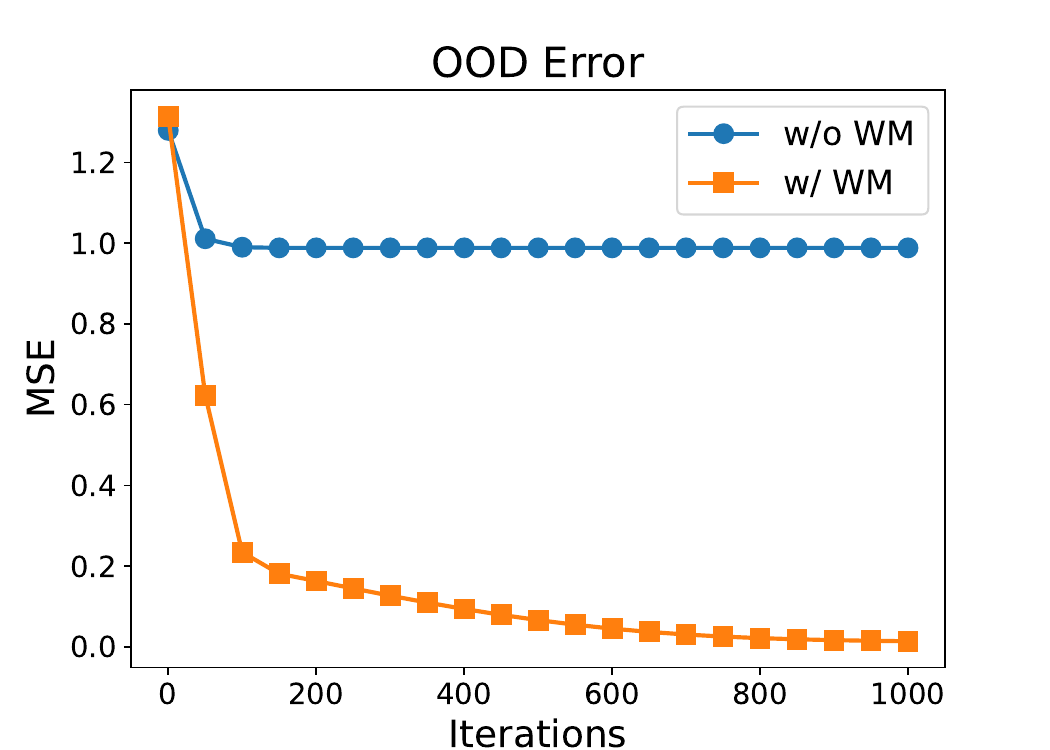}
}
\caption{\textbf{Identifying data-generating variables benefits out-of-distribution (OOD) generalization:} For the same tasks considered in Figure~\ref{fig:numerical_low_degree_bias} with $r=2$, SGD-trained MLPs on top of an oracle representation that identifies true data-generating variables (denoted by ``w/ WM'' in the legends) achieve OOD generalization. By contrast, SGD-trained MLPs without the representation (denoted by ``w/o WM'' in the legends) fail. This corroborates \underline{\emph{Theorem~\ref{thm:benefits}}} in the main text. We separately train an MLP for each task and for the setting with/without the oracle representation.}
\label{fig:numerical_ood}
\end{figure}

\section{Proofs}
\label{appsec:proof}

This section provides complete proofs of all theorems in the main text, organized as follows.
\begin{itemize}
	\item In Section~\ref{appsec:lemma}, we introduce some definitions and technical lemmas.
	\item In Section~\ref{appsec:proof_single-task}, we provide the proof of Theorem~\ref{thm:single-task}.
	\item In Section~\ref{appsec:proof_multi-task}, we provide the proof of Theorem~\ref{thm:multi-task}.
	\item In Section~\ref{appsec:proof_no_free_lunch}, we provide the proof of Theorem~\ref{thm:no_free_lunch}.
	\item In Section~\ref{appsec:proof_corollary}, we provide the proof of Corollary~\ref{corollary:degree_k}.
	\item In Section~\ref{appsec:proof_world_model}, we provide the proof of Theorem~\ref{thm:world_model}.
	\item In Section~\ref{appsec:proof_benefits}, we provide the proof of Theorem~\ref{thm:benefits}.
	\item In Section~\ref{appsec:proof_basis}, we provide the proof of Theorem~\ref{thm:basis}.
\end{itemize}

\paragraph{Notation and conventions.} We use $[n]$ to denote the set $\{1,\ldots,n\}$ for positive integers $n$. For a set $S$, we denote its cardinality by $|S|$. For a probability distribution $p$ over some set $S$, we denote by $\supp(p)\vcentcolon=\{s\in S\mid p(s) > 0\}$ its support. For $n$ functions $f_1,\ldots,f_n$, we use $f = (f_1,\ldots,f_n)$ to denote the multi-output function satisfying that $f(\vx) = (f_1(\vx),\ldots,f_n(\vx))$; conversely, for a function $f$ with $n$ output dimensions, we use $f_i$ to denote the function mapping the inputs of $f$ to its $i$-th output dimension for $i\in [n]$. As defined in the main text, we use the notation $\gF^n=\{f:\{\pm 1\}^d\to\sR\}$ and $\gF^n_k=\{f:\{\pm 1\}^d\to\sR\mid \deg(f)\le k\}$ for positive integers $n$. Note that although both $\gF^n$ and $\gF^n_k$ are infinite, only finite functions in them are implementable by computers due to bounded precision. Therefore, in our proofs we will treat them as finite yet exponentially large sets. This enables us to use, \eg, $|\gF^n|$ and $\sum_{h'\in\gF^n}\deg(h')$ in our proofs and helps avoid non-essential technical nuances.

\subsection{Technical Lemmas}
\label{appsec:lemma}

This section presents additional definitions and technical lemmas that may come in handy in our proofs. We begin by introducing a lemma that upper-bounds the degree of min-degree solutions for every task with $d$-dimensional latent variables.

\begin{lemma}
\label{lemma:degree}
Suppose that the input data variable $\rvx\in\gX\subseteq\{-1,1\}^m$ is generated as in Definition~\ref{def:dgm} by a $d$-dimensional lantent variable $\rvz\in\gZ = \{-1,1\}^d,\,d\le m$. Then, for every task $h:\gX\to\{-1,1\}$, we have
\begin{equation}
\deg(\hmin(h))\le d.
\end{equation}
\end{lemma}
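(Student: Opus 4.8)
The plan is to produce an explicit function $f \in \gH(h)$ whose degree is at most $d$, which immediately forces $\deg(\hmin(h)) \le d$. The key observation is that although $h$ is a function on $\gX \subseteq \{-1,1\}^m$, its values on the relevant set $\{\psi(\vz) : \vz \in \gZ\}$ are completely determined by the $d$-bit string $\vz$, since $\psi$ is invertible. So I would first define $h' := h \circ \psi : \gZ \to \{-1,1\}$, a Boolean function on $d$ bits. By the Fourier-Walsh transform (Definition~\ref{def:fourier-walsh}), $h'$ is a multilinear polynomial in $z_1,\ldots,z_d$, hence $\deg(h') \le d$ trivially (the top monomial is $\chi_{[d]}$).

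Next I would ``pull back'' $h'$ to a function on $\gX$ by composing with a degree-$1$ coordinate reconstruction. Concretely, let $\phi := \psi^{-1} : \gX \to \gZ$; I want a function $\tilde\phi : \{-1,1\}^m \to \{-1,1\}^d$ that agrees with $\phi$ on $\supp$-relevant inputs and such that each output coordinate $\tilde\phi_j$ has degree... here is the subtlety. If I just take $f := h' \circ \psi^{-1}$, then writing $\psi^{-1}$ in its own Fourier-Walsh form, each coordinate $(\psi^{-1})_j$ can have degree up to $m$, and substituting these into $h'$ could blow the degree far past $d$. So the naive composition does not obviously work. The fix is to expand $f = h \circ \psi^{-1} \circ \psi$... no — rather, I should directly write $f = h' \circ \psi^{-1}$ and then re-expand $f$ as a multilinear polynomial on $\{-1,1\}^m$ and bound \emph{that} degree. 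But $f$ is a genuine function $\{-1,1\}^m \to \sR$ (well-defined everywhere, since $\psi^{-1}$ extends to all of $\{-1,1\}^m$ arbitrarily — or we only need it on $\gX$), so it has \emph{some} degree; the issue is whether that degree is $\le d$, and in general it need not be.

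So the real argument must be different: I would use that $f$ only needs to be \emph{correct on $\supp(p) = \gZ$ mapped through $\psi$}, i.e. on the set $A := \psi(\gZ) \subseteq \{-1,1\}^m$, which has exactly $2^d$ elements. The standard fact (interpolation, as in the Lagrange-style construction $\vone_\va$ recalled in the preliminaries) is: for any target function defined on a set $A$ of size $2^d$ sitting inside $\{-1,1\}^m$, there is a multilinear polynomial realizing it of degree at most $d$ — because one can first find $d$ coordinates (or $d$ linear/parity functions) that \emph{separate} all $2^d$ points of $A$, relabel $A$ via a degree-$1$ (or low-degree) bijection onto $\{-1,1\}^d$, interpolate there with a degree-$\le d$ polynomial, and compose. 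The crucial sub-claim is that $\psi^{-1}|_A$, as a map $A \to \{-1,1\}^d$, can be realized coordinatewise by degree-$1$ functions on the relevant coordinates — but this is exactly \emph{not} guaranteed in general, so instead I would argue that \emph{some} set of $d$ parity functions $\chi_{S_1},\ldots,\chi_{S_d}$ on $\{-1,1\}^m$ restricts to an injection on $A$; then $h$ restricted to $A$ is a function of $(\chi_{S_1}(\vx),\ldots,\chi_{S_d}(\vx))$, and writing that outer function as a multilinear polynomial in $d$ variables of degree $\le d$ and substituting gives degree $\le d$ only if each $|S_j| \le 1$.

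Thus the \textbf{main obstacle}, which I expect to be the crux of the proof, is establishing that the $2^d$ points of $A = \psi(\gZ)$ can be separated by $d$ \emph{single} coordinates $x_{i_1},\ldots,x_{i_d}$ of $\{-1,1\}^m$ — equivalently, that there exist $d$ input coordinates on which the restriction map $A \to \{-1,1\}^d$ is a bijection. This is a combinatorial/linear-algebraic statement about subsets of the hypercube of size $2^d$: it would follow, for instance, if we may \emph{choose} the encoding of $\rvz$ into bits so that $d$ of the $m$ bits of $\rvx$ literally copy the $d$ bits of $\rvz$ (a ``faithful encoding'' assumption implicit in the setup, since $m \ge d$ and $\psi$ is invertible), in which case $h \circ \psi$ depends on exactly those $d$ coordinates and has degree $\le d$ by Fourier-Walsh on $d$ bits — and then $\hmin(h)$ has degree $\le d$ because $\hmin$ only minimizes over $\gH(h)$, and this particular realization lies in $\gH(h)$. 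I would therefore structure the proof as: (1) fix such $d$ separating coordinates (justified by the no-information-loss nature of the encoding of $\rvz$); (2) note $h$ agrees on $A$ with the $d$-variable Boolean function $h'$ obtained by restricting to those coordinates; (3) apply Fourier-Walsh / the interpolation formula to get $\deg(h') \le d$; (4) conclude $\deg(\hmin(h)) \le \deg(h') \le d$.
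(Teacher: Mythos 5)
Your overall framing (exhibit some member of $\gH(h)$ of degree at most $d$) is the right one, but the step you yourself identify as the crux is a genuine gap, and the way you propose to close it is not available. The claim that the $2^d$ points of $A=\psi(\gZ)$ can always be separated by $d$ \emph{single} coordinates of $\{-1,1\}^m$ is false in general: take $m=3$, $d=2$ and $A=\{(1,1,1),\,(1,1,-1),\,(1,-1,1),\,(-1,1,1)\}$; then the projection onto $\{x_1,x_2\}$ identifies the first two points, onto $\{x_1,x_3\}$ identifies points one and three, and onto $\{x_2,x_3\}$ identifies points one and four, so no pair of coordinates is injective on $A$. Your fallback, a ``faithful encoding'' in which $d$ of the $m$ observed bits literally copy the latent bits, is not implicit in the setup: Definition~\ref{def:dgm} only says $\psi$ is an arbitrary invertible map, the encoding of $\gX$ is fixed, and $h$ is given as a function on that fixed $\gX$, so you are not free to re-encode. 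Without separation by degree-$1$ functions, your steps (2)--(4) do not produce a degree-$\le d$ realization (separating parities $\chi_{S_j}$ with $|S_j|>1$ would blow up the degree, as you note).

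The lemma is nevertheless true, and the paper's proof avoids point separation entirely, using only the cardinality bound $|\gX|=|\gZ|=2^d$. For any $d+1$ coordinates $i_1,\ldots,i_{d+1}\in[m]$, since $2^d<2^{d+1}$ there is a sign pattern $(b_1,\ldots,b_{d+1})\in\{-1,1\}^{d+1}$ that no $\vx\in\gX$ attains on those coordinates, hence
\begin{equation}
\prod_{j\in[d+1]}\bigl(x_{i_j}+b_j\bigr)=0 \quad\text{for every } \vx\in\gX .
\end{equation}
Expanding this product expresses the degree-$(d+1)$ monomial $x_{i_1}\cdots x_{i_{d+1}}$, on $\gX$, as a polynomial of degree at most $d$; iterating this substitution inside the Fourier--Walsh expansion of any representative of $h$ strictly reduces the degree of every monomial of degree above $d$ while leaving the values on $\gX$ unchanged, yielding an element of $\gH(h)$ of degree at most $d$ and thus $\deg(\hmin(h))\le d$. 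This ``missing sign pattern'' counting argument is the idea your proposal lacks; if you want to keep your interpolation viewpoint, you would still need this (or an equivalent) argument to show that functions on an arbitrary $2^d$-point subset of the $m$-cube are realized by degree-$\le d$ multilinear polynomials, since coordinate separation fails.
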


\begin{proof}
The case of $d = m$ is trivial, so in what follows we consider the case of $d > m$.
We first prove the following lemma:

\begin{lemma}
\label{lemma:existence}
For every $S = \{i_1,\ldots,i_{d+1}\}\subseteq[m]$ with $|S| = d+1$, there exist $b_1,\ldots,b_{d+1}$ with $b_i\in\{-1,1\},\forall i\in[d+1]$ such that $\prod_{j\in[d+1]}(x_{i_j} + b_j) = 0$ holds for every $\vx\in\gX$. 
\end{lemma}

\begin{proof}[Proof of Lemma~\ref{lemma:existence}]
We can prove Lemma~\ref{lemma:existence} by contradiction: assume that it does not hold, \ie, for some $S$, there exists $\vx\in\gX$ such that $\prod_{j\in[d+1]}(x_{i_j} + b_j) \ne 0$, then we must have $x_{i_j} = b_j,\forall j\in[d+1]$. Since $b_1,\ldots,b_{d+1}$ are arbitrary, by applying this argument to every $(b_1,\ldots,b_{d+1})\in\{-1,1\}^{d+1}$ we can find $2^{d+1}$ elements in $\gX$ that differ in at least one coordinate in $S$, which yields $|\gX|\ge 2^{d+1}$. On the other hand, recall that we assume $\supp(\rvz) = \gZ$. Due to the invertibility of $f$, we have $|\gX| = |\gZ| = 2^d$, which contradicts $|\gX|\ge 2^{d+1}$. Hence, the initial assumption is false and Lemma~\ref{lemma:existence} holds.
\end{proof}

Applying Lemma~\ref{lemma:existence}, we have that for every degree-$d+1$ subset $S = \{i_1,\ldots,i_{d+1}\}\subseteq[m]$,
\begin{equation}
\prod_{j\in[d+1]}(x_{i_j} + b_j) = \prod_{j\in S}x_j + \sum_{S'\subset S, |S'|\le d} b_{S'}\prod_{k\in S'} x_k = 0,\,\forall \vx\in\gX
\end{equation}
holds for some $b_1,\ldots,b_{d+1}$, where $b_{S'}\in\{-1,1\}$ for every $S'\subset S$. This means that we can thus replace every degree-$d+1$ monomial $\chi_S(\vx) = \prod_{j\in S}x_j$ by a degree-$d$ polynomial $-\sum_{S'\in 2^{S}, |S'|\le d} b_{S'}\prod_{k\in S'} x_k$. By iteratively using this replacement in the Fourier-Walsh transform of $h$ (Definition~\ref{def:fourier-walsh}), one can eventually obtain a polynomial with degree $d$ or less without changing the value of the function on every $\vx\in\gX$. This completes the proof.
\end{proof}

\begin{remark}
Without the latent structure, a trivial upper bound of the min-degree solution of any task $h$ is $\deg(\hmin(h))\le m$. Hence, Lemma~\ref{lemma:degree} is an example of how the min-degree bias can exploit the latent structure of data by favoring low-degree solutions with degree independent of the data dimension $m$.
\end{remark}

Before presenting our next lemma, we first introduce the concept of influence for Boolean functions.

\begin{definition}[Influence]
\label{def:inf}
Let $f:\{-1,1\}^n\to\{-1,1\}$ be a Boolean function. Then, the \emph{influence} of coordinate $i,i\in[n]$ on $f$ is defined as
\begin{equation}
\inf_i(f) = \prob_{\vx\sim U(\{-1,1\}^n)}[f(\vx)\ne f(\vx^{\oplus i})],
\end{equation}
where $\vx^{\oplus i} = (x_1,\ldots,x_{i-1},-x_i,x_{i+1},\ldots,x_n)$.
\end{definition}

By Parseval's Theorem, we have a formula between influence and the Fourier-Walsh coefficients~\citep{odonnell_analysis_2021}.

\begin{lemma}
\label{lemma:inf}
For $f:\{-1,1\}^n\to\{-1,1\}$ and every $i\in[n]$, the following holds:
\begin{equation}
\inf_i(f) = \sum_{S\in\{S'\subseteq[n] \,\mid\, i\in S'\}} \hat{f}(S)^2,
\end{equation}
where $\hat{f}(S)$ is the Fourier-Walsh coefficients of $f$ as in Definition~\ref{def:fourier-walsh}.
\end{lemma}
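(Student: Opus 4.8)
The plan is to obtain the identity directly from Parseval's theorem, using only the elementary behavior of parity functions under a single bit-flip. The key starting observation is that since $f$ is $\{-1,1\}$-valued, the difference $f(\vx) - f(\vx^{\oplus i})$ lies in $\{0,\pm 2\}$, so its square equals $4$ exactly when $f(\vx)\ne f(\vx^{\oplus i})$ and $0$ otherwise. Hence
\[
\inf_i(f) = \prob_{\vx\sim U(\{-1,1\}^n)}[f(\vx)\ne f(\vx^{\oplus i})] = \tfrac{1}{4}\,\E_{\vx}\big[(f(\vx) - f(\vx^{\oplus i}))^2\big].
\]

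Next I would expand $f(\vx)-f(\vx^{\oplus i})$ in the Fourier-Walsh basis. From $\chi_S(\vx)=\prod_{j\in S}x_j$ one gets $\chi_S(\vx^{\oplus i})=\chi_S(\vx)$ when $i\notin S$ and $\chi_S(\vx^{\oplus i})=-\chi_S(\vx)$ when $i\in S$; therefore in $f(\vx)-f(\vx^{\oplus i}) = \sum_{S\subseteq[n]}\hat f(S)\big(\chi_S(\vx)-\chi_S(\vx^{\oplus i})\big)$ every term with $i\notin S$ cancels and every term with $i\in S$ contributes $2\hat f(S)\chi_S(\vx)$, giving
\[
f(\vx) - f(\vx^{\oplus i}) = 2\sum_{S\ni i}\hat f(S)\chi_S(\vx).
\]

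Substituting this back and applying Parseval's theorem --- equivalently, the orthonormality of $\{\chi_S\}$ under the inner product $\langle\cdot,\cdot\rangle$ --- to the function $\sum_{S\ni i}\hat f(S)\chi_S$ yields $\E_{\vx}\big[\big(\sum_{S\ni i}\hat f(S)\chi_S(\vx)\big)^2\big] = \sum_{S\ni i}\hat f(S)^2$, and hence $\inf_i(f)=\sum_{S\ni i}\hat f(S)^2$, as claimed. There is no substantial obstacle here: the only points needing care are the sign rule for $\chi_S$ under a coordinate flip and the fact that squaring a $\{0,\pm 2\}$-valued quantity gives $4$ times the indicator of non-vanishing, both entirely routine. (An equivalent route is to introduce the discrete derivative $D_i f(\vx) = \tfrac{1}{2}\big(f(\vx^{i\to 1}) - f(\vx^{i\to -1})\big)$, note that $\widehat{D_i f}(S\setminus\{i\}) = \hat f(S)$ for $i\in S$, observe that $(D_i f)^2$ is the disagreement indicator when $f$ is $\{-1,1\}$-valued, and apply Parseval to $D_i f$.)
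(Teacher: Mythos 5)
Your proof is correct and is the standard argument: writing $\inf_i(f)=\tfrac14\,\E_{\vx}\big[(f(\vx)-f(\vx^{\oplus i}))^2\big]$, using the sign rule $\chi_S(\vx^{\oplus i})=-\chi_S(\vx)$ iff $i\in S$, and finishing with Parseval/orthonormality of the parity basis. The paper does not prove this lemma itself but cites \citet{odonnell_analysis_2021}, and your derivation is precisely the textbook proof given there, so there is nothing to add.
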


With the above definitions, our next lemma introduces a necessary and sufficient condition of bijective Boolean functions being degree-$1$, based on the restricted influence of all input coordinates. 

\begin{lemma}
\label{lemma:degree_1}
Let $f:\{-1,1\}^n\to\{-1,1\}^n$ be a bijective function and let $1\le k\le n-1$ be an integer. Then, we have $\deg(f_i) = 1,\forall i\in[n]$ if and only if for every $S\subset[n]$ with $|S| = k$, there exists $T\subset[n]$ with $|T| = k$ such that:
\begin{itemize}
\item for every $j\in T$, $\inf_i(f_j) > 0$ for at least one $i\in S$;
\item for every $j\in [n]\setminus T$, $\inf_i(f_j) = 0$ for every $i\in S$.
\end{itemize}
\end{lemma}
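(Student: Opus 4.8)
The plan is to prove both directions by relating the condition on restricted influences to the structure of the Fourier support of each output coordinate $f_j$. The key preliminary observation I would establish first is a ``support-counting'' fact for bijective Boolean functions: if $f:\{-1,1\}^n\to\{-1,1\}^n$ is bijective, then the coordinate functions $f_1,\dots,f_n$ together with all their products $\prod_{j\in J} f_j$ ($J\subseteq[n]$, $J\ne\emptyset$) are linearly independent in $\gF^n$; equivalently, $\{\chi_S\circ f : S\subseteq[n]\}$ is again an orthonormal basis of $\gF^n$ (composition with a bijection is a basis transform, essentially because $\chi_S\circ f$ inherits orthonormality from $\chi_S$ via the change of variables $\vx\mapsto f(\vx)$, which is a bijection of $\{-1,1\}^n$). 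This lets me translate statements about $\deg(f_i)$ into statements about which parity functions appear, and crucially it forces a ``conservation'' principle: the total degree budget $\sum_i \deg(f_i)$ is bounded below, and it achieves its minimum value $n$ exactly when every $f_i$ is a signed dictator $\pm x_{\sigma(i)}$ for a permutation $\sigma$.

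For the forward direction ($\deg(f_i)=1$ for all $i$ $\Rightarrow$ the influence condition), I would argue that $\deg(f_i)=1$ combined with bijectivity forces $f_i(\vx)=\pm x_{\sigma(i)}$ for some permutation $\sigma$ of $[n]$: a degree-$1$ Boolean-valued function on $\{-1,1\}^n$ is, by Parseval plus the $\{-1,1\}$-valued constraint, exactly a signed dictator or a constant, and constants are incompatible with injectivity; then injectivity forces the dictator indices to be distinct. Given this, $\inf_i(f_j) = \1[\sigma(j)=i]$. Now for any $S\subset[n]$ with $|S|=k$, take $T\vcentcolon=\sigma^{-1}(S)$, which has $|T|=k$; one checks directly that for $j\in T$ we have $\sigma(j)\in S$ so $\inf_{\sigma(j)}(f_j)=1>0$, while for $j\notin T$ we have $\sigma(j)\notin S$ so $\inf_i(f_j)=0$ for all $i\in S$. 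This gives the required $T$.

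For the converse (the influence condition for all size-$k$ sets $S$ $\Rightarrow$ $\deg(f_i)=1$), I would proceed by contradiction or by a counting/double-counting argument on the bipartite ``dependency'' relation between output coordinates $j$ and input coordinates $i$, where we connect $j\sim i$ iff $\inf_i(f_j)>0$ (equivalently, some $\hat{f_j}(S)\ne 0$ with $i\in S$, by Lemma~\ref{lemma:inf}). The hypothesis says this bipartite graph has a strong Hall-type regularity: every input-side set $S$ of size $k$ has a ``matched'' output-side set $T$ of the same size that is exactly the set of outputs touching $S$. The idea is that running this over all size-$k$ subsets $S$ forces the dependency graph to be a perfect matching (each $f_j$ depends on exactly one input coordinate, and distinct $f_j$ on distinct ones); then a function depending on a single coordinate and taking values in $\{-1,1\}$ must be $\pm x_i$, hence degree $1$. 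Concretely: if some $f_j$ depended on $\ge 2$ input coordinates, one could choose $S$ of size $k$ to include one but not another of them and derive a contradiction with the partition structure forced on $[n]$; and if two $f_j,f_{j'}$ depended on the same single coordinate, bijectivity (the basis property above, forcing $f_j$ and $f_{j'}$ to be ``independent enough'') would be violated. I expect the main obstacle to be exactly this converse combinatorial step: making the Hall-type regularity over all size-$k$ sets actually pin down a perfect matching is the delicate part, especially handling the interplay between the two bullet conditions (the first guarantees ``enough'' dependence, the second guarantees ``not too much''), and ensuring the argument is uniform in $k$ for $1\le k\le n-1$ rather than just $k=1$. I would likely isolate this as a purely graph-theoretic sublemma: a bipartite graph on $[n]\sqcup[n]$ in which every left set of size $k$ has neighborhood of size exactly $k$ (for a fixed $k$ in this range) is a perfect matching.
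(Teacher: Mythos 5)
Your forward direction is fine and matches the paper's (degree-$1$ plus bijectivity forces signed dictators on distinct coordinates, after which the choice $T=\sigma^{-1}(S)$ is immediate). The gap is in the converse, specifically in the final step where you propose to isolate ``a purely graph-theoretic sublemma: a bipartite graph on $[n]\sqcup[n]$ in which every left set of size $k$ has neighborhood of size exactly $k$ is a perfect matching.'' That sublemma is false. For $k=1$, $n=2$: join both left vertices to right vertex $1$; every left singleton has neighborhood of size exactly $1$, but there is no perfect matching. For $k=2$, $n=3$: let left vertices $2$ and $3$ each be joined to right vertices $1$ and $2$, with left vertex $1$ and right vertex $3$ isolated; every left $2$-set has neighborhood exactly $\{1,2\}$ of size $2$, yet again no perfect matching. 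These dependency graphs simply cannot arise from a \emph{bijective} $f$ (an isolated right vertex means a constant output coordinate, an isolated left vertex means $f$ ignores an input), which is exactly the point: bijectivity cannot be deferred to the very end of the argument as a sanity check on the matching; it must be woven into the combinatorial core, and your sketch of how the contradiction would go (``choose $S$ containing one but not another of the coordinates $f_j$ depends on'') does not by itself produce a contradiction, as the $k=2$, $n=3$ example shows.

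For comparison, the paper's converse does precisely this interleaving. It first notes that the hypothesis pins down $T$ as the exact neighborhood $N(S)$, and then uses bijectivity of $f$ twice through image-cardinality counting: (a) if two distinct $k$-sets $S\ne S'$ had the same $T$, then all variability of $f$ under flipping coordinates in $S\cup S'$ would be confined to the outputs in $T$, giving $|\{f(\vx)\}|\le 2^{n+k-|S\cup S'|}<2^n$, contradicting bijectivity; hence $S\mapsto N(S)$ is injective, and therefore bijective, on the family of $k$-subsets. (b) With that bijectivity in hand, if some input $i$ influenced a set $U$ of outputs with $|U|\ge 2$, then either $|U|>k$ (no admissible $T$ exists for any $S\ni i$) or $2\le|U|\le k$, in which case the $\binom{n-1}{k-1}$ sets $S\ni i$ would all be mapped into the $\binom{n-|U|}{k-|U|}<\binom{n-1}{k-1}$ sets containing $U$, contradicting injectivity of $S\mapsto N(S)$. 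Only after this does one convert ``each input influences exactly one output'' into ``each $f_j$ is a signed dictator,'' again invoking bijectivity. If you want to salvage your plan, you must either build these bijectivity-based counting steps into the sublemma's hypotheses (mere non-existence of isolated vertices is what your examples above violate, but proving the strengthened graph statement still essentially reproduces the paper's counting), or abandon the claim that the combinatorial step can be made ``purely graph-theoretic.''
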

\begin{proof}
Note that $\deg(f_i) = 1,\forall i\in[n]$ together with the fact that $f$ is bijective indicates the existence of a permutation $i_1,\ldots,i_n$ of $1,\ldots,n$ such that $f_j(\vx) = x_{i_j}$ or $f_j(\vx) = -x_{i_j}$ for every $j\in[n]$, which trivially gives the result. In the following we prove the other direction. Note that the case of $k=1$ is trivial. Hence, to prove that $\deg(f_i)= 1$, it suffices to prove (*): for every $i\in[n]$, there exists $j\in[n]$ such that $\inf_i(f_j)>0$ and $\inf_i(f_m) = 0,\forall m\ne j,m\in[n]$, for every $2\le k\le n-1$.

We first prove that for every $S$, $T$ is unique, by contradiction. Suppose that for some $S$, $T$ is not unique, \ie, there exists $T'\ne T\subset [n]$ such that $T'$ satisfies the condition. Then, by Definition~\ref{def:inf}, changing the values of the coordinates $x_i,i\in S$ can change only the values of $f_j(\vx),j\in T\cap T'$ but not the values of other $f_j(\vx),j\in [n]\setminus (T\cup T')$. This results in $|\{f(\vx),\vx\in\gX\}| \le 2^{n-|S|} \cdot 2^{|T\cap T'|} = 2^{n-k+|T\cap T'|}< 2^n$, contradicting the bijectivity of $f$. Therefore, the assumption is false and $T$ is unique. This allows us to define a mapping $\phi: S\mapsto T$.

Let $\gS_k = \{S\subset[n]\mid |S| = k\}$ be the set of all subsets of $[n]$ with cardinality $k$. We then prove that $\phi:S\mapsto T$ is bijective on $\gS_k$. To this end, it suffices to show that every $S\ne S'\in\gS_k$ are mapped to different $T$. This can be similarly proved by contradiction as in proving the uniqueness of $T$: if it is false, then there exists a subset $S'' = S\cup S'$ and $T$ such that $\inf_i(f_j) = 0,\,\forall i\in S'',\,j\in [n]\setminus T$. Then, by Definition~\ref{def:inf}, changing the values of the coordinates $x_i,i\in S''$ can change only the values of $f_j(\vx),j\in T$ but not the values of other $f_j(\vx),j\notin T$. This results in $|\{f(\vx),\vx\in\gX\}| \le 2^{|T|} \cdot 2^{n-|S''|} = 2^{n+k-|S''|}< 2^n$, contradicting the bijectivity of $f$. Therefore, the assumption is false and every $S\ne S'\in\gS_k$ are mapped to different $T$.

Given that $\phi: S\mapsto T$ is bijective, we know that for every subset $\gS\subseteq \gS_k$, there exists a unique subset $\gT = \{T = \phi(S)\mid S\in\gS\}\subseteq \gS_k$ such that $|\gS| = |\gT|$. We then move on to prove the proposition (*) by contradiction: suppose it is false, \ie, for some $i\in [n]$, there exists $U\subset[n]$ with $|U| \ge 2$ such that $\inf_i(f_j)>0$ for every $j\in U$ and $\inf_i(f_j) = 0$ for every $j\in [n]\setminus U$. If $|U| > k$, then for every $S$ such that $i\in S$, there does not exist a feasible $T$, which is a contradiction. If $2\le |U| \le k$, consider $\gS = \{S\subset \gS_k\mid i\in S\}$ with $|\gS| = C_{n-1}^{k-1}$. By the definition of $\phi$, we know that $\gT \subseteq \{T\subset\gS_k\mid U\subseteq T\}$, which gives $|\gT|\le C_{n-|U|}^{k-|U|} < C_{n-1}^{k-1} = |\gS|$. Thus, the assumption is false and proposition (*) is true. This completes the proof.
\end{proof}



Our next lemma shows that any bijective transform on $\{-1,1\}^d$ can induce a bijective transform on $\gF^d$.

\begin{lemma}
\label{lemma:bijection}
Let $T:\{-1,1\}^d\to\{-1,1\}^d$ be a bijective transform. Then, $\gF^d\circ T = \gF^d$.
\end{lemma}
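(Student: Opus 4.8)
The claim is that precomposition with a bijection $T:\{-1,1\}^d\to\{-1,1\}^d$ permutes the set $\gF^d$ of all real-valued functions on $\{-1,1\}^d$, i.e. $\gF^d\circ T = \gF^d$. The strategy is to show that the map $L_T:\gF^d\to\gF^d$ defined by $L_T(f) = f\circ T$ is well-defined and a bijection of the set $\gF^d$ onto itself; then $\gF^d\circ T = L_T(\gF^d) = \gF^d$.

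\textbf{Step 1: $L_T$ maps into $\gF^d$.} For any $f\in\gF^d$, the composition $f\circ T$ is a function from $\{-1,1\}^d$ to $\sR$ (since $T$ has codomain $\{-1,1\}^d$ and $f$ has domain $\{-1,1\}^d$), hence $f\circ T\in\gF^d$. So $L_T$ is a legitimate map $\gF^d\to\gF^d$. (If one wants to be careful about the ``finite yet exponentially large'' convention flagged in the excerpt, note this argument only uses that the domains/codomains match as \emph{sets}, so it is unaffected.)

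\textbf{Step 2: $L_T$ is injective.} Suppose $f\circ T = g\circ T$ as functions on $\{-1,1\}^d$. For any $\vy\in\{-1,1\}^d$, since $T$ is surjective there is $\vx$ with $T(\vx)=\vy$, and then $f(\vy) = f(T(\vx)) = g(T(\vx)) = g(\vy)$. Hence $f=g$.

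\textbf{Step 3: $L_T$ is surjective.} Since $T$ is a bijection it has a two-sided inverse $T^{-1}:\{-1,1\}^d\to\{-1,1\}^d$, which is itself a bijection of $\{-1,1\}^d$. Given any $g\in\gF^d$, set $f\vcentcolon= g\circ T^{-1}$; by Step 1 applied to $T^{-1}$ we have $f\in\gF^d$, and $L_T(f) = g\circ T^{-1}\circ T = g$. Hence $L_T$ is onto $\gF^d$, and combined with Step 2, $L_T$ is a bijection of $\gF^d$. Therefore $\gF^d\circ T = \{f\circ T : f\in\gF^d\} = L_T(\gF^d) = \gF^d$, as claimed.

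\textbf{Remark on difficulty.} There is essentially no obstacle here: the statement is a purely set-theoretic fact about precomposition by a bijection, and the only thing to be slightly watchful about is keeping the convention of the paper (treating $\gF^d$ as a finite set of machine-representable functions) consistent — but since each of Steps 1–3 is a statement about individual functions and uses nothing about the real-number precision, it goes through verbatim. One could alternatively phrase the whole thing in one line: $T$ bijective $\Rightarrow$ $f\mapsto f\circ T$ has inverse $g\mapsto g\circ T^{-1}$ on the function space, hence is a bijection. The multilinear-polynomial / Fourier-Walsh structure of $\gF^d$ plays no role (though it is worth noting that $L_T$ is also a \emph{linear} automorphism of the vector space $\gF^d$, which is the viewpoint relevant to the later ``basis transform'' discussion).
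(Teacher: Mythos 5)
Your proof is correct and follows essentially the same route as the paper: show that $h'\mapsto h'\circ T$ lands in $\gF^d$ and is surjective onto $\gF^d$ by exhibiting the preimage $h'\circ T^{-1}$ (your extra injectivity step is harmless but not needed for the set equality).
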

\begin{proof}
It suffices to show that the mapping $h'\mapsto h'\circ T$ for $h'\in\gF^d$ is bijective. On one hand, it is obvious that $h'\circ T\in\gF^d$ for every $h'\in\gF^d$. On the other hand, for each $h'\in\gF^d$, there exists $h'' = h'\circ T^{-1}$ such that $h''\circ T = h'$. This completes the proof.
\end{proof}

We then present a lemma from~\citet{abbe_generalization_2023} that guarantees the uniqueness of low-degree solutions when the training data is sampled from a Hamming ball.
\begin{lemma}[\citet{abbe_generalization_2023}, Theorem 5.1]
\label{lemma:abbe}
Consider a Boolean function $f:\{\pm1\}^d\to\sR$. Then, there exists a unique function $f_r:\{\pm 1\}^d\to\sR$ such that for every $\vz\in B_r \vcentcolon= \{\vz\in\{\pm 1\}^d\mid\#_{-1}(\vz)\le r\}$, we have $f_r(\vz) = f(\vz)$ and $\deg(f_r)\le r$.
\end{lemma}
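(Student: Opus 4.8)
The plan is to reduce the claim to a dimension count plus a vanishing lemma for low-degree multilinear polynomials on a Hamming ball. First I would pass to the $\{0,1\}$ encoding: setting $b_i = (1-z_i)/2 \in \{0,1\}$, the set $B_r$ becomes $\{b \in \{0,1\}^d : \#_{-1} \text{ of the corresponding } \vz \text{ is } |b| \le r\}$, i.e.\ the low-weight sub-cube, and a multilinear polynomial has degree $\le r$ in the $z_i$ if and only if it has degree $\le r$ in the $b_i$; so it suffices to prove existence and uniqueness of a degree-$\le r$ interpolant on $\{b : |b| \le r\}$. Now both the space $V_r$ of degree-$\le r$ multilinear polynomials on $\{0,1\}^d$ (basis $\{\prod_{i\in T} b_i : |T|\le r\}$) and the space of real-valued functions on $B_r$ have dimension $\sum_{i=0}^r \binom{d}{i} = |B_r|$. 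Hence the restriction map $\rho : V_r \to \mathbb{R}^{B_r}$, $g \mapsto g|_{B_r}$, is a linear isomorphism as soon as it is injective, and that isomorphism yields both existence ($f_r \defeq \rho^{-1}(f|_{B_r})$, then re-expressed in the $z_i$) and uniqueness.

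So the entire argument comes down to the vanishing lemma: if $g(b) = \sum_{|T|\le r} c_T \prod_{i\in T} b_i$ satisfies $g(b)=0$ for every $b$ with $|b|\le r$, then $c_T = 0$ for all $T$. I would prove this by induction on the weight $w = 0,1,\ldots,r$, establishing that $c_T = 0$ for every $|T| = w$. Evaluating $g$ at the indicator vector $\mathbf{1}_S$ of a set $S$ with $|S| = w$ selects exactly the monomials indexed by subsets of $S$, giving $0 = g(\mathbf{1}_S) = \sum_{T\subseteq S} c_T$; since $|S| = w \le r$ every such $T$ has $|T|\le r$ and therefore actually appears in $g$, so by the induction hypothesis all terms with $|T| < w$ drop out and we are left with $c_S = 0$. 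This closes the induction and proves injectivity of $\rho$.

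For concreteness one can also exhibit $f_r$ explicitly by M\"obius inversion on the subset lattice: put $\hat c_S = \sum_{T\subseteq S} (-1)^{|S|-|T|} f(\mathbf{1}_T)$ for $|S|\le r$ and let $f_r(b) = \sum_{|S|\le r} \hat c_S \prod_{i\in S} b_i$; a routine inclusion–exclusion check gives $f_r(\mathbf{1}_S) = f(\mathbf{1}_S)$ for all $|S|\le r$, hence $f_r = f$ on $B_r$, while uniqueness still comes from the vanishing lemma above. The only step I expect to require real care is the bookkeeping in that lemma — making sure the observation ``every monomial with $T\subseteq S$ has degree $|T|\le w\le r$'' is invoked correctly so that no term is silently dropped — together with keeping the $\{-1,1\}$ versus $\{0,1\}$ dictionary straight so that the constraint $\#_{-1}(\vz)\le r$ really corresponds to low Hamming weight on the $b$ side. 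Everything past that is routine linear algebra.
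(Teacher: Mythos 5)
Your proposal is correct. Note, however, that the paper does not prove this lemma at all: it is imported verbatim from \citet{abbe_generalization_2023} (their Theorem 5.1), so there is no internal proof to compare against — your argument in effect supplies the missing self-contained justification. The route you take is the standard interpolation one and it is sound: the affine change of variables $b_i=(1-z_i)/2$ preserves multilinear degree and turns $B_r$ into the low-weight ball; the dimension count $\dim V_r=\sum_{i=0}^r\binom{d}{i}=|B_r|$ reduces everything to injectivity of the restriction map; and your vanishing lemma (induction on the weight $w$ of the evaluation point $\mathbf{1}_S$, using that $g(\mathbf{1}_S)=\sum_{T\subseteq S}c_T$ and that every $T\subseteq S$ with $|S|=w\le r$ genuinely occurs among the monomials) is exactly the right bookkeeping; the M\"obius-inversion formula for $f_r$ is a nice but optional bonus, since existence already follows from injectivity plus equality of dimensions. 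It is worth observing that your argument buys strictly more than the paper's own nearby machinery: Lemma~\ref{lemma:min_deg_hamming} in the appendix establishes only an \emph{existence}-type degree bound on $B_r$ by iterated monomial replacement, whereas the application of Lemma~\ref{lemma:abbe} in the proof of Theorem~\ref{thm:benefits} (uniqueness of $g^*$) hinges precisely on the \emph{uniqueness} half, which is what your injectivity/dimension argument delivers.
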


Our next lemma shows that a bijection on $\gF^n$ that preserves the degree of all parity functions preserves the degree of all functions.

\begin{lemma}
\label{lemma:invertible_transform_degree}
Let $U:\gF^n\to\gF^n$ be an invertible linear transform. If $\deg(U(\chi_S)) = \deg(\chi_S)$ for every $S\subseteq [n]$, then we have
\begin{equation}
\deg(U(f)) = \deg(f),\,\forall f\in \gF^n.
\end{equation}
\end{lemma}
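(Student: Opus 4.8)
The plan is to argue by contradiction using a dimension-counting (rank) argument on a graded filtration of $\gF^n$. For each $k \in \{0,1,\ldots,n\}$, let $V_k \vcentcolon= \mathrm{span}\{\chi_S : |S| \le k\}$, the subspace of functions of degree at most $k$. These subspaces form a flag $V_0 \subset V_1 \subset \cdots \subset V_n = \gF^n$, with $\dim V_k = \sum_{j \le k}\binom{n}{j}$, since the $\chi_S$ form a basis. The hypothesis $\deg(U(\chi_S)) = \deg(\chi_S)$ for every $S$ says precisely that $U$ maps each basis function $\chi_S$ with $|S| \le k$ into $V_k$; by linearity this gives $U(V_k) \subseteq V_k$ for every $k$. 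Since $U$ is invertible and $V_k$ is finite-dimensional, $U|_{V_k}$ is an injective linear endomorphism of $V_k$, hence surjective, so in fact $U(V_k) = V_k$ for every $k$.

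Now suppose for contradiction that there is some $f \in \gF^n$ with $\deg(U(f)) \ne \deg(f)$. First consider the case $\deg(U(f)) < \deg(f)$: write $k = \deg(f)$, so $f \in V_k$ but $f \notin V_{k-1}$, while $U(f) \in V_{k-1}$. Applying $U^{-1}$ and using $U^{-1}(V_{k-1}) = V_{k-1}$ (which follows from $U(V_{k-1}) = V_{k-1}$), we get $f = U^{-1}(U(f)) \in V_{k-1}$, contradicting $\deg(f) = k$. The other case $\deg(U(f)) > \deg(f)$ is symmetric: if $\deg(f) = k$ then $f \in V_k$, so $U(f) \in U(V_k) = V_k$, forcing $\deg(U(f)) \le k = \deg(f)$, again a contradiction. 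Hence $\deg(U(f)) = \deg(f)$ for all $f$.

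I do not expect any serious obstacle here; the only point requiring a little care is the step $U(V_k) \subseteq V_k \implies U(V_k) = V_k$, which uses finite-dimensionality of $V_k$ together with injectivity of the global map $U$ restricted to $V_k$ — and correspondingly $U^{-1}(V_k) = V_k$. This is exactly where the paper's convention of treating $\gF^n$ as "finite yet exponentially large" (or, more cleanly, restricting attention to the finite-dimensional $V_n = \gF^n$ which has dimension $2^n$) is being invoked. One subtlety worth a sentence in the writeup: the hypothesis is stated as $\deg(U(\chi_S)) = \deg(\chi_S)$, i.e. an equality, not merely $\le$; only the $\le$ direction, $\deg(U(\chi_S)) \le \deg(\chi_S)$, is actually used to get $U(V_k) \subseteq V_k$, and the reverse inequality then comes for free from invertibility — so the conclusion could even be stated under the weaker one-sided hypothesis, though there is no need to do so.
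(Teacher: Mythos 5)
Your proof is correct, but it takes a genuinely different route from the paper's. The paper argues directly: it expands $f=\sum_S\hat{f}(S)\chi_S$, applies linearity to get $U(f)=\sum_S\hat{f}(S)U(\chi_S)$, and then asserts $\deg(U(f))=\max\{\deg(U(\chi_S)):\hat{f}(S)\ne 0\}=\max\{\deg(\chi_S):\hat{f}(S)\ne 0\}=\deg(f)$. You instead work with the flag $V_0\subset V_1\subset\cdots\subset V_n=\gF^n$ of degree-filtered subspaces, deduce $U(V_k)\subseteq V_k$ from the hypothesis, upgrade this to $U(V_k)=V_k$ (hence $U^{-1}(V_k)=V_k$) via invertibility and finite dimension, and then get both inequalities on $\deg(U(f))$. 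Your route is arguably more careful at exactly the delicate point: a priori the paper's displayed equality is only ``$\le$'', since the top-degree parts of different $U(\chi_S)$ could cancel in the sum, and ruling out such a degree drop is precisely what requires invertibility of $U$ --- which your $U^{-1}(V_{k-1})=V_{k-1}$ step supplies explicitly, while the paper's computation never invokes it. Your observation that only the one-sided hypothesis $\deg(U(\chi_S))\le\deg(\chi_S)$ is needed is a genuine (if unneeded) strengthening. One small correction: the finite dimensionality you use is not the paper's convention of treating $\gF^n$ as a ``finite yet exponentially large'' set (that convention concerns summing over functions, i.e.\ cardinality); rather, $\gF^n$ is simply a $2^n$-dimensional real vector space because its domain $\{\pm 1\}^n$ is finite, so each $V_k$ is finite-dimensional unconditionally and the injective-implies-surjective step is available with no appeal to machine precision.
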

\begin{proof}
By the linearity of $U$, we have
\begin{align}
U(f) &= U\bigg(\sum_{S\subseteq[n]}\hat{f}(S)\chi_S\bigg) = \sum_{S\subseteq[n]}\hat{f}(S)U(\chi_S).
\end{align}
It then follows from Definition~\ref{def:degree} that
\begin{equation}
\deg(U(f)) = \max\left\{\deg(U(\chi_S)):\hat{f}(S)\ne 0\right\} = \max\left\{\deg(\chi_S):\hat{f}(S)\ne 0\right\} = \deg(f).
\end{equation}
This completes the proof.
\end{proof}



\subsection{Proof of Theorem~\ref{thm:single-task}}
\label{appsec:proof_single-task}

\begin{proof}
Since $g\circ\Phi$ is an realization of $h$, we have $g\circ\Phi\in \gH(h)$. This gives
\begin{equation}
\impdeg(h^*) = \deg(h^*) = \deg(\hmin(h)) \le \deg(g\circ\Phi).
\end{equation}
Thus, it suffices to prove that $\deg(g\circ\Phi)\le \impdeg(g\circ\Phi)$ for every $g:\gZ\to\sR$ and $\Phi:\gX\to\gZ$. Let the Fourier-Walsh transform of $g$ and $\Phi_i,i\in[d]$ be
\begin{equation}
g(\vz) = \sum_{G\subseteq[d]}\hat{g}(G)\prod_{i\in G}z_i
\label{appeq:z}
\end{equation}
and
\begin{equation}
\Phi_i(\vx) = \sum_{S\subseteq[m]}\hat{\Phi}_i(S)\prod_{j\in S}x_j,
\label{appeq:phi}
\end{equation}
respectively. Plugging~\eqref{appeq:phi} into~\eqref{appeq:z} with $z_i = \Phi_i(\vx)$ gives
\begin{align}
(g\circ\Phi)(\vx) &= \sum_{G\subseteq[d]}\hat{g}(G)\prod_{i\in G}\left(\sum_{S\subseteq[m]}\hat{\Phi}_i(S)\prod_{k\in S} x_k\right)\\
&= \sum_{G\in\{G'\subseteq[d], \hat{g}(G') \ne 0\}} \hat{g}(G)\prod_{i\in G}\left(\sum_{S\in\{S'\subseteq[m],\hat{\Phi}_i(S')\ne 0\}}\hat{\Phi}_i(S)\prod_{k\in S}x_k\right).
\end{align}
We thus have
\begin{align}
\deg(g\circ\Phi) &\le \max_{G\in\{G'\subseteq[d],\hat{g}(G')\ne 0\}}\sum_{i\in G} \max_{S\in\{S'\subseteq[m],\hat{\Phi}_i(S')\ne 0\}}|S|\\
&\le \max_{G\subseteq[d]}\sum_{i\in G} \max_{S\in\{S'\subseteq[m],\hat{\Phi}_i(S')\ne 0\}}|S|\\
&\le \sum_{i\in [d]} \max_{S\in\{S'\subseteq[m],\hat{\Phi}_i(S')\ne 0\}}|S|\\
& = \sum_{i\in [d]}\deg(\Phi_i).
\end{align}
On the other hand, Definition~\ref{def:impl_degree} gives
\begin{align}
\impdeg(g\circ\Phi) &= \deg(g) + \deg(\Phi) = \deg(g) + \sum_{i\in [d]}\deg(\Phi_i)\\
&\ge \sum_{i\in [d]}\deg(\Phi_i).
\end{align}
Therefore, we have $\deg(g\circ\Phi)\le \impdeg(g\circ\Phi)$. This completes the proof.
\end{proof}

\subsection{Proof of Theorem~\ref{thm:multi-task}}
\label{appsec:proof_multi-task}

\begin{proof}
By Lemma~\ref{lemma:degree}, we can upper-bound the degree of each $\Phi_j^*,j\in[d]$ by
\begin{equation}
\deg(\Phi_j^*)\le d.
\end{equation}
Expanding the LHS of~\eqref{eq:thm2} then gives
\begin{align}
\impdeg(h^*) - \impdeg(g\circ\Phi^*) &= \sum_{i\in [n]} \deg(h_i^*) - \sum_{i\in [n]}\deg(g_i) - \sum_{j\in[d]}\deg(\Phi^*_j)\\
&\ge \sum_{i\in [n]} \deg(h_i^*) - \sum_{i\in [n]}\deg(g_i) - d^2.\label{appeq:4.2}
\end{align}
Note that $h^*_i\in\hmin(h)$ and $g_i\circ\Phi$ is an realization of $h_i$ for every $i\in [n]$. By Definition~\ref{def:conditional_degree}, we 
have
\begin{equation}
\deg(h_i\mid\Phi^*) = \deg(h_i^*) - \deg(g_i),\forall i\in[n].
\label{appeq:conditional_degree}
\end{equation}
Plugging equation~\eqref{appeq:conditional_degree} into~\eqref{appeq:4.2} completes the proof.
\end{proof}

\subsection{Proof of Theorem~\ref{thm:no_free_lunch}}
\label{appsec:proof_no_free_lunch}

\begin{proof}
Note that for every task $h\in\gF^d\circ \psi^{-1}$, we can write $h = h'\circ \psi^{-1}$ for some $h'\in\gF^d$. Thus, for every $g:\gZ\to\sR$ and $\Phi:\gX\to\gZ$ such that $g\circ \Phi\in\gH(h)$, we have $(g\circ\Phi)(\vx) = h(\vx) = (h'\circ \psi^{-1})(\vx),\forall \vx\in\gX$, which amounts to $g(\vz) = (h'\circ T^{-1})(\vz),\forall \vz\in\{-1,1\}^d$.

For every $h_1,\ldots,h_n$, $g:\gZ\to\{-1,1\}^n$, and $\Phi:\gX\to\gZ$ such that $g_i\circ \Phi\in\gH(h_i)$ for every $i\in[n]$, we have
\begin{equation}
\lim_{n\to\infty} \frac{1}{n} \,\impdeg(g\circ\Phi) = \lim_{n\to\infty} \left(\frac{1}{n} \sum_{j\in[d]}\deg(\Phi_j) + \frac{1}{n}\sum_{i\in[n]}\deg(g_i)\right).
\label{eq:13}
\end{equation}
Applying Lemma~\ref{lemma:degree}, we have $\deg(\Phi)_j\le d,\forall j\in[d]$. This gives 
\begin{equation}
\lim_{n\to\infty} \frac{1}{n} \sum_{j\in[d]} \deg(\Phi_j) = 0.
\label{appeq:thm4.4-1}
\end{equation}
Meanwhile, since $h_i:\gX\to\sR,i\in[n]$ are independently and uniformly sampled from $\gF^d\circ \psi^{-1}$, we have
\begin{align}
\lim_{n\to\infty}\frac{1}{n}\sum_{i\in[n]}\deg(g_i) &= \E_{h\sim U(\gF^d\circ \psi^{-1})}\deg(h\circ f\circ T^{-1})\\
&= \E_{h'\sim U(\gF^d)} \deg(h'\circ T^{-1}) \\
&= \frac{1}{|\gF^d|}\sum_{h'\in\gF^d}\deg(h'\circ T^{-1}).\label{appeq:thm4.4-2}
\end{align}
By Lemma~\ref{lemma:bijection}, we have $\gF^d\circ T^{-1} = \gF^d$. This gives
\begin{equation}
\sum_{h'\in\gF^d}\deg(h'\circ T^{-1}) = \sum_{h'\in\gF^d}\deg(h').
\label{appeq:thm4.4-3}
\end{equation}
Plugging equations~\eqref{appeq:thm4.4-1},~\eqref{appeq:thm4.4-2}, and~\eqref{appeq:thm4.4-3} into equation~\eqref{eq:13} gives
\begin{equation}
\lim_{n\to\infty} \frac{1}{n} \,\impdeg(g\circ\Phi) = \frac{1}{|\gF^d|}\sum_{h'\in\gF^d}\deg(h'),
\end{equation}
which is a constant independent of $T$ (and thus independent of $\Phi$). Therefore, for any two viable representations $\Phi, \Phi'$ and $g, g'\in(\gF^d)^n$ with $g_i\circ\Phi$ and $g'_i\circ\Phi', \forall i\in[n]$, we must have $\lim_{n\to\infty} \frac{1}{n}\left(\impdeg(g\circ\Phi) - \impdeg(g'\circ\Phi')\right) = 0$. This completes the proof.
\end{proof}

\subsection{Proof of Corollary~\ref{corollary:degree_k}}
\label{appsec:proof_corollary}

\begin{proof}
By Definition~\ref{def:conditional_degree}, we have
\begin{equation}
\deg(h\mid \psi^{-1}) > 0 \Longleftrightarrow \deg(\hmin(h)) > \deg(g)
\end{equation}
for $g\circ \psi^{-1} \in\gH(h)$. By Lemma~\ref{lemma:degree}, we have $\deg(\hmin(h)) \le d$ for every $h:\gX\to\sR$. Thus, for $\deg(\hmin(h)) > \deg(g)$ to hold, we must have $\deg(h\circ \psi) = \deg(g) \le d-1$. By Definition~\ref{def:k_degree}, this gives $h\in\gF^d_{d-1}\circ \psi^{-1}$, completing the proof.
\end{proof}

\subsection{Proof of Theorem~\ref{thm:world_model}}
\label{appsec:proof_world_model}

\begin{proof}
We first prove the following lemma that characterizes the averaged degree change for Boolean function in $\gF^d_k$ when composed with invertible transforms.
\begin{lemma}
\label{lemma:degree_composition}
For every integer $1\le k\le d$ and every bijection $T:\{-1,1\}^d\to\{-1,1\}^d$, we have
\begin{equation}
\sum_{h'\in\gF_k^d}\deg(h'\circ T) \ge \sum_{h'\in\gF_k^d}\deg(h').
\end{equation}
In particular, when $k=1$, the equality holds if and only if $\deg(T_i) = 1$ for every $i\in [d]$.
\end{lemma}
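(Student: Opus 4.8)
The plan is to reduce the statement to a counting (majorization) argument using the bijectivity of post-composition by $T$. By Lemma~\ref{lemma:bijection}, the map $h'\mapsto h'\circ T$ is a bijection of $\gF^d$ onto itself, so $A:=\gF^d_k\circ T$ is a subset of $\gF^d$ with $|A|=|\gF^d_k|$. Now $\gF^d_k=\{f\in\gF^d:\deg(f)\le k\}$ collects \emph{all} functions of degree at most $k$, so it is a ``degree-downward-closed'' set; intuitively it is the set of a given size on which $\sum_f\deg(f)$ is smallest. To make this precise I would compare $A$ with $\gF^d_k$ on their symmetric difference: since $|A|=|\gF^d_k|$ we have $|A\setminus\gF^d_k|=|\gF^d_k\setminus A|=:r$, every $f\in A\setminus\gF^d_k$ has $\deg(f)\ge k+1$, and every $f\in\gF^d_k\setminus A$ has $\deg(f)\le k$, whence (using the convention of Section~\ref{appsec:proof} that $\gF^d$ and $\gF^d_k$ are finite sets)
\begin{align*}
\sum_{h'\in\gF^d_k}\deg(h'\circ T)-\sum_{h'\in\gF^d_k}\deg(h') &= \sum_{f\in A\setminus\gF^d_k}\deg(f)-\sum_{f\in\gF^d_k\setminus A}\deg(f)\\
&\ge (k+1)r - kr = r \ge 0.
\end{align*}
This proves the inequality, and moreover shows that equality holds if and only if $r=0$, i.e.\ $\gF^d_k\circ T=\gF^d_k$.

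For the refinement at $k=1$, I would show $\gF^d_1\circ T=\gF^d_1$ if and only if $\deg(T_i)=1$ for all $i\in[d]$. For the forward direction: each coordinate map $z\mapsto z_i$ lies in $\gF^d_1$, so $\gF^d_1\circ T=\gF^d_1$ forces $T_i=z_i\circ T\in\gF^d_1$, i.e.\ $\deg(T_i)\le 1$; and $\deg(T_i)\ge 1$ since a constant $T_i$ would confine the range of $T$ to a coordinate hyperplane of size $2^{d-1}$, contradicting surjectivity of $T$. For the converse, I would first record the classification of Boolean-valued degree-$\le 1$ functions: if $f:\{-1,1\}^d\to\{-1,1\}$ has $\deg(f)\le 1$, then expanding $f^2\equiv 1$ and matching Fourier coefficients forces at most one nonzero linear coefficient and, in the non-constant case, a zero constant term, so $f$ is $\pm 1$ constant or $f=\pm z_j$. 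Applying this to each $T_i$ and using bijectivity of $T$ to exclude constant coordinates and repeated indices gives $T_i(x)=\epsilon_i x_{\sigma(i)}$ with $\epsilon_i\in\{-1,1\}$ and $\sigma$ a permutation; then every affine $h'\in\gF^d_1$ composes with $T$ to an affine function, so $\gF^d_1\circ T\subseteq\gF^d_1$, and equality of cardinalities upgrades this to $\gF^d_1\circ T=\gF^d_1$. Combining with the general equality characterization from the previous paragraph finishes the $k=1$ claim.

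The main obstacle I anticipate is not the inequality itself, which is a short counting argument, but making the equality analysis airtight: one must verify carefully that $A=\gF^d_k\circ T$ really is the image of an injection defined on all of $\gF^d$ (so the cardinalities match), and for $k=1$ one needs the exact classification of degree-$1$ Boolean-valued functions together with surjectivity of $T$ to rule out the degenerate cases (a constant coordinate of $T$, or two coordinates of $T$ depending on the same input bit). Everything else is routine bookkeeping.
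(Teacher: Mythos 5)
Your proof is correct and takes essentially the same route as the paper's: both invoke Lemma~\ref{lemma:bijection} so that $|\gF^d_k\circ T|=|\gF^d_k|$, compare the two sets on their symmetric difference (where degrees exceed $k$ on one side and are at most $k$ on the other), characterize equality as $\gF^d_k\circ T=\gF^d_k$, and reduce the $k=1$ case to $T$ being a signed coordinate permutation. Your write-up of the $k=1$ equivalence is somewhat more explicit than the paper's (classifying Boolean-valued degree-$1$ functions, ruling out constant or repeated coordinates via bijectivity, and checking both directions), but it is the same argument.
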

\begin{proof}[Proof of Lemma~\ref{lemma:degree_composition}]
For every $\gG\subseteq \gF^d$, let $\gG\circ T = \{h'\circ T\mid h'\in \gG\}$.
By Lemma~\ref{lemma:bijection}, we know that the mapping $h'\mapsto h'\circ T$ is bijective on $\gF^d$. We thus have $|\gF^d_k\circ T| = |\gF^d_k|$ for every $k$. For $\gF^d_k\circ T$, there are two possible cases:
\begin{enumerate}
\item $\gF^d_k\circ T = \gF^d_k$. This immediately gives
\begin{align}
	\sum_{h'\in\gF_k^d}\deg(h'\circ T) = \sum_{h'\in \gF_k^d\circ T}\deg(h') = \sum_{h'\in\gF_k^d}\deg(h').
\end{align}

\item $\gF^d_k\circ T \ne \gF^d_k$. We can then decompose $\sum_{h'\in\gF_k^d}\deg(h'\circ T)$ as follows:
\begin{align}
\sum_{h'\in\gF_k^d}\deg(h'\circ T) &= \sum_{h'\in (\gF_k^d\circ T)\cap\gF_k^d}\deg(h') + \sum_{h'\in (\gF_k^d\circ T)\cap(\gF_d\setminus \gF_k^d)}\deg(h') \\
&= \sum_{h'\in\gF^d_k}\deg(h') + \sum_{h'\in (\gF_k^d\circ T)\cap(\gF^d\setminus \gF_k^d)}\deg(h') - \sum_{h'\in \gF^d_k\setminus (\gF^d_k\circ T)}\deg(h')\label{appeq:decompose}
\end{align}
Note that $|\gF^d_k\circ T| = |\gF^d_k|$ gives $|(\gF_k^d\circ T)\cap(\gF^d\setminus \gF_k^d)| = |\gF^d_k\setminus (\gF^d_k\circ T)|$. Meanwhile, by Definition~\ref{def:k_degree}, we have $\deg(h')\le k$ for every $h'\in\gF^d_k$ and $\deg(h') > k$ for every $h'\in \gF^d\setminus \gF^d_k$. Taking these two facts together, we have
\begin{equation}
\sum_{h'\in (\gF_k^d\circ T)\cap(\gF^d\setminus \gF_k^d)}\deg(h') - \sum_{h'\in \gF^d_k\setminus (\gF^d_k\circ T)}\deg(h') > 0.
\label{appeq:pos_degree}
\end{equation}
Plugging equation~\eqref{appeq:pos_degree} into equation~\eqref{appeq:decompose} then gives $\sum_{h'\in\gF_k^d}\deg(h'\circ T) > \sum_{h'\in\gF_k^d}\deg(h')$.
\end{enumerate}
Combining the above two cases, we conclude that $\sum_{h'\in\gF_k^d}\deg(h'\circ T) \ge \sum_{h'\in\gF_k^d}\deg(h')$ for every $1\le k\le d$.
Note that the above analysis also gives a necessary and sufficient condition for the equality to hold: $\gF^d_k\circ T = \gF^d_k$.

In particular, when $k=1$, $\sum_{h'\in\gF_k^d}\deg(h'\circ T) = \sum_{h'\in\gF_k^d}\deg(h')$ holds only for $T$ satisfying that $\gF^d_1\circ T = \gF^d_1$. Note that for every non-constant function $h'\in\gF_1^d$, there exists $i\in [d]$ such that $(h'\circ T)(\vz) \in\{T_i(\vz),-T_i(\vz)\}$ for every $\rvz\in\{-1,1\}^d$. Due to the arbitrariness of $h'$, we must have $\deg(T_i) = 1$ for every $i\in [d]$.
\end{proof}

We now move on to prove Theorem~\ref{thm:world_model}.
Our aim is to prove that the minimizer $(\Phi^*,g^*)$ of the optimization problem~\eqref{eq:world_model} learns the world model by negations and permutations when the number of tasks $n\to\infty$. Due to equations~\eqref{eq:13} and~\eqref{appeq:thm4.4-1}, the original problem equals to
\begin{equation}
\begin{aligned}
&\min_{\Phi:\gX\to\gZ, g\in\gF^d} \lim_{n\to\infty} \frac{1}{n}\,\sum_{i\in [n]}\deg(g_i) \\
&\hspace{2em} \mathrm{s.t.}\hspace{1.75em} g_i\circ\Phi\in\gH(h_i),\,\forall i\in[n].
\end{aligned}
\label{appeq:world_model}
\end{equation}
Due to the constraint $g_i\circ\Phi\in\gH(h_i),\forall i\in [n]$, we know that there must exist a bijection $T:\gZ\to\gZ$ such that for every $\vx\in\gX$, $\Phi(\vx) = T(\vz)$, with $\vz = \psi^{-1}(\vx)$ being the true latent variable. Therefore, it remains to prove the existence of a bijection $T:\gZ\to\gZ$ with $\deg(T_i) = 1,\forall i\in[d]$ such that for every $\vx\in\gX$, $\Phi^*(\vx) = T(\vz)$.


For every $g_i\circ\Phi\in\gH(h_i)$, we have $h_i=g_i\circ\Phi = g_i\circ T \circ \psi^{-1}$. We then have
\begin{align}
\lim_{n\to\infty} \frac{1}{n}\,\sum_{i\in [n]}\deg(g_i) &= \E_{k\sim\mathrm{Discrete}(p_1,\ldots,p_d)}\E_{h\sim U(\gF_k^d\circ \psi^{-1})} \deg(h\circ \psi\circ T^{-1})\\
&= \E_{k\sim \mathrm{Discrete}(p_1,\ldots,p_d)}\E_{h'\sim U(\gF_k^d)} \deg(h'\circ T^{-1}) \\
&= \sum_{k\in [d]} p_k \cdot\frac{1}{|\gF_k^d|}\sum_{h'\in\gF_k^d}\deg(h'\circ T^{-1}).\label{appeq:deg_all}
\end{align}
By Lemma~\ref{lemma:degree_composition}, we have
\begin{equation}
\sum_{h'\in\gF_k^d}\deg(h'\circ T^{-1}) \ge \sum_{h'\in\gF_k^d}\deg(h')
\label{appeq:deg_ineq1}
\end{equation}
for every $k\in [d]$. Plugging~\eqref{appeq:deg_ineq1} into equation~\eqref{appeq:deg_all} gives
\begin{align}
\lim_{n\to\infty} \frac{1}{n}\,\sum_{i\in [n]}\deg(g_i) &\ge \sum_{k\in [d]} p_k \cdot\frac{1}{|\gF_k^d|}\sum_{h'\in\gF_k^d}\deg(h'),
\label{appeq:deg_ineq}
\end{align}
where the equality holds only if $\deg(T_i) = 1$ for every $i\in [d]$. This completes the proof.
\end{proof}

\begin{remark}
A limitation of Theorem~\ref{thm:world_model} is that we requires a non-zero probability of explicitly sampling degree-$1$ tasks (\ie, $p_1 >0$), in which latent variables are essentially observed as task outputs. If $p_1 = 0$, we can still prove equation~\eqref{appeq:deg_ineq} by applying Lemma~\ref{lemma:degree_composition}; in other words, we can still prove that every representation $\Phi^*$ that learns the world model up to negations and permutations is a minimizer of the optimization problem~\eqref{eq:world_model}. However, such minimizers may not be \emph{unique}, because Lemma~\ref{lemma:degree_composition} only proves the equivalence between the equality and $\deg(T_i) = 1,\forall i\in [d]$ when $k=1$ but not $1< k\le d-1$. In fact, we can construct hard examples showing that in some cases, there indeed exists other $\Phi$ that minimizes $\sum_{h'\in\gF^d_k}\deg(h'\circ T)$ for every $k\in [d]\setminus \{1\}$.
\begin{example}
\label{example:counter}
Let $d = 3$ and let $T:\{-1,1\}^3\to\{-1,1\}^3$ be a bijective transform defined as
\begin{equation}
T_1(\vz) = z_1,\quad T_2(\vz) = z_1 z_2,\quad T_3(\vz) = z_1 z_3.
\end{equation}
One can easily verify that for every $k\in\{2,3\}$, every parity function $\chi_S(T(\vz)) = \prod_{i\in S} T_i(\vz)$ with $|S|\le k$ satisfy $\deg(\chi_S(T(\vz)))\le k$. By the Fourier-Walsh transform, this amounts to $\gF_k^3\circ T = \gF^3_k$ for $k = \{2,3\}$, which gives $\sum_{h'\in\gF^3_k}\deg(h'\circ T) = \sum_{h'\in\gF^3_k}\deg(h')$ by the proof of Lemma~\ref{lemma:degree_composition}. Thus, in this case we require $p_1 > 0$ to ensure that the representation $\Phi$ satisfying $\Phi(\vx) = T^{-1}(\vz)$ for every $\vx \in \gX$ and $\vz = \psi^{-1}(\vx)$ is not a minimizer of~\eqref{eq:world_model}.
\end{example}
Nevertheless, we do believe that cases like Example~\ref{example:counter} are rare. This is because by the proof of Lemma~\ref{lemma:degree_composition}, such examples must construct a bijection $T$ such that $\gF^d_k\circ T = \gF^d_k$ for \emph{every} $k\in [d]\setminus \{1\}$, which is increasingly difficult when $d$ becomes large. For example, if we increase the dimension of $\gZ$ from $3$ to $4$ in Example~\ref{example:counter} and keep $T_1$, $T_2$ and $T_3$ as is, it could be verified that there does not exist a $T_4:\{\-1,1\}^3\to\{-1,1\}$ satisfying that $\gF^4_k\circ T = \gF^4_k$ for every $k\in \{2,3\}$.
We believe that this intuition could be rigorously proved using \eg, Lemma~\ref{lemma:degree_1} or other techniques and leave it as future work.
\end{remark}


\subsection{Proof of Theorem~\ref{thm:benefits}}
\label{appsec:proof_benefits}

\begin{proof}
We first prove the following lemma:
\begin{lemma}
\label{lemma:min_deg_hamming}
Assume that the latent variables are uniformly sampled from the Hamming ball $B_r = \{\vz\in\{\pm 1\}^d\mid \#_{-1}(\vz)\le r\}$ with $r<d$. Then, for every task $h$, we have
\begin{equation}
\deg(\hmin(h)) \le \bigg\lceil\log_2\sum_{i=0}^r{d\choose r}\bigg\rceil.
\end{equation}
\end{lemma}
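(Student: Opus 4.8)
The plan is to generalize the proof of Lemma~\ref{lemma:degree}, where the role of the full cube $\gZ$ is now played by the Hamming ball $B_r$. Write $N\vcentcolon=\sum_{i=0}^r{d\choose i}=|B_r|$ and $k\vcentcolon=\ceil{\log_2 N}$, so that $N\le 2^k<2^{k+1}$. Since $\supp(p)=B_r$ and $\psi$ is invertible, every $f\in\gH(h)$ is required to agree with $h$ only on the $N$-point subset $\gX'\vcentcolon=\psi(B_r)\subseteq\gX\subseteq\{\pm1\}^m$. It therefore suffices to exhibit a single multilinear polynomial of degree at most $k$ that coincides with $h$ on $\gX'$: such a polynomial lies in $\gH(h)$ and witnesses $\deg(\hmin(h))\le k$.

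The first step is the Hamming-ball analogue of Lemma~\ref{lemma:existence}: for every $(k+1)$-element set $S=\{i_1,\dots,i_{k+1}\}\subseteq[m]$ there are signs $b_1,\dots,b_{k+1}\in\{-1,1\}$ with $\prod_{j\in[k+1]}(x_{i_j}+b_j)=0$ for all $\vx\in\gX'$. The argument is the same contradiction as before: were this false, then for each of the $2^{k+1}$ sign vectors $(b_1,\dots,b_{k+1})$ there would be a point of $\gX'$ whose restriction to the coordinates in $S$ equals $(b_1,\dots,b_{k+1})$, producing $2^{k+1}$ distinct points of $\gX'$ and contradicting $|\gX'|=N\le 2^k<2^{k+1}$.

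Expanding $\prod_{j\in[k+1]}(x_{i_j}+b_j)$ then shows that on $\gX'$ the parity $\chi_S$ equals a polynomial of degree at most $k$. Hence for every $U\subseteq[m]$ with $|U|\ge k+1$ and any $(k+1)$-subset $S\subseteq U$, on $\gX'$ we may rewrite $\chi_U=\chi_{U\setminus S}\,\chi_S$ as a multilinear polynomial of degree at most $|U|-1$. Starting from the Fourier--Walsh expansion of an arbitrary extension of $h$ to $\{\pm1\}^m$ and applying this substitution to the top-degree monomials round by round, the maximal degree appearing strictly decreases as long as it exceeds $k$; after finitely many rounds we obtain a multilinear polynomial of degree at most $k$ that still agrees with $h$ on $\gX'$. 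This proves $\deg(\hmin(h))\le k=\ceil{\log_2\sum_{i=0}^r{d\choose i}}$.

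I do not expect a genuine obstacle here: the proof is essentially that of Lemma~\ref{lemma:degree} with $2^d=|\gZ|$ replaced by $|B_r|$. The one point worth emphasizing is conceptual---the min-degree bound is governed by the cardinality of the \emph{effective} input domain $\psi(B_r)$ rather than by $m$ or $d$, and since $\psi(B_r)$ may be an arbitrary $N$-point subset of $\{\pm1\}^m$, one cannot simply discard all but $k$ coordinates; the monomial-annihilation step is exactly what supplies the missing structure. The only routine bookkeeping is the inequality $N\le 2^k<2^{k+1}$ and the termination of the iterated substitution, both of which are immediate.
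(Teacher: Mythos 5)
Your proposal is correct and follows essentially the same route as the paper's proof: a counting argument showing that every $(k+1)$-coordinate pattern cannot be fully realized on the $N$-point effective domain $\psi(B_r)$ (yielding an annihilating product $\prod_j(x_{i_j}+b_j)=0$), followed by iterated substitution of top-degree monomials by lower-degree polynomials in the Fourier--Walsh expansion. The only cosmetic difference is that you fix annihilators of size exactly $k+1$ and factor $\chi_U=\chi_{U\setminus S}\chi_S$, whereas the paper finds an annihilator at every size above $k$ and replaces the whole monomial directly; both yield the same degree reduction.
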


\begin{proof}[Proof of Lemma~\ref{lemma:min_deg_hamming}]
The main idea of the proof is similar to that of Lemma~\ref{lemma:degree} and Lemma~\ref{lemma:existence}.

Let $k = \lceil\log_2\sum_{i=0}^r{d\choose r}\rceil$. Due to the invertibility of $\psi$, we know that $|\{\vx\mid p(\psi^{-1}(\vx)) > 0\}| = |B_r| = \sum_{i=0}^r {d\choose r}$. Therefore, for every $S = \{i_1,\ldots,i_{k'}\}\subseteq [d]$ such that $k' > k$, there must exist $b_1,\ldots,b_{k'}\in\{-1,1\}^{k'}$ such that
\begin{equation}
\prod_{j\in[k']}(x_{i_j} + b_j) = \prod_{j\in S}x_j + \sum_{S'\subset S,|S'|\le k'-1} b_{S'}\prod_{k\in S'}x_k = 0
\label{appeq:thm4.9_1}
\end{equation}
for every $\vx\in\gX'\vcentcolon= \{\vx\mid p(\psi^{-1}(\vx)) > 0\}$, where $b_{S'}\in\{-1,1\}$ for every $S'\subset S$--if this does not hold, then we have $|\gX'| \ge 2^{k'} > 2^k \ge \sum_{i=0}^r{d\choose r}$, which contradicts $|\gX'| = \sum_{i=0}^r{d\choose r}$. By equation~\eqref{appeq:thm4.9_1}, we can replace every degree-$k'$ monomial $\chi_S(\vx) = \prod_{j\in S}x_j$ by a degree-$k'-1$ polynomial $-\sum_{S'\subset S,|S'|\le k'-1} b_{S'}\prod_{k\in S'}x_k$. Iteratively using this replacement in the Fourier-Walsh transform of $h$ gives the desired result.
\end{proof}

We can now prove Theorem~\ref{thm:benefits}.

\textit{Proof of (\romannumeral 1).} By Lemma~\ref{lemma:min_deg_hamming}, for every $h^*\in\hmin(h)$, we have $\deg(h^*)\le k = \lceil\log_2\sum_{i=0}^r{d\choose r}\rceil$. The test MSE of any $h^*$ thus satisfies
\begin{align}
\mathrm{err}(h^*) &= \E_{\rvz\sim U(\{-1,1\}^d)}[(h^*(\rvx) - h(\rvx))^2] = \E_{\rvz\sim U(\{-1,1\}^d)}[h^*(\rvx)^2 + h(\rvx)^2 - 2h^*(\rvx)h(\rvx)]\\
&= 1 + \E_{\rvz\sim U(\{-1,1\}^d)} h^*(\rvx)^2 - 2\,\E_{\rvz\sim U(\{-1,1\}^d)}h^*(\psi(\rvz))h(\psi(\rvz))\\
&> 1 - 2\langle h^*\circ\psi,h\circ\psi \rangle\label{appeq:thm4.9_2}
\end{align}
We then prove that $k\ge r+1$. To see this, recall that $d>r$ and one can verify
\begin{align}
k &= \bigg\lceil\log_2\sum_{i=0}^r{d\choose r}\bigg\rceil \ge \bigg\lceil\log_2\sum_{i=0}^r{r+1\choose r}\bigg\rceil = \left\lceil\log_2\left(2^{r+1} - 1\right)\right\rceil = r+1.
\end{align}
Since $\deg(h\mid\psi^{-1}) \ge q-r$, by Definition~\ref{def:conditional_degree} we have $\deg(h^*) - \deg(g) \ge q-r$ for every $g$ satisfying $g\circ\psi^{-1} = h^*$. This gives $\deg(g) \le \deg(h^*) -q + r \le k-q+r$, \ie, $\deg(h^*\circ\psi) \le k-q+r$. Recalling that $h\circ\psi$ is a parity function $\chi_S$ with $\deg(h\circ\psi) = |S| > k-q+r$ and applying equation~\eqref{appeq:parity_fourier}, we have
\begin{equation}
\langle h^*\circ\psi,h\circ\psi \rangle = \widehat{h^*\circ\psi}(S) = 0.
\label{appeq:inner_product_zero}
\end{equation}
Plugging equation~\eqref{appeq:inner_product_zero} into~\eqref{appeq:thm4.9_2} completes the proof.

\textit{Proof of (\romannumeral 2).} Since $\deg(h\mid \psi^{-1}) \ge q-r$ and $\Phi^*$ learns the world model up to negations and permutations, we have $\deg(h\mid \Phi^*) = \deg(h\mid \psi^{-1}) \ge q-r$ and hence $\deg(g^*) \le r$. By Lemma~\ref{lemma:abbe}, $g^*$ is unique. We thus have $h = g^*\circ\Phi^*$, which gives the desired result.
\end{proof}

\subsection{Proof of Theorem~\ref{thm:basis}}
\label{appsec:proof_basis}

\begin{proof}
We first prove the following lemma:
\begin{lemma}
\label{lemma:compatibility}
If $U$ is compatible, then $\deg_U(f) = \deg(f)$ holds for every $f\in\gF^n$.
\end{lemma}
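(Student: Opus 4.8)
The plan is to reduce the claim to showing that compatibility of $U$ forces its inverse to preserve the degree of every parity function, i.e.\ $\deg(U^{-1}(\chi_S)) = |S|$ for all $S\subseteq[n]$. Granting this, the definition \eqref{eq:degree_u} immediately yields
\[
\deg_U(f) \;=\; \max\{\deg(U^{-1}(\chi_S)) : \hat f(S)\neq 0\} \;=\; \max\{|S| : \hat f(S)\neq 0\} \;=\; \deg(f),
\]
which is the assertion of the lemma. So everything comes down to this degree identity for $U^{-1}$ on parities.

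To obtain it I would argue via the degree filtration $\gF^n_0 \subseteq \gF^n_1 \subseteq \cdots \subseteq \gF^n_n = \gF^n$, using that $\{\chi_S : |S|\le k\}$ is a basis of $\gF^n_k$ (a direct consequence of the Fourier--Walsh transform). Compatibility says $\deg(U(\chi_S)) = |S|$, so $U$ sends every basis element of $\gF^n_k$ into $\gF^n_k$, hence $U(\gF^n_k)\subseteq \gF^n_k$; since $U$ is invertible it is injective on the finite-dimensional space $\gF^n_k$, and an injective endomorphism of a finite-dimensional space is surjective, so $U(\gF^n_k) = \gF^n_k$ and therefore $U^{-1}(\gF^n_k) = \gF^n_k$ for every $k$. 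Now fix $S$ with $|S| = k$. From $\chi_S\in\gF^n_k$ and $U^{-1}(\gF^n_k) = \gF^n_k$ we get $\deg(U^{-1}(\chi_S))\le k$; conversely, if $\deg(U^{-1}(\chi_S))\le k-1$ then $U^{-1}(\chi_S)\in\gF^n_{k-1}$, so $\chi_S = U(U^{-1}(\chi_S))\in U(\gF^n_{k-1}) = \gF^n_{k-1}$, contradicting $\deg(\chi_S) = k$. Hence $\deg(U^{-1}(\chi_S)) = |S|$, and plugging this into the display above finishes the proof.

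The argument is elementary linear algebra on $\gF^n$, so there is no serious obstacle; the one thing to keep straight is that $\deg_U(f)$ is not literally $\deg(U(f))$---Lemma~\ref{lemma:invertible_transform_degree} controls the latter, whereas $\deg_U$ is phrased through $U^{-1}$ acting on parities---so the substantive step is the filtration-invariance fact $U^{-1}(\gF^n_k) = \gF^n_k$, from which both the upper and lower bounds on $\deg(U^{-1}(\chi_S))$ drop out. (Equivalently, one could observe that $U^{-1}$ then also satisfies the hypothesis of Lemma~\ref{lemma:invertible_transform_degree}, but we only need its effect on parities.)
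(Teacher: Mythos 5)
Your proof is correct, and it reaches the same pivotal identity as the paper---$\deg(U^{-1}(\chi_S)) = \deg(\chi_S)$ for all $S$, after which plugging into \eqref{eq:degree_u} finishes things identically---but it establishes that identity by a different route. The paper gets it in one step by invoking Lemma~\ref{lemma:invertible_transform_degree}: compatibility supplies that lemma's hypothesis, and applying its conclusion $\deg(U(f))=\deg(f)$ at $f=U^{-1}(\chi_S)$ immediately gives $\deg(U^{-1}(\chi_S))=\deg(\chi_S)$. You instead prove the fact from scratch via the degree filtration: compatibility plus linearity gives $U(\gF^n_k)\subseteq\gF^n_k$, injectivity of an endomorphism of a finite-dimensional space upgrades this to $U(\gF^n_k)=\gF^n_k$, and the two inclusions $U^{-1}(\gF^n_k)=\gF^n_k$ yield both the upper bound $\deg(U^{-1}(\chi_S))\le|S|$ and, by the contradiction with $\chi_S\in\gF^n_{|S|-1}$, the matching lower bound. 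The paper's argument is shorter because it reuses an existing lemma; yours is self-contained and, arguably, more careful: the filtration-invariance viewpoint is exactly what rules out top-degree cancellations when $U$ (or $U^{-1}$) acts on a linear combination of parities, a point that the cited lemma's own proof treats rather lightly. You are also right to flag that $\deg_U$ is defined through $U^{-1}$ acting on parities rather than as $\deg(U(f))$, so the reduction to a statement about $U^{-1}(\chi_S)$ is the substantive content in either version.
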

\begin{proof}[Proof of Lemma~\ref{lemma:compatibility}]
By Defintion~\ref{def:compatibility}, we have $\deg(U(\chi_S)) = \deg(\chi_S)$ for every compatible $U$.
Applying Lemma~\ref{lemma:invertible_transform_degree}, we further have $\deg(U(\chi_S)) = \deg(\chi_S) = \deg(U^{-1}(\chi_S))$.
This gives
\begin{align}
\deg_U(f) &= \max\left\{\deg(U^{-1}(\chi_S)):\hat{f}(S)\ne 0\right\} = \max\left\{\deg(\chi_S):\hat{f}(S)\ne 0\right\} = \deg(f),
\end{align}
which completes the proof.
\end{proof}

We are now ready to prove Theorem~\ref{thm:basis}. Note that under the new basis $\{U(\chi_S)\}$, the original optimization problem~\eqref{eq:world_model} becomes (also applying the equivalence between~\eqref{eq:world_model} and~\eqref{appeq:world_model}):
\begin{equation}
\begin{aligned}
&\min_{\Phi:\gX\to\gZ, g\in\gF^d} \lim_{n\to\infty} \frac{1}{n}\,\sum_{i\in [n]}\deg_U(g_i) \\
&\hspace{2em} \mathrm{s.t.}\hspace{1.75em} g_i\circ\Phi\in\gH(h_i),\,\forall i\in[n].
\end{aligned}
\label{appeq:world_model_basis}
\end{equation}

\textit{Proof of (\romannumeral 1).} If $U$ is compatible, then by Lemma~\ref{lemma:compatibility} we have $\deg_U(f) = \deg(f)$ for every Boolean function $f$. This immediately gives the equivalence between~\eqref{appeq:world_model_basis} and~\eqref{appeq:world_model} and hence their minimizers. We thus conclude that $\Phi^*$ learns the world model up to negations and permutations (i.e., degree-$1$ transforms) as in Theorem~\ref{thm:world_model}.

\textit{Proof of (\romannumeral 2).}
By the proof of Theorem~\ref{thm:world_model} (Section~\ref{appsec:proof_world_model}), $\Phi^*$ learns the world model up to degree-$1$ transforms if and only if $\sum_{h'\in\gF^d_k}\deg_U(h'\circ T^{-1}) = \sum_{h'\in\gF^d_k}\deg(h')$ holds for every $k\in [d]$, where $T = \Phi^*\circ\psi$ with $\deg(T_j) = 1,\,\forall j\in [n]$.
If $U$ is not compatible, then there must exist $h''\in\gF^d_k$ such that $\deg_U(h'') > \deg(h'')$ for some $k<d$. Since composing Boolean functions with degree-$1$ transforms does not change the degree of functions, we have $\deg_U(h''\circ T^{-1}) > \deg(h'')$ and hence $\sum_{h'\in\gF^d_k}\deg_U(h'\circ T^{-1}) > \sum_{h'\in\gF^d_k}\deg(h')$ for every degree-$1$ transform $T$.

In particular, let $\{\chi_1,\ldots,\chi_{2^d}\}$ be the set of all parity functions with $d$-dimensional inputs. For every $k\in [d]$, we can construct $U$ such that:
\begin{enumerate}
\item $U(\chi_1),\ldots,U(\chi_{2^d})$ is a permutation of $\chi_1,\ldots,\chi_{2^d}$;
\item For every $i\in [d]$, we have $U(\chi_{\{i\}}) = \chi_S$ for some $S\subseteq[d]$ with $|S| = k$ and $U(\chi_S) = \chi_{\{i\}}$.
\end{enumerate}
Recall that $\deg_U(h'\circ T^{-1}) = \deg(U^{-1}(h'\circ T^{-1}))$. To ensure that $\sum_{h'\in\gF^d_k}\deg_U(h'\circ T^{-1}) = \sum_{h'\in\gF^d_k}\deg(h')$ for $k=1$, we must have $T^{-1}_j = \chi_S$ for some $S\subseteq[d]$ and $j\in [d]$ with $|S| = k$. This gives
\begin{equation}
\max_{i\in [d]}\deg\left(T^{-1}_i\right)\ge \deg\left(T^{-1}_j\right) = k,
\end{equation}
which completes the proof.
\end{proof}

\section{Experimental Details}
\label{appsec:exp}

This section presents additional experimental details. All of our experiments were conducted using PyTorch~\citep{paszke_pytorch_2019} on NVIDIA V100/A100 GPUs.

\subsection{Polynomial Extrapolation}
\label{appsec:extrapolation}

\paragraph{Dataset.} We consider fitting and extrapolating degree-$n$ polynomials with the form $P_n(x) = \sum_{i=0}^n a_i x^n$. Given an input $x\in\sR$, the label is given by $y = P_n(x)$. In our experiments, we consider three families of polynomials with degree $1$, $2$, and $3$. In each family, every coefficient $a_i,i\in\{0,1,\ldots,n\}$ is uniformly sampled from $[0,1)$. In our experiments, we sample $50$ polynomials in each family for the violin plots. Other data parameters are as follows:
\begin{itemize}
	\item Training, validation, and test data are uniformly sampled from $[-1,1)$, $[-1,1)$, and $[-2,2)$, respectively.
	\item For each polynomial instance, we sample $50,000$ training data, $1,000$ validation data, and $10,000$ test data.
\end{itemize}

\paragraph{Model and hyperparameters.} We consider MLPs with the following architecture:
\begin{equation}
\mathrm{MLP}(\vx) = \mW^{(d)}\sigma\left(\mW^{(d-1)}\sigma\left(\ldots\sigma\left(\mW^{(1)}\vx + \vb^{(1)}\right)\right)+\vb^{(d-1)}\right) + \vb^{(d)},
\end{equation}
where $\sigma$ is the activation function and for every $i\in [d]$, $\mW^{(i)}$ and $\vb^{(i)}$ are weights and bias of the $i$-th layer, respectively. For ReLU MLPs, all activation functions are set to ReLU; for our architecture, we replace half of ReLUs in every layer by the identity function $\sigma(x) = x$ and the quadratic function $\sigma(x) = x^2$, with the number of identity functions and quadratic functions being the same (\ie, both functions constitute $25\%$ activation functions, while the remaining $50\%$ are still ReLUs). We search the following hyperparameters for MLPs:
\begin{itemize}
	\item Number of layers $d$ is set to $4$.
	\item Width of each $\mW^{(i)}$ from $\{128, 256, 512\}$.
\end{itemize}

We train all MLPs with the mean square error (MSE) loss with the AdamW optimizer~\citep{loshchilov_decoupled_2019}. Training hyperparameters are as follows:
\begin{itemize}
	\item Initial learning rate from $\{1e-3, 1e-4, 1e-5\}$. We use a cosine learning rate scheduler.
	\item Weight decay is set to $0.1$.
	\item Batch size is set to $512$.
	\item Number of epochs is set to $400$.
\end{itemize}

\paragraph{Evaluation metric.} We evaluate all models using MSE on test data.

\begin{figure}[t]
\centering
\subcaptionbox{An example from the training distribution (sampled 5 frames with uniform spacing).}{
\includegraphics[width=0.16\linewidth]{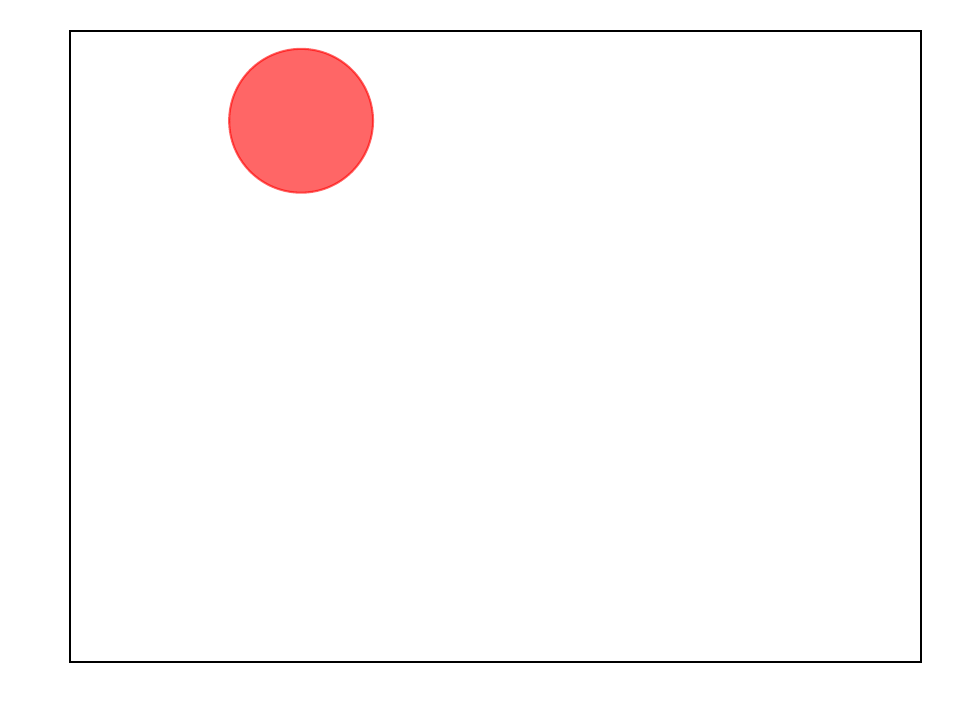}
\includegraphics[width=0.16\linewidth]{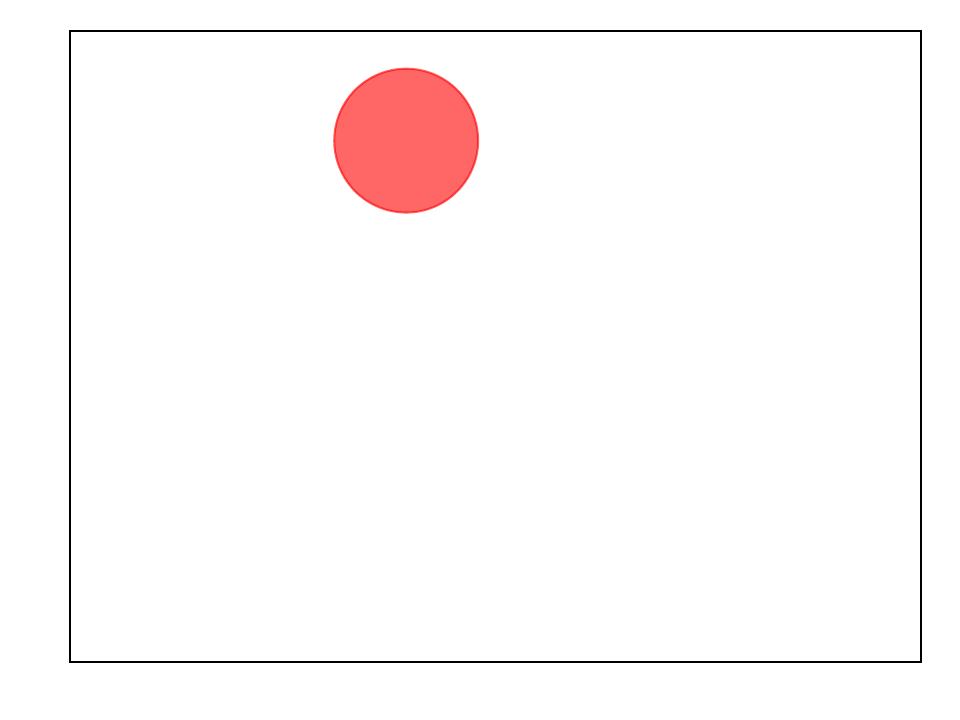}
\includegraphics[width=0.16\linewidth]{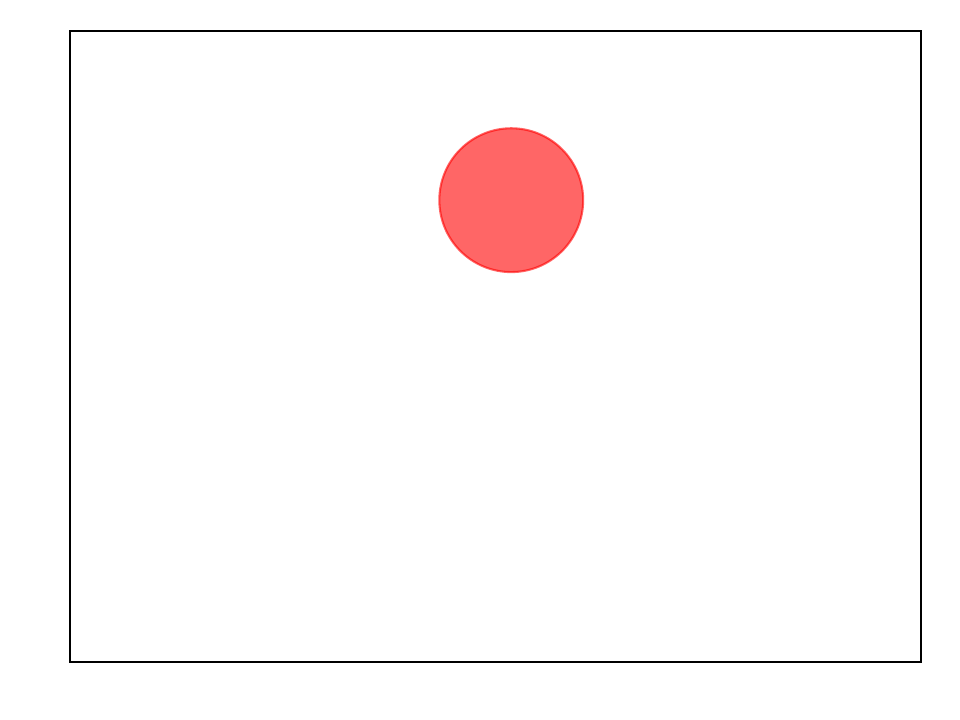}
\includegraphics[width=0.16\linewidth]{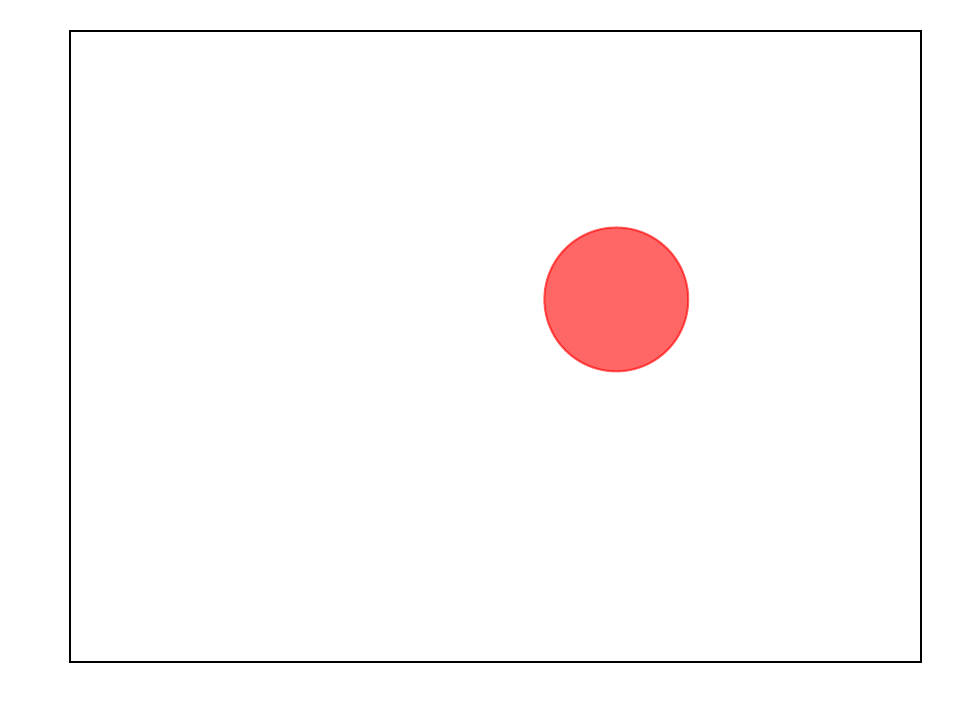}
\includegraphics[width=0.16\linewidth]{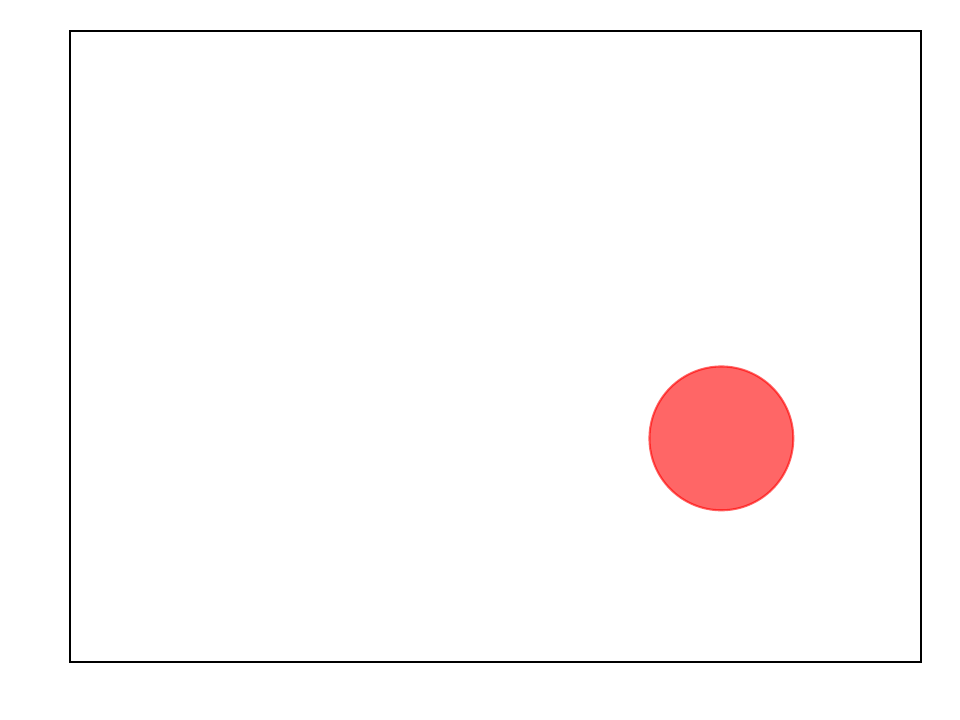}
\includegraphics[width=0.16\linewidth]{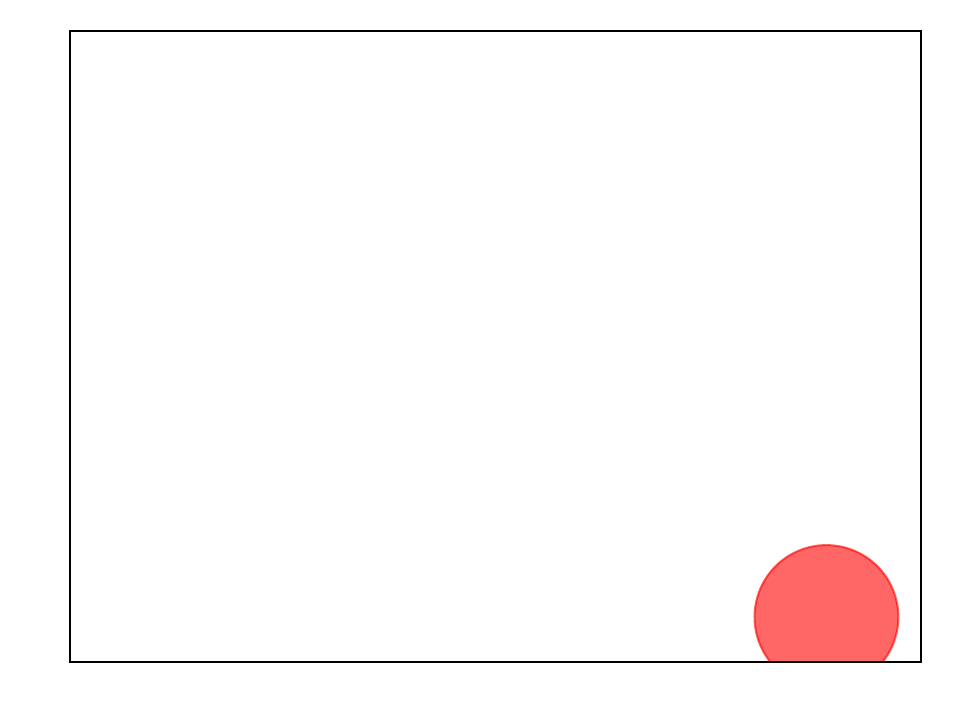}
}\\
\centering
\subcaptionbox{An example from the test distribution with a larger radius and a larger initial velocity (sampled 5 frames with uniform spacing).}{
\includegraphics[width=0.16\linewidth]{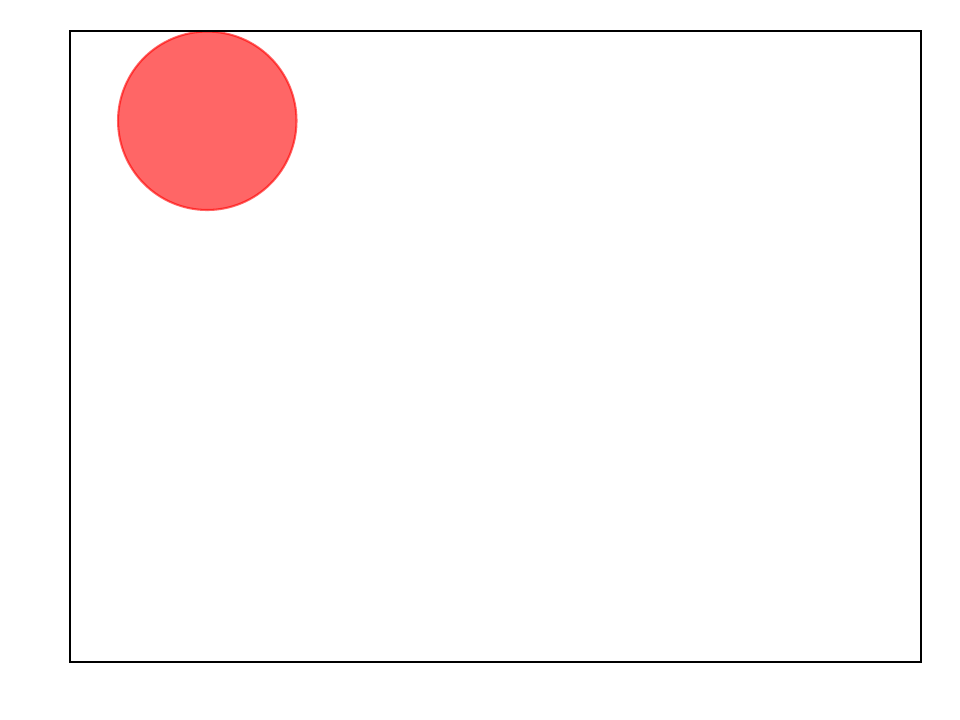}
\includegraphics[width=0.16\linewidth]{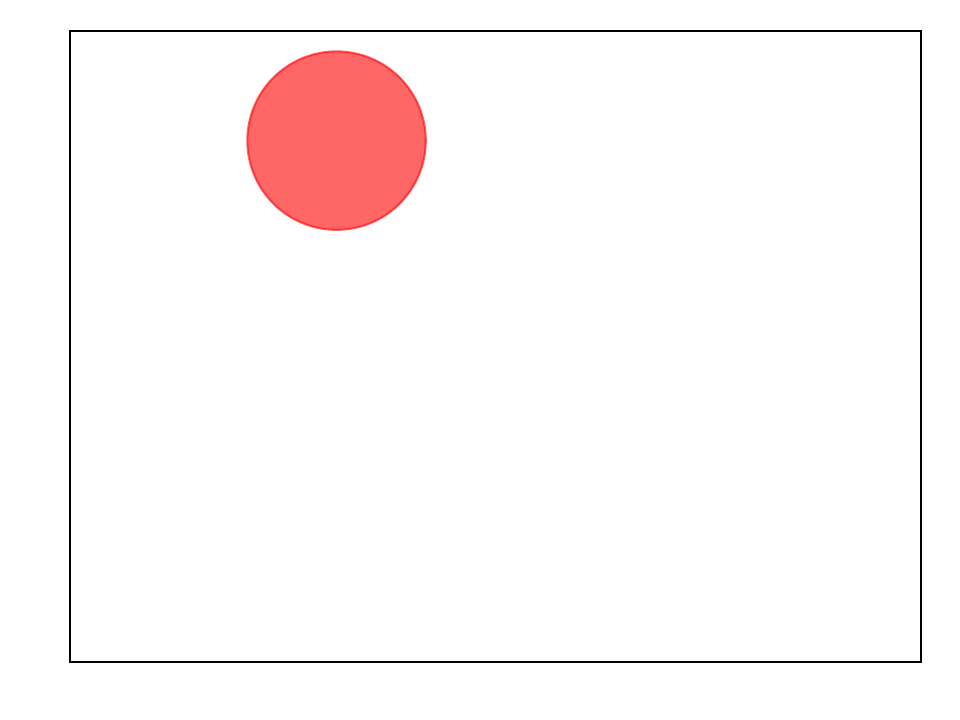}
\includegraphics[width=0.16\linewidth]{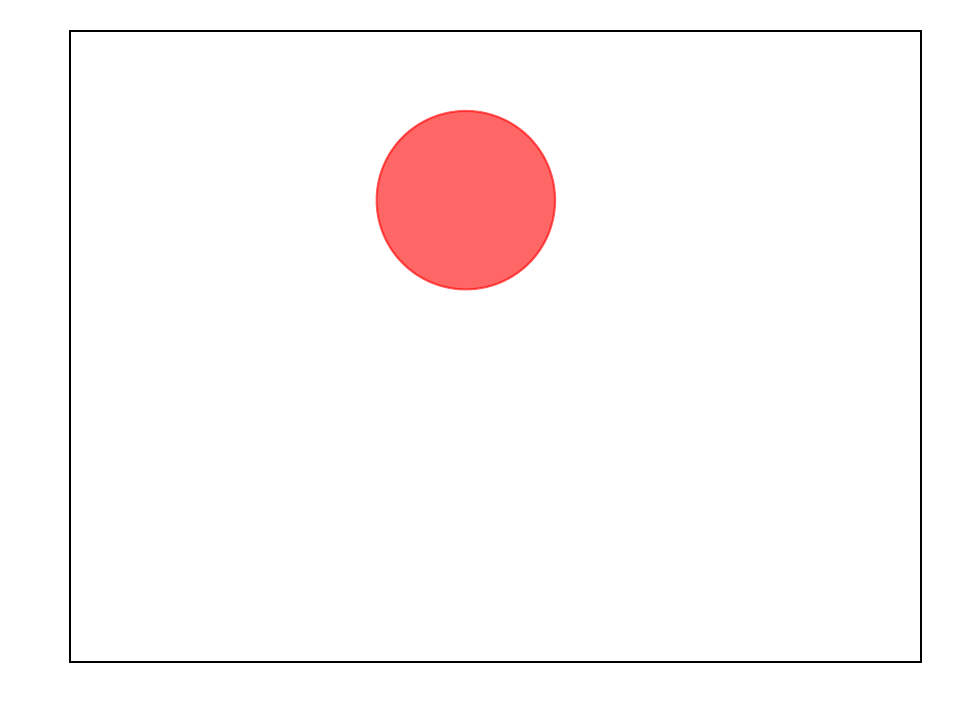}
\includegraphics[width=0.16\linewidth]{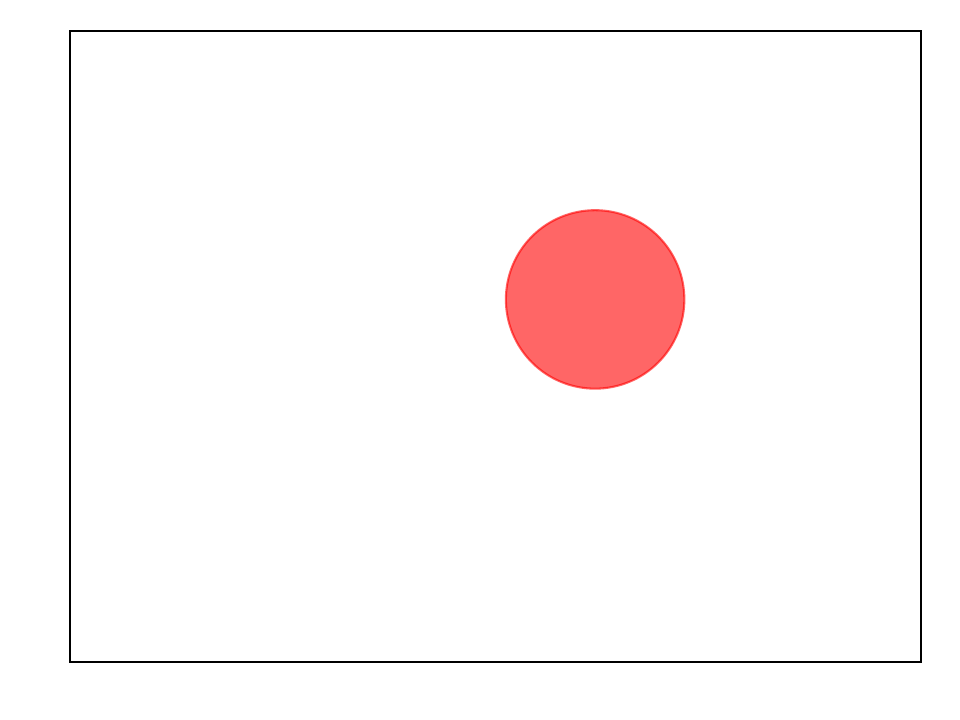}
\includegraphics[width=0.16\linewidth]{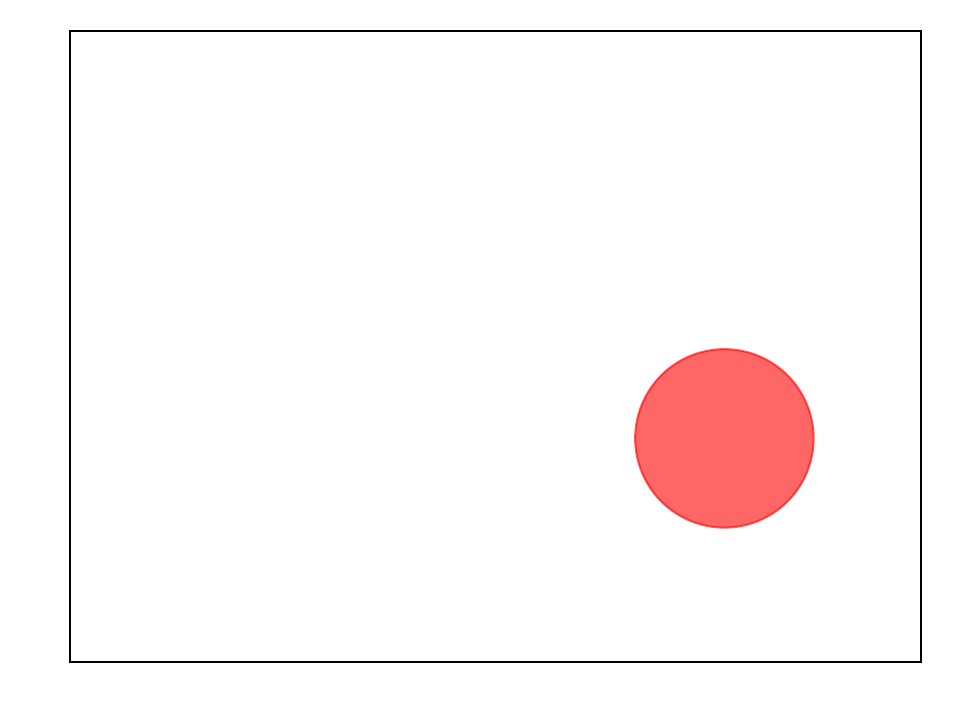}
\includegraphics[width=0.16\linewidth]{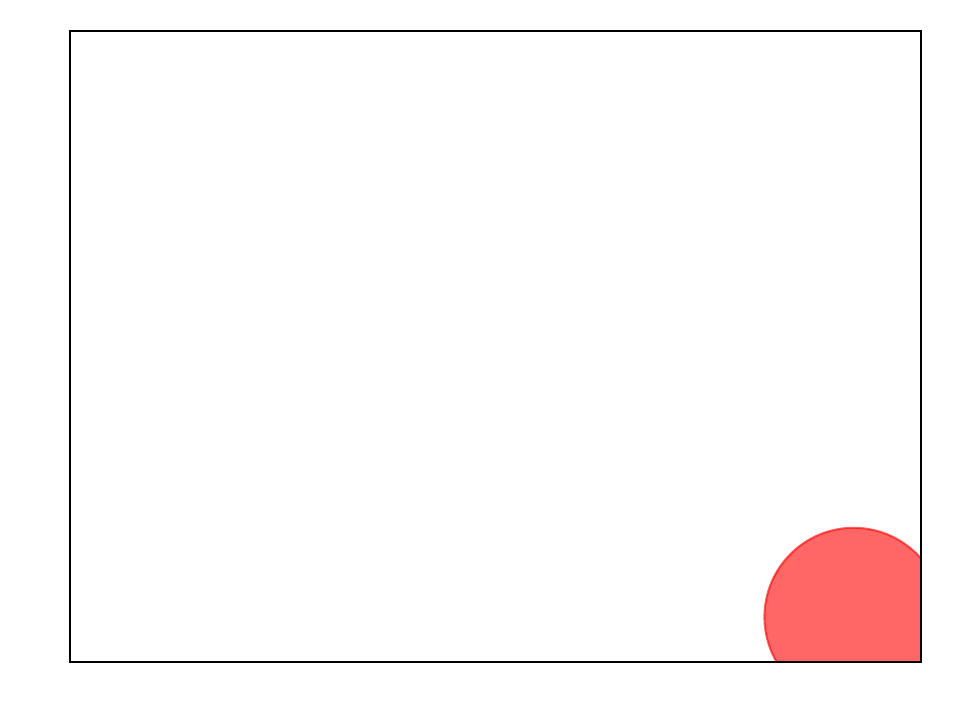}
}
\caption{Two visualized examples of the parabolic motion.}
\label{appfig:parabolic}
\end{figure}

\begin{figure}[t]
\centering
\subcaptionbox{An example from the training distribution (sampled 5 frames with uniform spacing).}{
\includegraphics[width=0.16\linewidth]{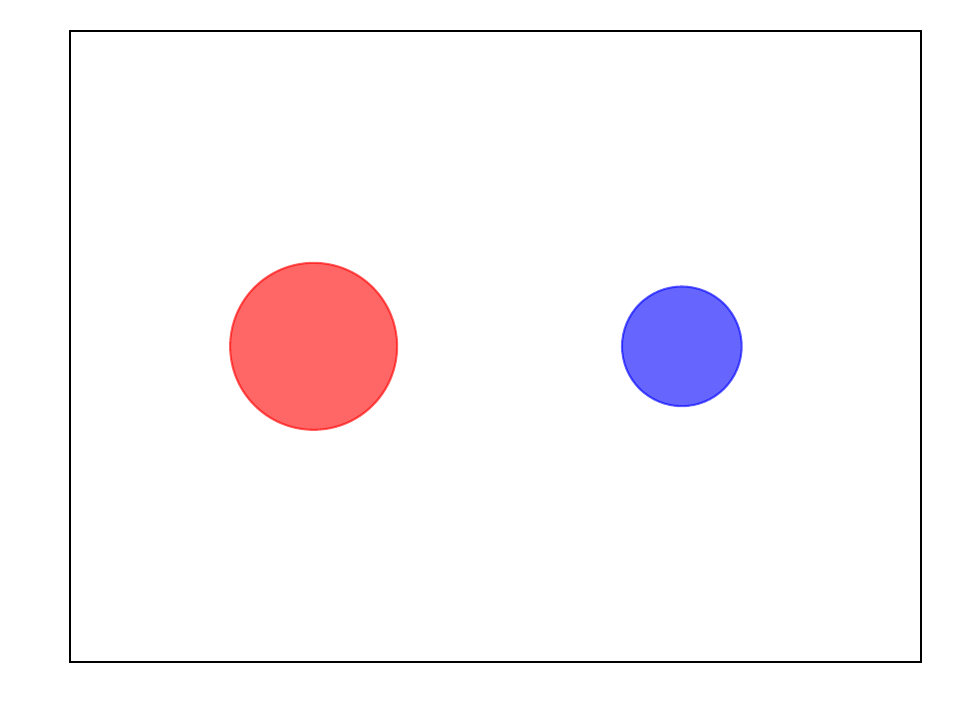}
\includegraphics[width=0.16\linewidth]{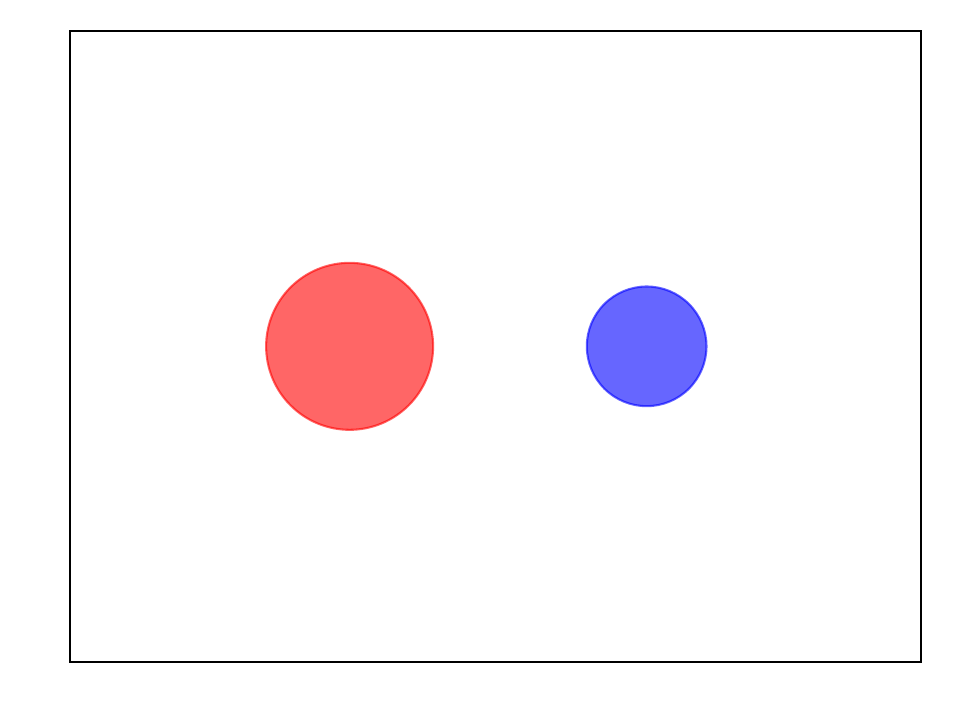}
\includegraphics[width=0.16\linewidth]{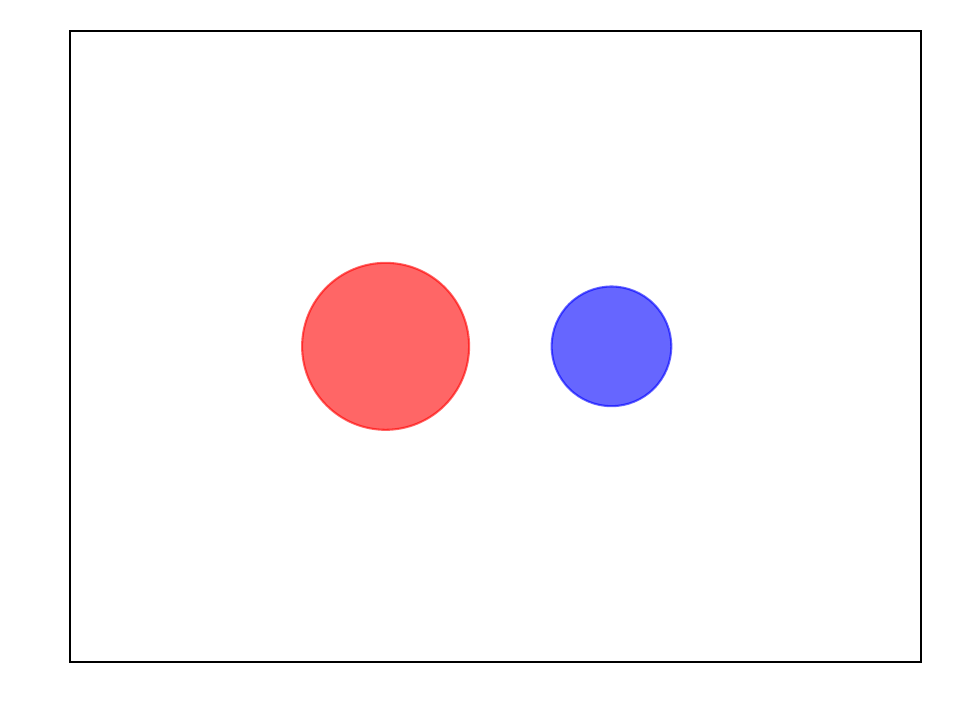}
\includegraphics[width=0.16\linewidth]{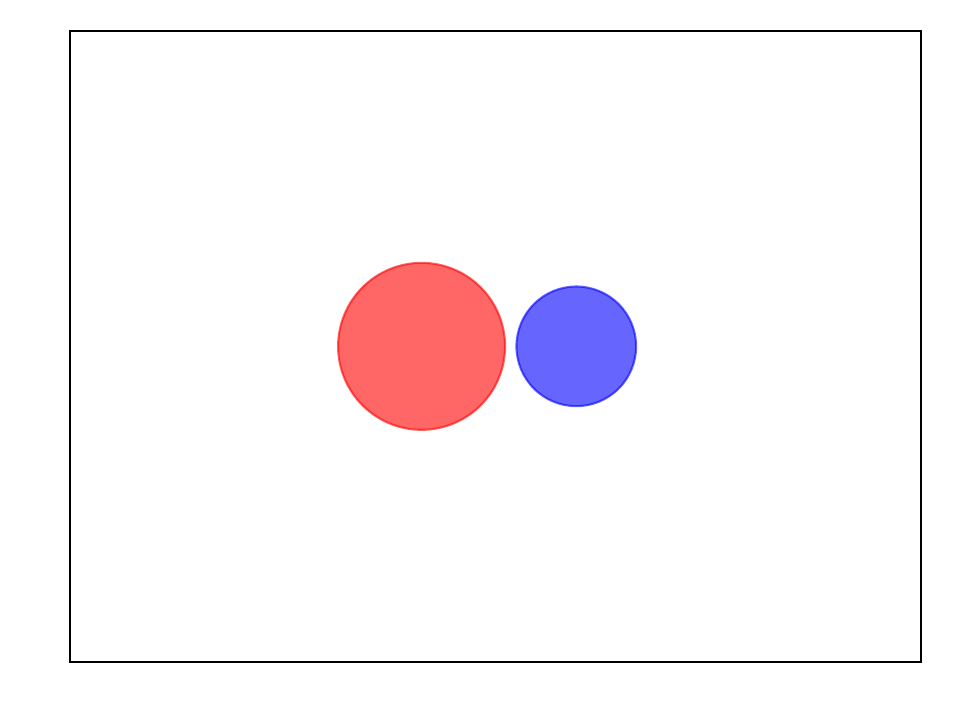}
\includegraphics[width=0.16\linewidth]{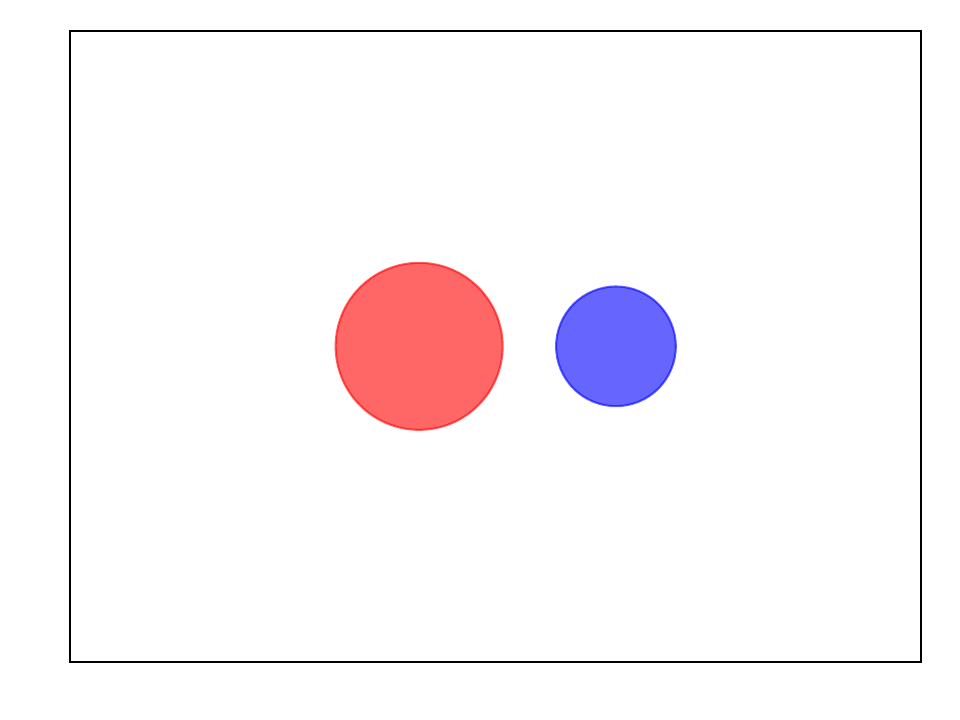}
\includegraphics[width=0.16\linewidth]{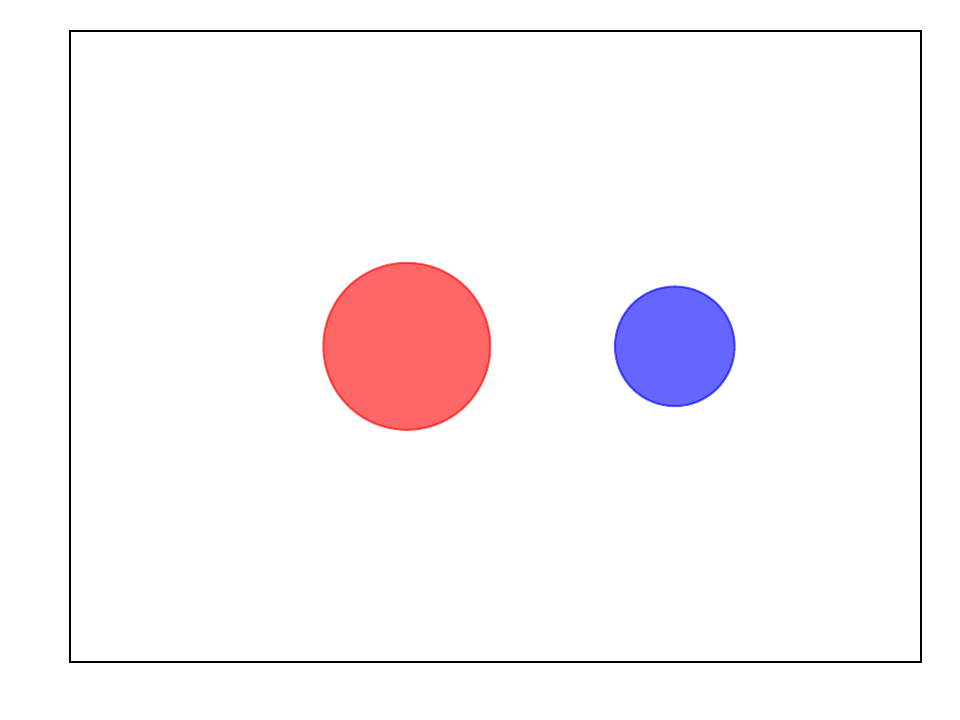}
}\\
\centering
\subcaptionbox{An example from the test distribution with larger initial velocities (sampled 5 frames with uniform spacing).}{
\includegraphics[width=0.16\linewidth]{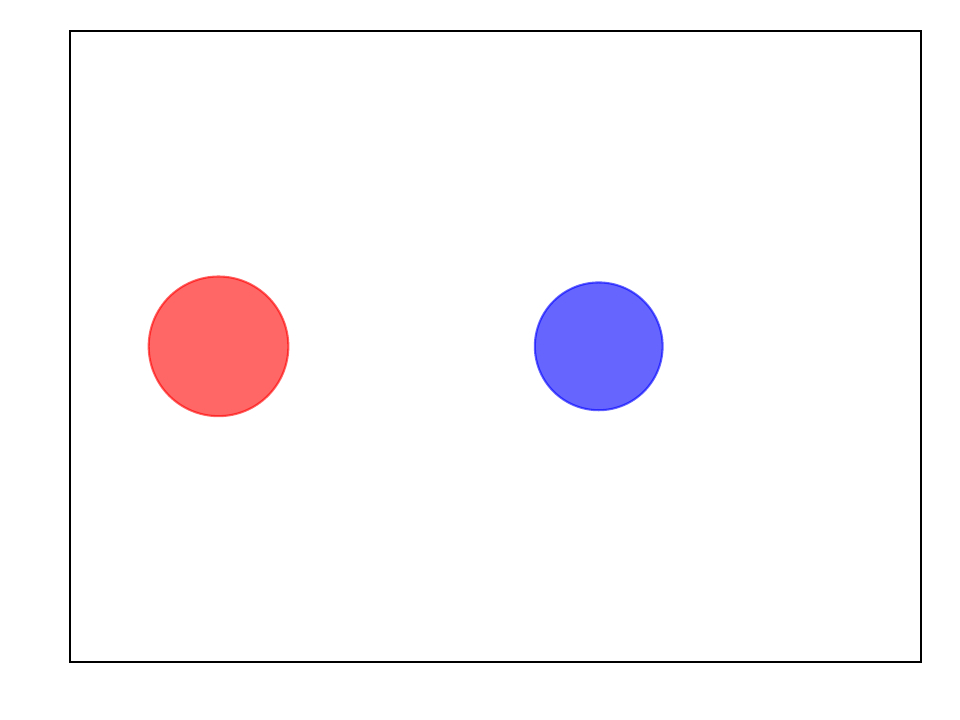}
\includegraphics[width=0.16\linewidth]{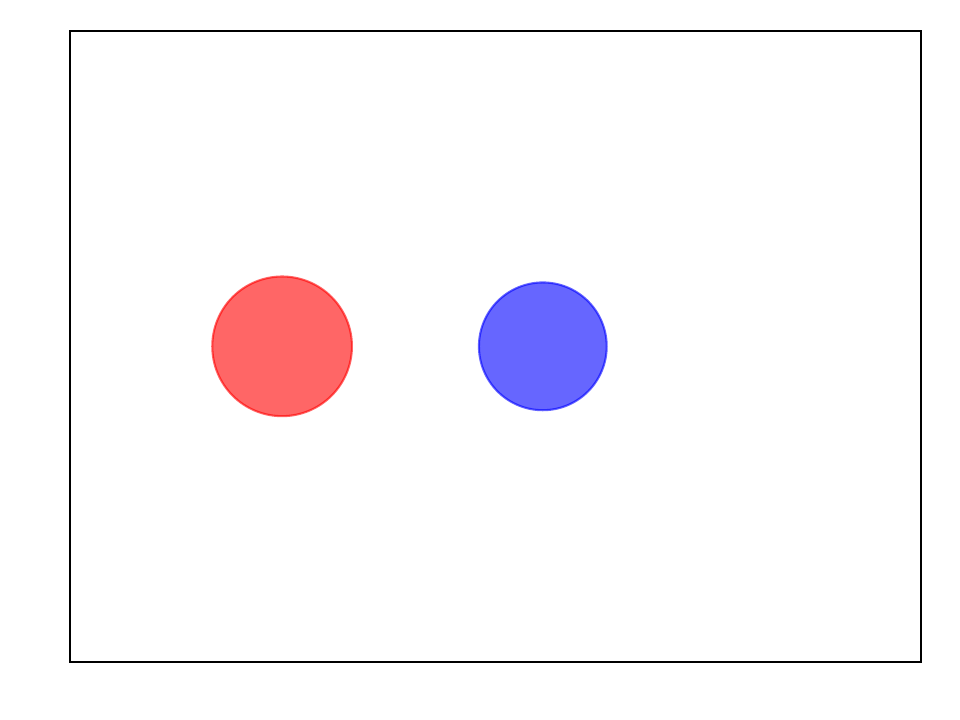}
\includegraphics[width=0.16\linewidth]{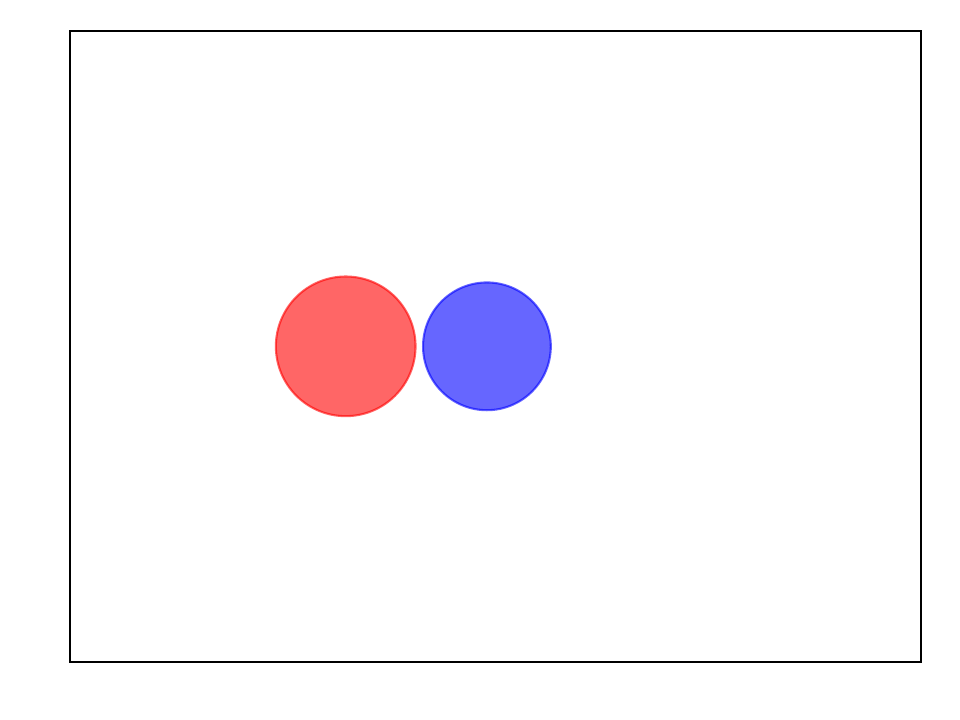}
\includegraphics[width=0.16\linewidth]{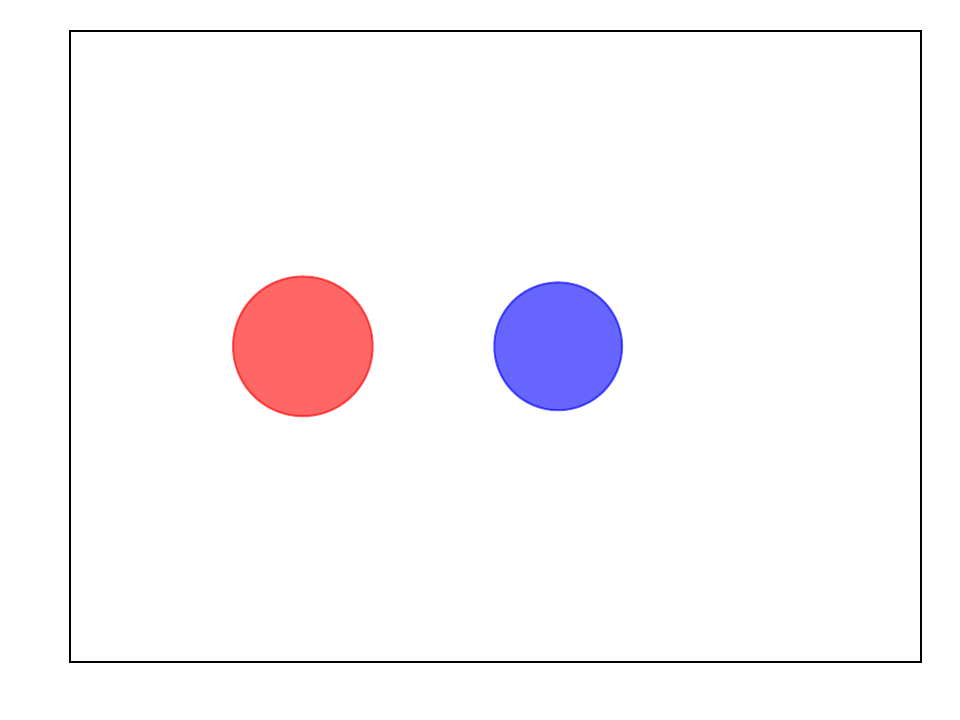}
\includegraphics[width=0.16\linewidth]{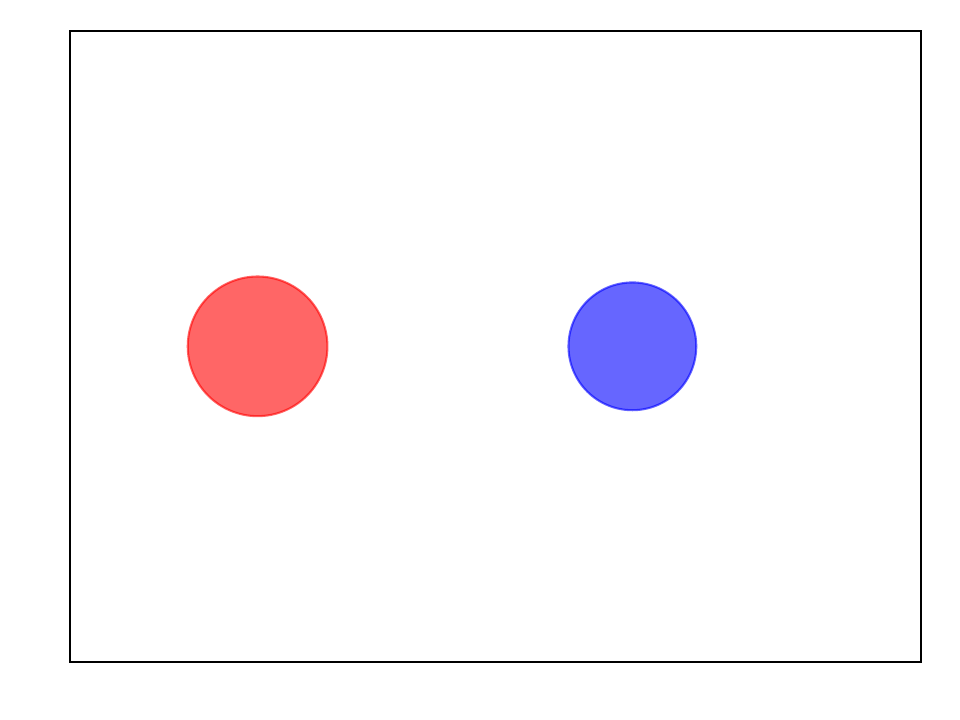}
\includegraphics[width=0.16\linewidth]{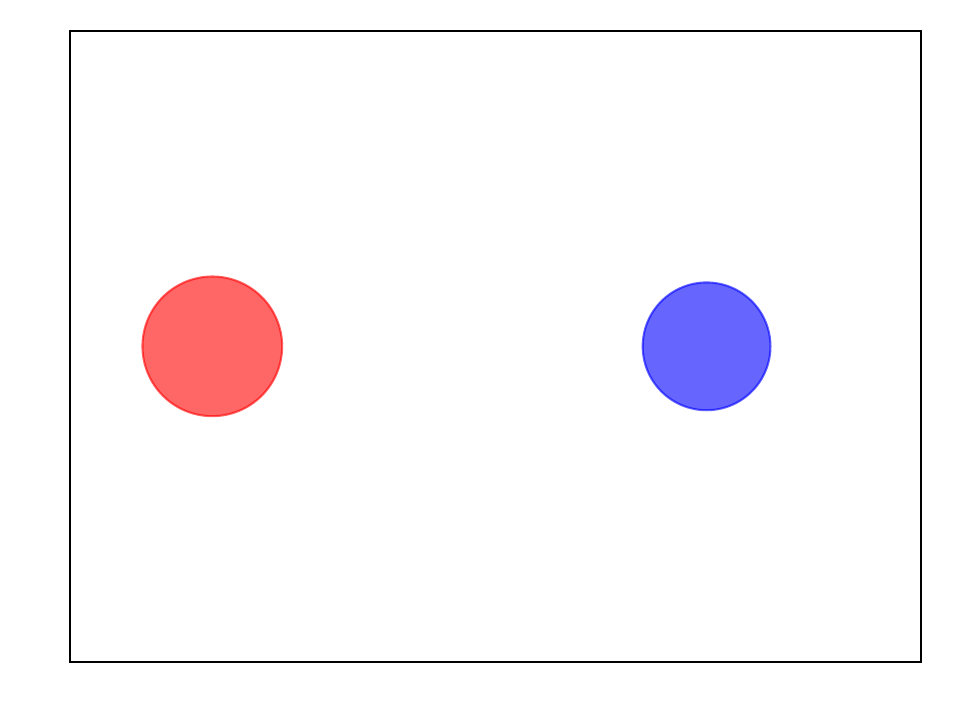}
}
\caption{Two visualized examples of the collision motion.}
\label{appfig:collision}
\end{figure}

\subsection{Learning Physical Laws}
\label{appsec:physics}

\paragraph{Dataset.} Inspired by~\citet{kang_how_2024}, we create training and test sequences representing ball-shaped object movements that adhere to two basic physical laws: (\romannumeral 1) single-object parabolic motion (reflecting Newton's second law of motion), and (\romannumeral 2) two-object elastic collision (reflecting the convervation of energy and momentum). In both settings, we consider a 2-dimensional environment in which each object is encoded by a three-dimensional tuple $(r_t, x_t, y_t)$ at every time step $t$, where $r_t$ represents the radius of the ball and $(x_t,y_t)$ stands for the 2-dimensional coordinates of the ball. To encode the timestamp information, we also include a dimension for the current timestamp $t$. This results in 4-dimensional inputs (\ie, $(r_t,x_t,y_t,t)$) in the parabolic motion setting and 7-dimensional inputs (\ie, $(r_t^1,x_t^1,y_t^1,r_t^2,x_t^2,y_t^2,t)$ in the collision motion setting. Each motion sequence consists of 32 frames with a timestep of 0.1. Details of both settings are as follows.

\begin{enumerate}
	\item \textbf{Parabolic motion.} This motion describes the process where a ball with an initial horizontal (\ie, along the $x$ axis) velocity falls due to a fixed gravity $g=9.8$ (along the $y$ axis). We use the following training and test parameters:
	\begin{itemize}
		\item Radius is uniformly sampled from $[0.7, 1.5]$ in training and from $[1.5, 2.0]$ in test.
		\item Initial velocity is uniformly sampled from $[1, 4]$ in training and from $[4.5, 6.0]$ in test.
	\end{itemize}
	\item \textbf{Collision motion.} This motion describes the process where two balls with different sizes and different initial velocities move horizontally towards each other and collide. We assume that the collision is perfectly elastic and all balls are with the same density, so the velocities of both balls can be inferred from their radii and initial velocities. We use the following training and test parameters:
	\begin{itemize}
		\item Radius of each object is uniformly sampled from $[0.7, 1.5]$ in training and from $[1.5, 2.0]$ in test.
		\item Initial velocity is uniformly sampled from $[2, 4]$ in training and from $[4.5, 6.0]$ in test.
		\item The horizontal distance between two objects is uniformly sampled from $[5, 15]$ in both training and test.
	\end{itemize}
\end{enumerate}

See Figure~\ref{appfig:parabolic} and Figure~\ref{appfig:collision} for visualized examples of the parabolic motion and examples of the collision motion.

For both settings, we sample 1M training sequence and $50,000$ test sequence.

\paragraph{Model and hyperparameters.} We train a decoder-only transformer~\citep{vaswani_attention_2017} conditioned on the first 3 frames to predict the remaining frames (expect for the timestamp dimension). For our method, we simply replace every MLP in the original transformer with our modified MLP as in Section~\ref{appsec:extrapolation} (replacing ReLU by GELU).
We use teacher-forcing in training that is similar to next-token prediction, \ie, the model only needs to predict the next frame (starting from the 4-th frame to the last 32-th frame) given all ground-truth frames prior to it. We use the following model hyperparameters:
\begin{itemize}
	\item Number of layers of transformer is set to $4$.
	\item Number of heads of transformer is set to $4$.
	\item Width of transformer is set to $512$.
\end{itemize}

We train all models using the MSE loss with the AdamW optimizer. Training hyperparameters are as follows:
\begin{itemize}
	\item Initial learning rate is randomly sampled from $[1e-6, 1e-3]$. We use a cosine learning rate scheduler.
	\item Weight decay is set to $1e-4$.
	\item Batch size is set to $1024$.
	\item Number of epochs is set to $300$.
\end{itemize}

\paragraph{Evaluation metric.} For test, we iteratively use the trained model to predict all missing frames given the first 3 frames. The predicted frames will be used together with the given frames for the model to predict the next frame. We evaluate all models using MSE on test data, averaged over all predicted 29 frames for each sequence.

\begin{figure}[t]
\centering
\includegraphics[width=0.4\linewidth]{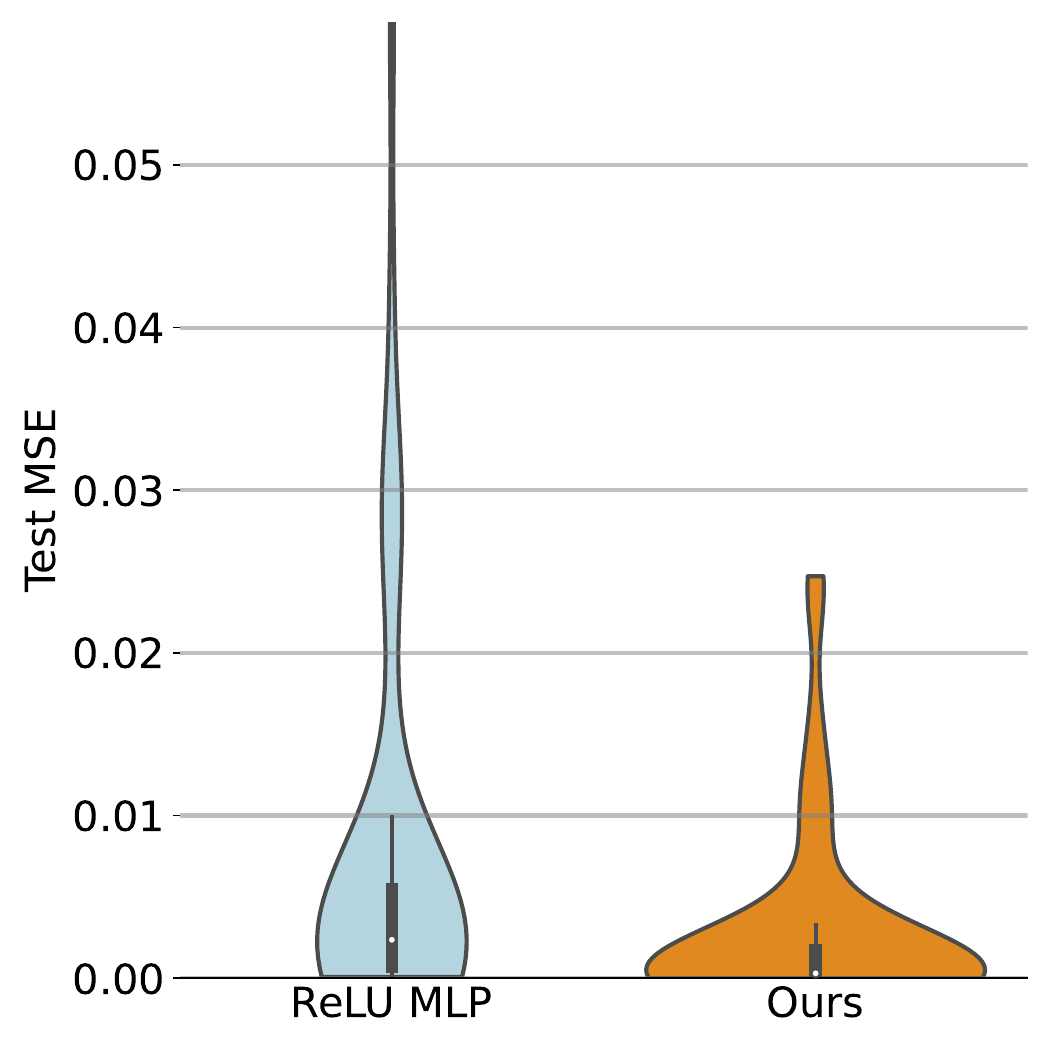}
\caption{Violin plots of the test MSE of the ReLU MLP and our model in extrapolating degree-$1$ polynomials.}
\label{appfig:extrapolation_res}
\end{figure}

\section{Additional Results and Discussion}
\label{appsec:results}

This section presents additional empirical results and discussion.

\subsection{Polynomial Extrapolation}
\label{appsec:extrapolation_res}

In the main text, we report extrapolation results on degree-$2$ and degree-$3$ polynomials in Figure~\ref{subfig:extrapolation_res}; for completeness, here we also report extrapolation results on degree-$1$ polynomials, \ie, linear functions. As shown by~\citep{xu_how_2021}, ReLU MLPs can also extrapolate well in this setting. The violin plots of the test MSE of both the ReLU MLP and our model are in Figure~\ref{appfig:extrapolation_res}. We can see that while both models achieve a much smaller extrapolation error compared to those in extrapolating higher degree polynomials, our model still outperforms the ReLU MLP. We also provide more examples for extrapolating degree-$1$, degree-$2$, and degree-$3$ polynomials in Figure~\ref{appfig:extrapolation_examples_deg1}, Figure~\ref{appfig:extrapolation_examples_deg2}, and Figure~\ref{appfig:extrapolation_examples_deg3}, respectively.

\paragraph{Comparison with~\citet{xu_how_2021}.} While~\citet{xu_how_2021} also show that 2-layer MLPs with quadratic activation functions can extrapolate quadratic functions better than ReLU MLPs, we emphasize that there are two key differences between our results and theirs:
\begin{enumerate}
	\item \citet{xu_how_2021} replaces \emph{all} ReLU functions with quadratic activation functions, while we only replace half of ReLU functions with quadratic activation functions and identity activation functions. This difference is important in scenarios where we do not know the exact structural form of target functions--note that our method can be viewed as an ``ensemble'' of different activation functions, which enables the neural network to adaptively select activation functions that are the most compatible with the task as shown by our empirical results.
	\item \citet{xu_how_2021} only considers 2-layer MLPs for learning quadratic functions, while we consider using 4-layer MLPs for learning degree-$2$ and degree-$3$ polynomials. This difference enables us to verify that neural networks can learn bases that require \emph{function compositions}. For example, degree-$3$ polynomials need a basis function $y = x^3$, which cannot be composed using a 2-layer MLP but is composable using MLPs with more than two layers and with quadratic and identity activation functions. We also note that the inclusion of identity functions is important since MLPs with only quadratic activations can only represent basis functions $y=x^k$ with even degrees $k$.
\end{enumerate}

\subsection{Learning Physical Laws}
\label{appsec:physics_res}

We present visualization results of the transformer baseline and our model in Figure~\ref{appfig:res_collision} in a test collision motion example. We can see that while both models yield accurate predictions before the collision, our model outperforms the baseline on the predictions of the object velocities after the collision.

\newpage

\begin{figure}[t]
\centering
\subcaptionbox{}{\includegraphics[width=0.3\linewidth]{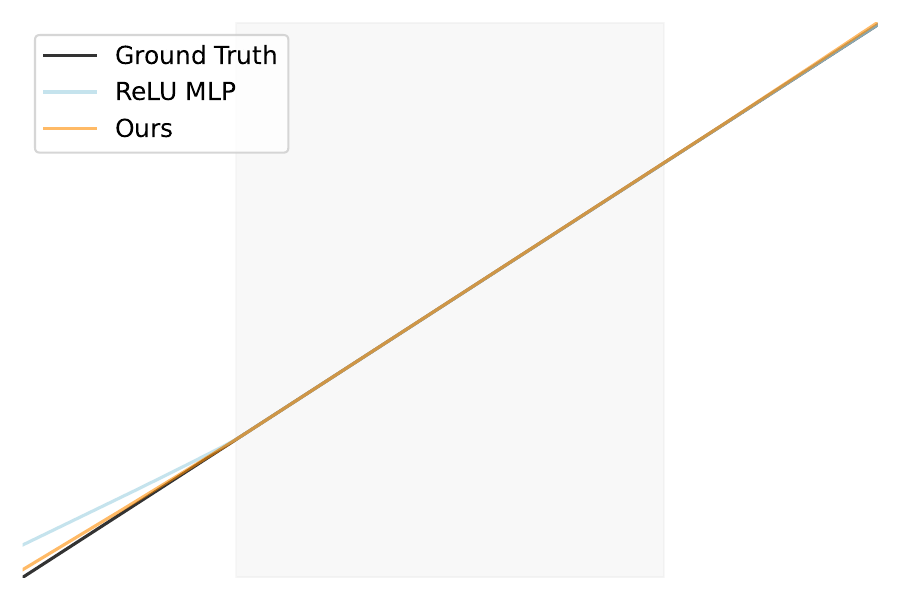}}
\hspace{0.5em}
\subcaptionbox{}{\includegraphics[width=0.3\linewidth]{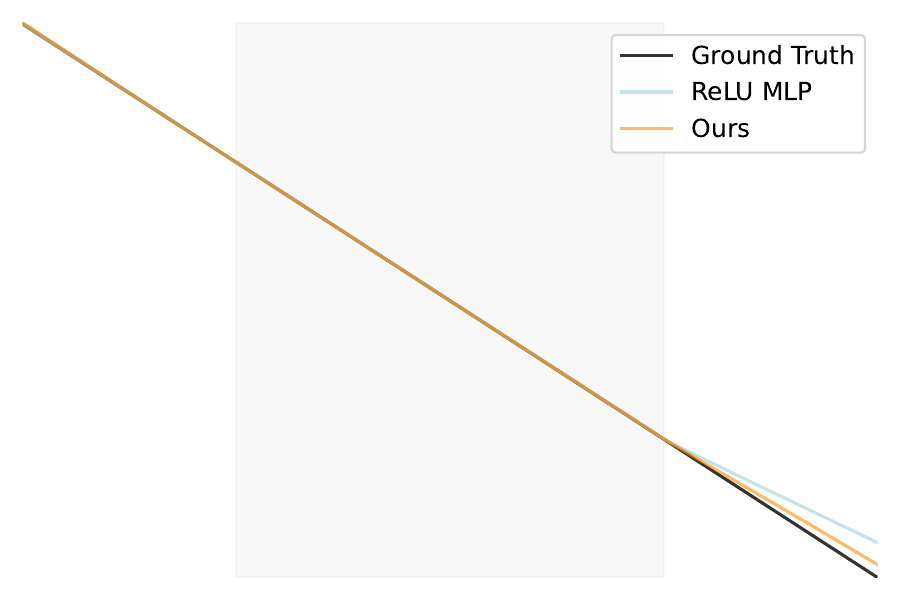}}
\hspace{0.5em}
\subcaptionbox{}{\includegraphics[width=0.3\linewidth]{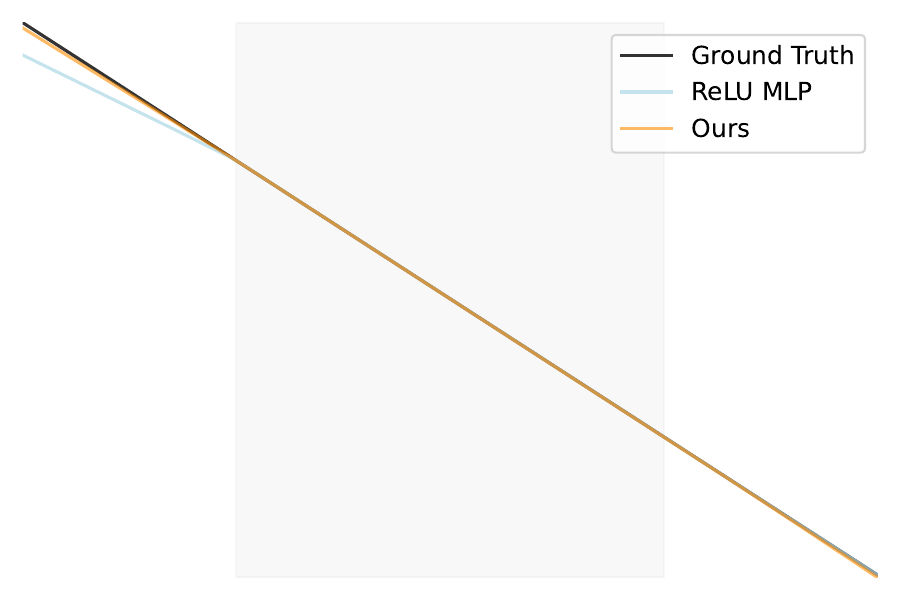}}
\caption{Selected examples for degree-$1$ polynomial extrapolation. Shaded regions indicate training regions.}
\label{appfig:extrapolation_examples_deg1}
\end{figure}

\begin{figure}[t]
\centering
\subcaptionbox{}{\includegraphics[width=0.3\linewidth]{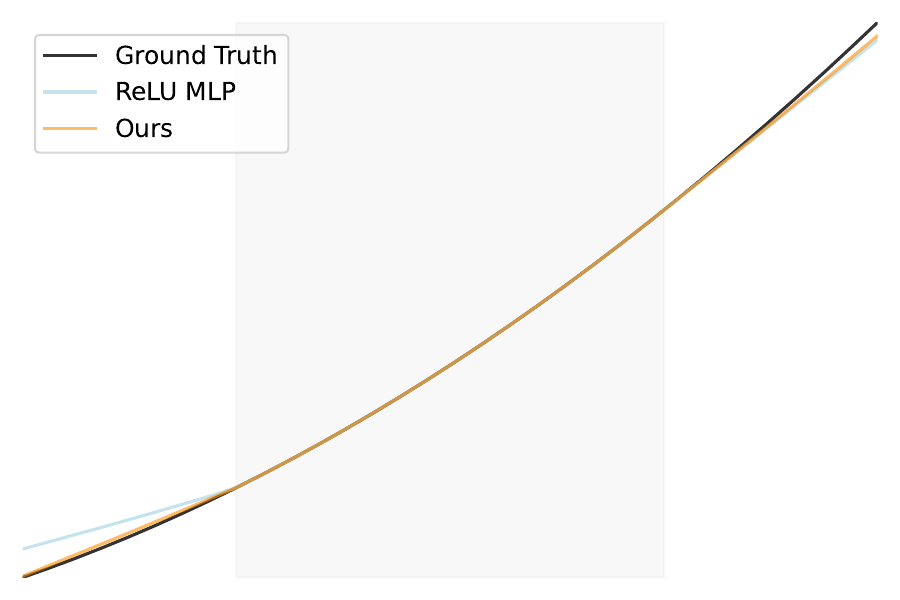}}
\hspace{0.5em}
\subcaptionbox{}{\includegraphics[width=0.3\linewidth]{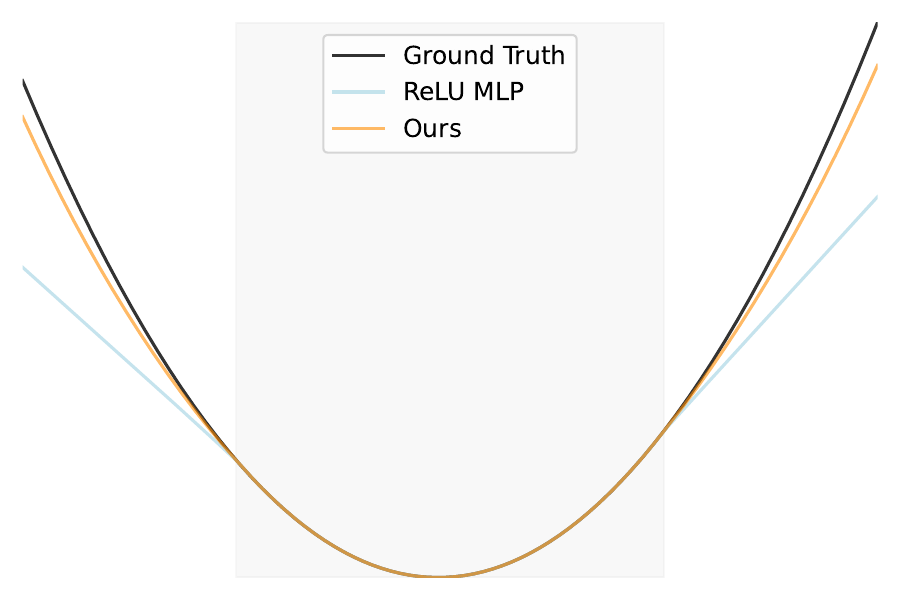}}
\hspace{0.5em}
\subcaptionbox{}{\includegraphics[width=0.3\linewidth]{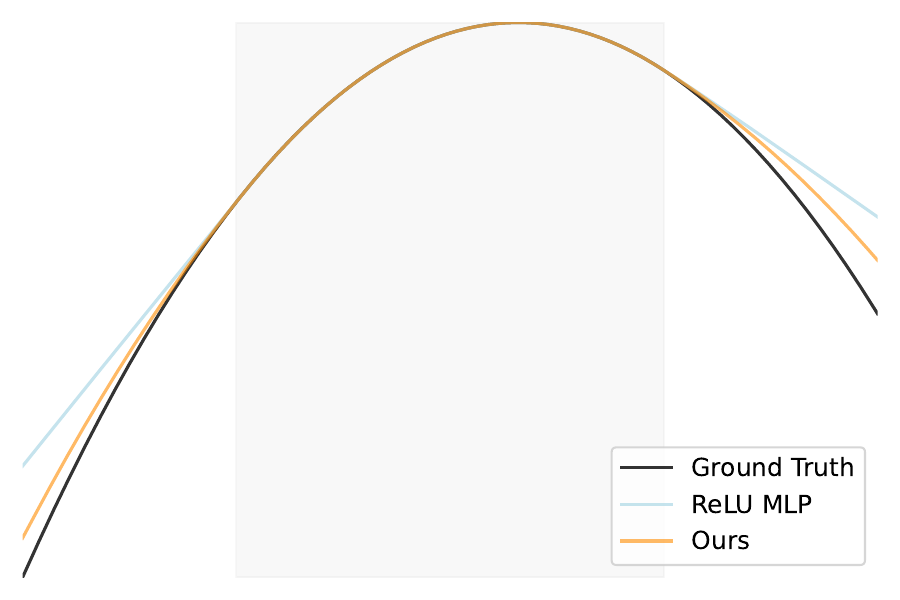}}
\caption{Selected examples for degree-$2$ polynomial extrapolation. Shaded regions indicate training regions.}
\label{appfig:extrapolation_examples_deg2}
\end{figure}

\begin{figure}[t]
\centering
\subcaptionbox{}{\includegraphics[width=0.3\linewidth]{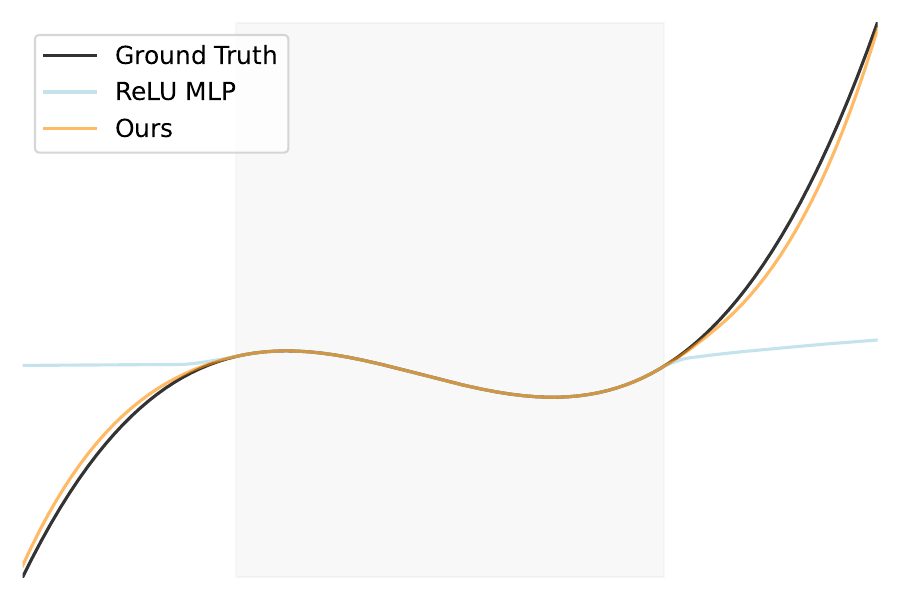}}
\hspace{0.5em}
\subcaptionbox{}{\includegraphics[width=0.3\linewidth]{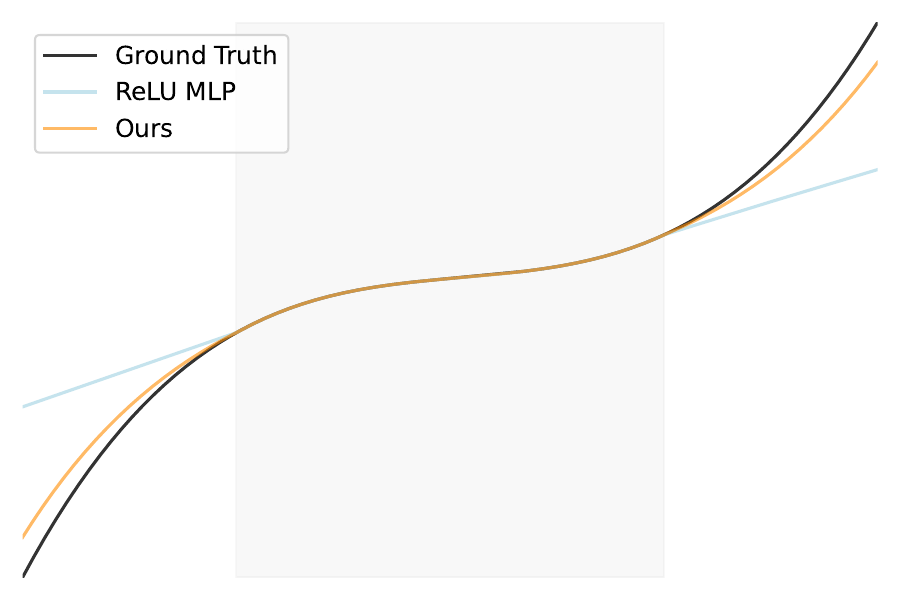}}
\hspace{0.5em}
\subcaptionbox{}{\includegraphics[width=0.3\linewidth]{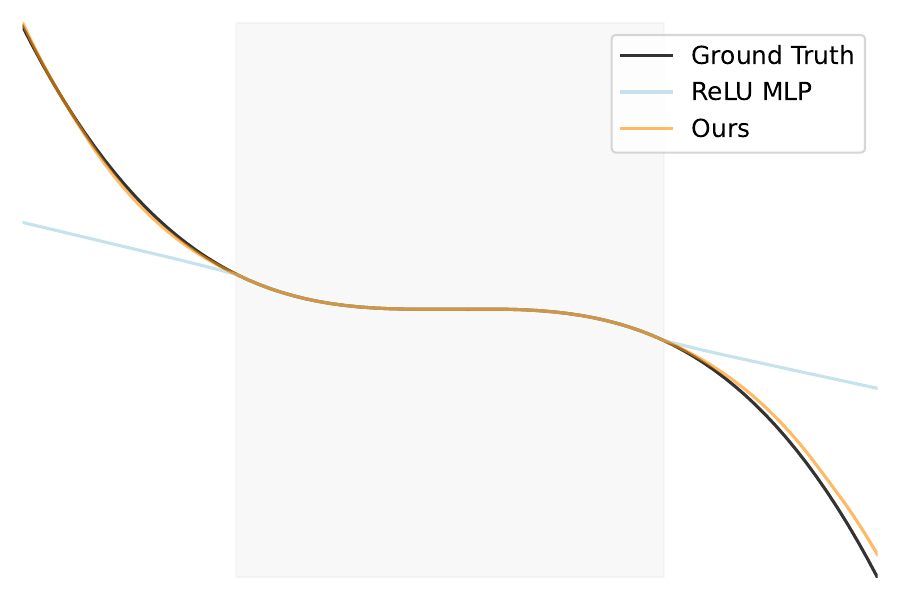}}
\caption{Selected examples for degree-$3$ polynomial extrapolation. Shaded regions indicate training regions.}
\label{appfig:extrapolation_examples_deg3}
\end{figure}

\begin{figure}[t]
\centering
\subcaptionbox{Ground truth.}{
\includegraphics[width=0.16\linewidth]{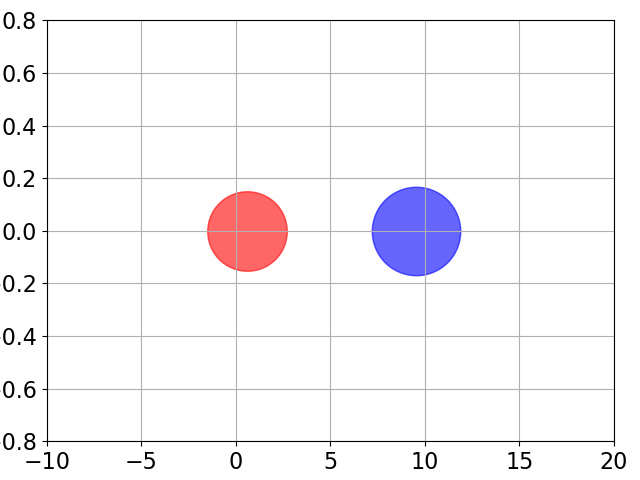}
\includegraphics[width=0.16\linewidth]{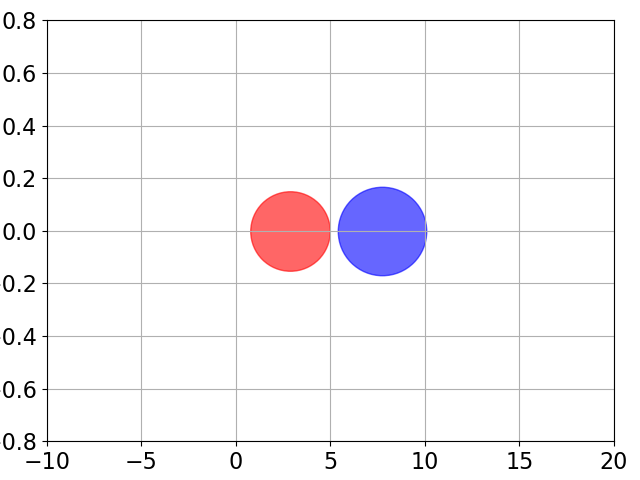}
\includegraphics[width=0.16\linewidth]{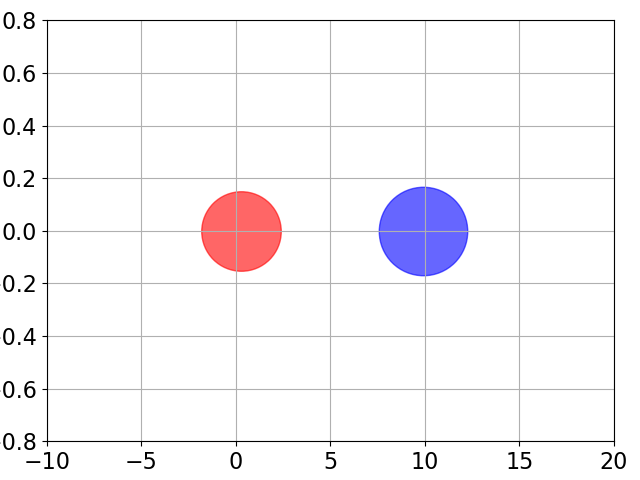}
\includegraphics[width=0.16\linewidth]{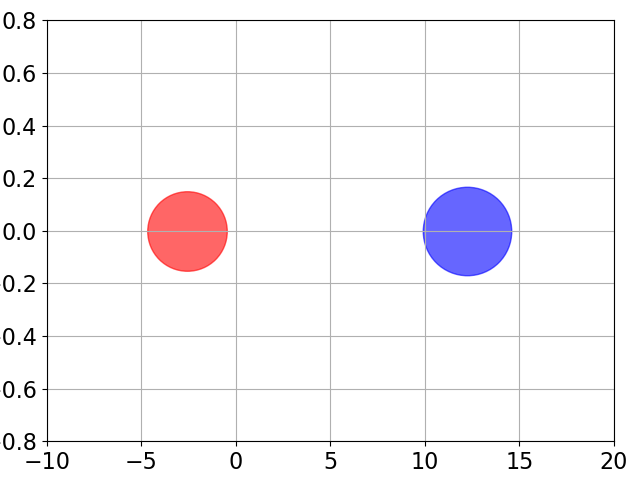}
\includegraphics[width=0.16\linewidth]{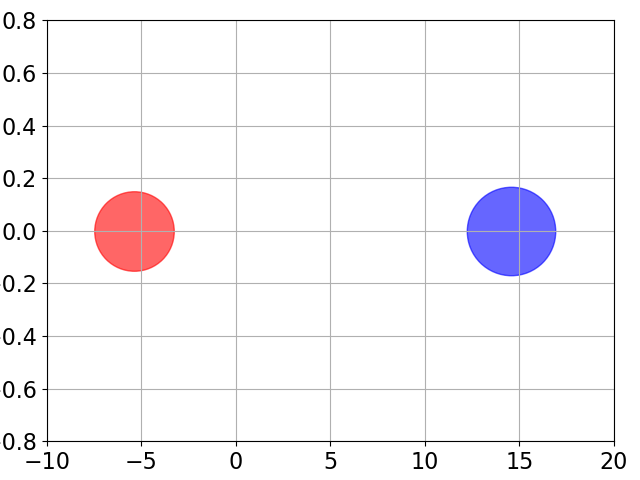}
\includegraphics[width=0.16\linewidth]{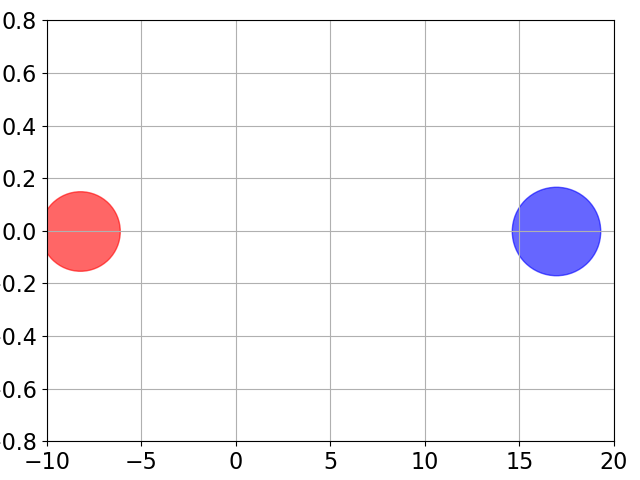}
}\\

\subcaptionbox{Prediction results of our model ($\mathrm{MSE} = 0.0039$).}{
\includegraphics[width=0.16\linewidth]{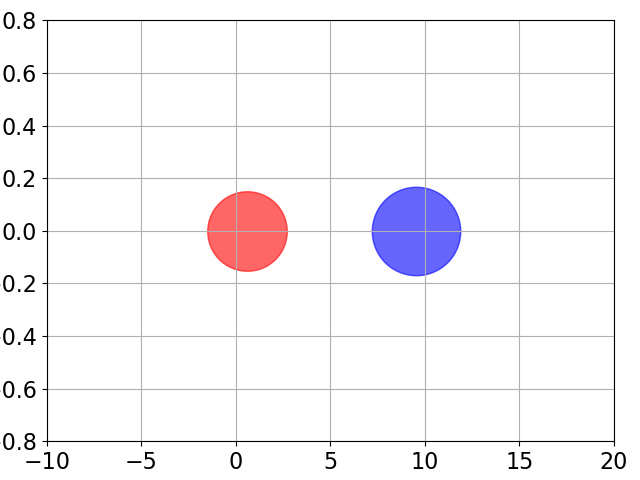}
\includegraphics[width=0.16\linewidth]{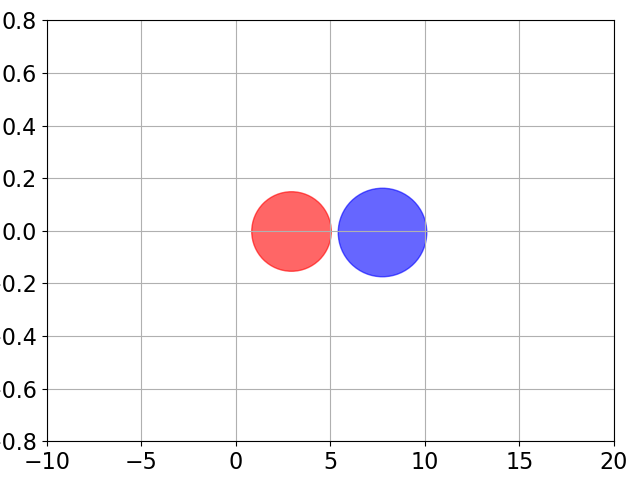}
\includegraphics[width=0.16\linewidth]{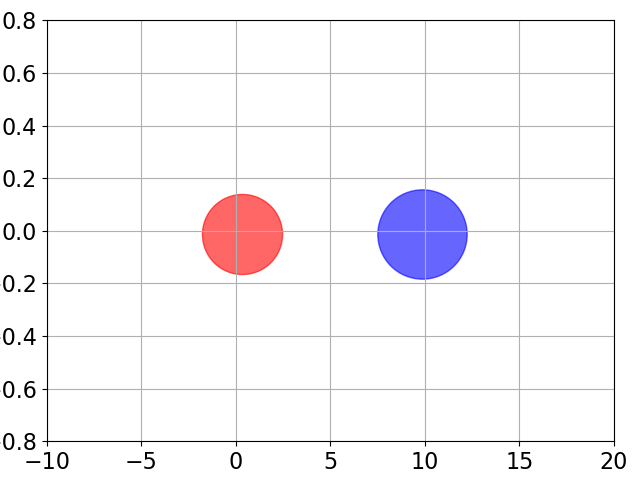}
\includegraphics[width=0.16\linewidth]{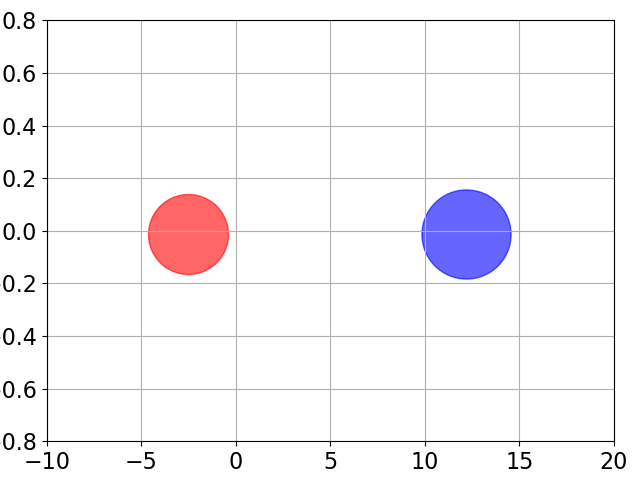}
\includegraphics[width=0.16\linewidth]{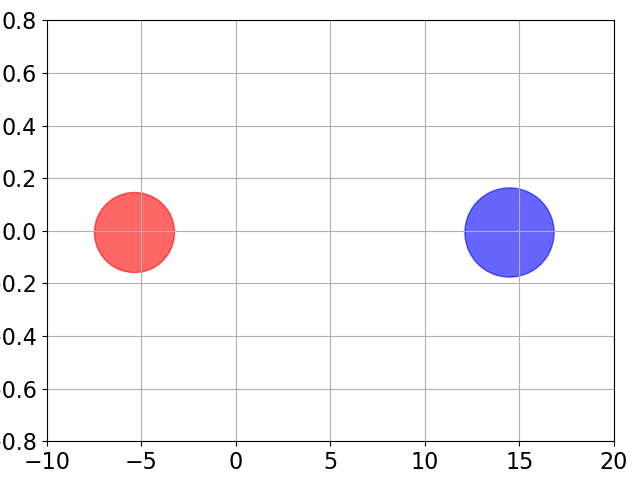}
\includegraphics[width=0.16\linewidth]{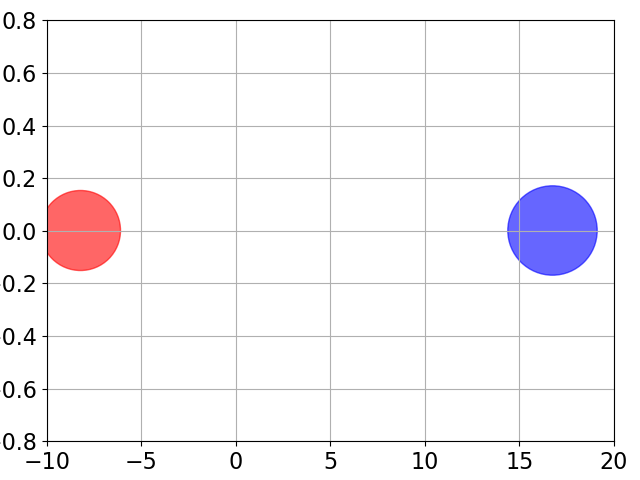}
}\\

\subcaptionbox{Prediction results of the transformer baseline ($\mathrm{MSE} = 0.5090$).}{
\includegraphics[width=0.16\linewidth]{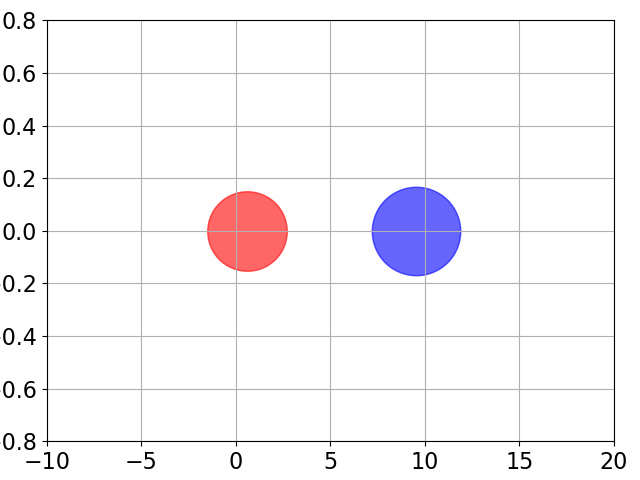}
\includegraphics[width=0.16\linewidth]{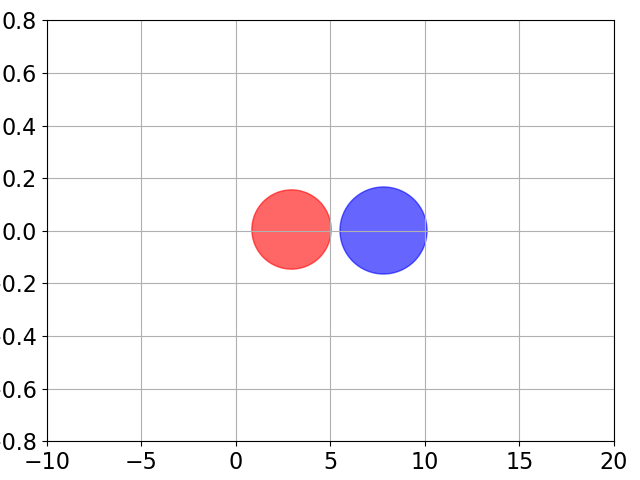}
\includegraphics[width=0.16\linewidth]{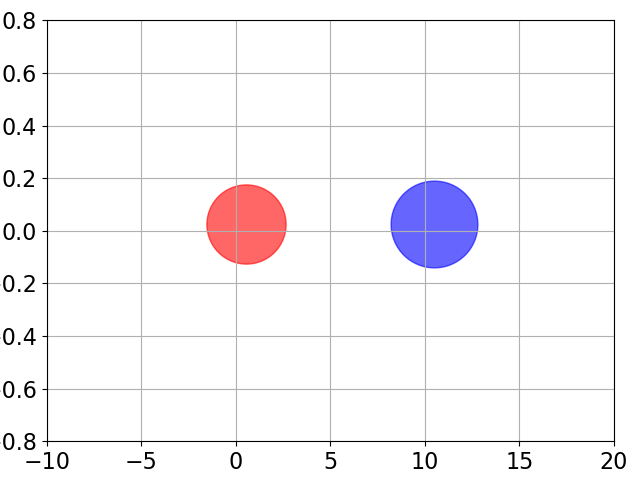}
\includegraphics[width=0.16\linewidth]{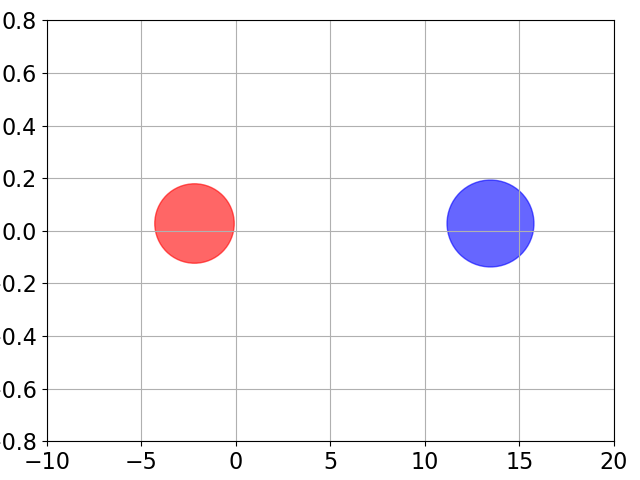}
\includegraphics[width=0.16\linewidth]{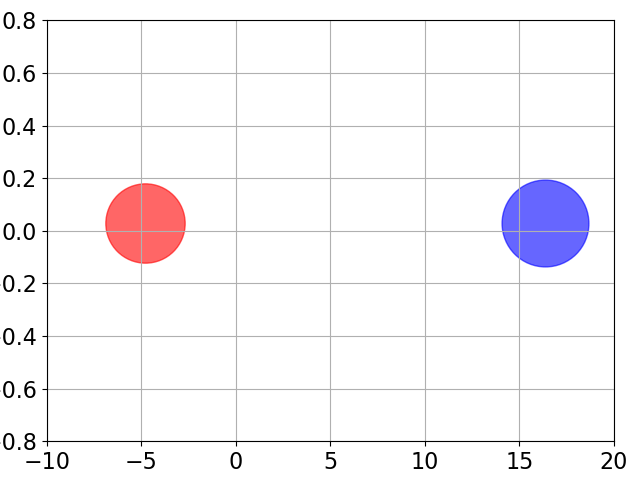}
\includegraphics[width=0.16\linewidth]{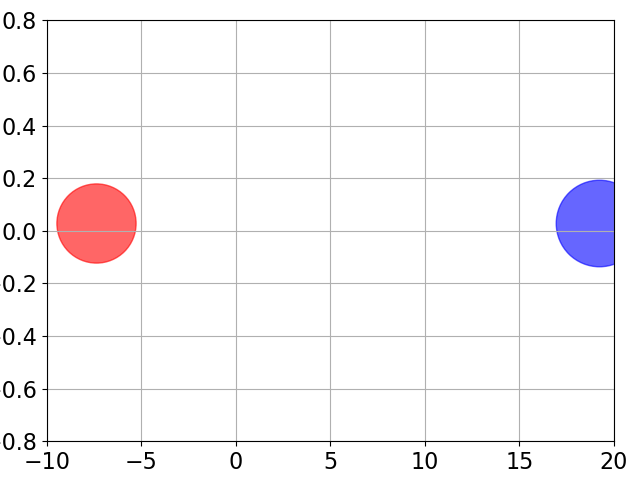}
}\\
\caption{Visualization results in a test collision motion example. All three rows select the same frames with uniform spacing.}
\label{appfig:res_collision}
\end{figure}

\end{document}